\providecommand{\E}{\ensuremath{\mathbb{E}}}
\providecommand{\N}{\ensuremath{\mathbb{N}}}
\providecommand{\R}{\ensuremath{\mathbb{R}}}
\providecommand{\cD}{\ensuremath{\mathcal{D}}}
\providecommand{\cL}{\ensuremath{\mathcal{L}}}
\providecommand{\cM}{\ensuremath{\mathcal{M}}}
\providecommand{\cO}{\ensuremath{\mathcal{O}}}
\providecommand{\cU}{\ensuremath{\mathcal{U}}}
\providecommand{\cX}{\ensuremath{\mathcal{X}}}
\providecommand{\cY}{\ensuremath{\mathcal{Y}}}
\providecommand{\cZ}{\ensuremath{\mathcal{Z}}}
\newcommand{\bsa}{{\boldsymbol{a}}}	
\newcommand{\bsb}{{\boldsymbol{b}}}	
\newcommand{\bsf}{{\boldsymbol{f}}}
\newcommand{\bsk}{{\boldsymbol{k}}}
\newcommand{\bsr}{{\boldsymbol{r}}}
\newcommand{\bsw}{{\boldsymbol{w}}}	
\newcommand{\bsx}{{\boldsymbol{x}}}	
\newcommand{\bsy}{{\boldsymbol{y}}}
\newcommand{\bsC}{{\boldsymbol{C}}}
\newcommand{\bsD}{{\boldsymbol{D}}}	
\newcommand{\bsF}{{\boldsymbol{F}}}	
\newcommand{\bsW}{{\boldsymbol{W}}}	
\newcommand{\bsX}{{\boldsymbol{X}}}	
\newcommand{\argmin}[1]{\underset{#1}{\operatorname{arg}\,\operatorname{min}}\;}
\newcommand{\argmax}[1]{\underset{#1}{\operatorname{arg}\,\operatorname{max}}\;}
\newtheorem{assumption}[theorem]{Assumption}
\begin{document}

\title{On minimizing the training set fill distance\\ in machine learning regression}

\author{\name Paolo Climaco$^\dagger$ \email~climaco@ins.uni-bonn.de \\
\name~Jochen Garcke $^{\dagger, \ddagger}$ \email~garcke@ins.uni-bonn.de \\
\addr~$^\dagger$Institut f\"ur Numerische Simulation, Universit\"at Bonn, Germany\\
$^\ddagger$Fraunhofer SCAI, Sankt Augustin, Germany}

\maketitle

\begin{abstract}
  For regression tasks one often leverages large datasets for training predictive machine learning models.
  However, using large datasets may not be feasible due to computational limitations or high data labelling costs. 
  Therefore, suitably selecting small training sets from large pools of unlabelled data points is essential to maximize model performance while maintaining efficiency. In this work, we study Farthest Point Sampling (FPS), a data selection approach that aims to minimize the fill distance of the selected set. 
  We derive an upper bound for the maximum expected prediction error, conditional to the location of the unlabelled data points, that linearly depends on the training set fill distance. 
  For empirical validation, we perform experiments using two regression models on three datasets. 
  We empirically show that selecting a training set by aiming to minimize the fill distance, thereby minimizing our derived bound, significantly reduces the maximum prediction error of various regression models, outperforming alternative sampling approaches by a large margin. Furthermore, we show that selecting training sets with the FPS can also increase model stability for the specific case of Gaussian kernel regression approaches.
\end{abstract}

\begin{keywords}fill distance, farthest point sampling, maximum error, dataset selection, coresets.
\end{keywords}

\section{Introduction}\label{introduction}
Machine learning (ML) regression models are widely used in applications, where we are in particular interested in molecular property prediction~\citep{Montavon2013,Hansen2015}. One of the main goals of ML regression is to label, with continuous values, pools of unlabelled data points for which the existing labelling methods, e.g., numerical simulations or laboratory experiments, are too expensive in terms of computation, time, or money. To achieve this, a subset of the unlabelled pool is labelled and used to train a ML model, which is then employed to get fast predictions for the labels of points not considered during training. However, the effectiveness of ML regression models is strongly dependent on the training data used for learning. 
Therefore, the selection of a suitable training set is crucial for the quality of the predictions of the model. Our focus is on selecting data points that result in a good performance for a variety of regression models. This ansatz ensures that the labelling effort is not wasted on subsets that may only be useful for specific learning models, classes of models, or prediction tasks.

We distinguish between active and passive dataset selection strategies. Active learning~\citep{settles2012active} involves learning one or several regression models, predicting uncertainties for unlabelled data, based on which the most uncertain ones are selected for labelling and the cycle starts anew, until (qualitative) stopping criteria are fulfilled. Unfortunately, it typically only benefits a specific model or model class and optimizes the performance of the models for a specific learning task, as it exploits the knowledge of the labels to iteratively update the parameters of the models during the selection process. Passive sampling~\citep{Yu2010} is based only on the feature space locations. Consequently, it has the potential to offer advantages when considering multiple learning tasks that pertain to the same data, as it is independent of the label values associated with the analyzed data points. We think passive sampling can be further divided into two subclasses: model-dependent and model-agnostic. Model dependent passive sampling strategies are developed to benefit specific learning models or model classes, such as linear regression \citep{Yu2006}, $k$-nearest neighbors, or naive Bayes \citep{Wei2015}, similar to active learning. Contrarily, model-agnostic strategies have the potential to benefit multiple classes of regression models rather than just one. Farthest point sampling (FPS)~\citep{Eldar94} is a well-established passive sampling model-agnostic strategy for training set selection already employed in various application fields, such as image classification~\citep{sener2018active} or chemical and material science~\citep{Deringer2021}. FPS provides suboptimal solutions to the $k$-center problem~\citep{HarPeled2011}, which involves selecting a subset of $k$ points from a given set by minimizing the fill distance of the selected set, that is, the maximum over the distances between any point in the remaining set and the selected point nearest to it.

Our study aims to investigate theoretically and empirically the impact of minimizing the training set fill distance by FPS for ML regression. For classification tasks, it was shown that minimizing the fill distance of the training set reduces the average prediction error of Lipschitz-continuous classification models with soft-max output layer and bounded error function~\citep{sener2018active}. Unfortunately, these results do not carry over to regression tasks, even for simpler Lipschitz-continuous approaches, such as kernel ridge regression with the Gaussian kernel (KRR) or feed-forward neural networks (FNNs). In particular, we provide examples where reducing the training set fill distance does not significantly lower the average prediction error compared to random selection. The benefits of using FPS in regression have been studied in various works~\citep{Yu2010, Wu2019, Deringer2021}, where it was argued that passive sampling strategies such as FPS are more effective than active learning in terms of data efficiency and prediction accuracy. However, these works lack theoretical motivation, relying only on domain knowledge or heuristics.

In this work, we focus on the maximum prediction error, which can be considered as a measure of the robustness of the predictions of a trained machine learning model. A large maximum prediction error signifies that in some regions of the domain of interest, there is a significant deviation between the predicted and actual target values, that is, there are regions of the feature space where the predictions of the trained model are not reliable. The maximum prediction error is a helpful metric in various applications, such as those related to material science and chemistry, where the average error provides an incomplete evaluation of the predictions of a model~\citep{Sutton2020,Gould22}. The authors of~\cite{Vishwakarma2021}, mention the maximum prediction error among those metrics that are ``key to comparing the performance of different models and thus for developing guidelines and best practices for the successful application of machine learning in chemistry''. In~\cite{Zaverkin2022} and~\cite{Huang2023}, the maximum error is considered to evaluate the prediction quality of machine learning models trained to study and explore the chemical or conformational spaces. Moreover, the authors of~\cite{Gerrard2020} use the maximum error to evaluate the prediction quality of Nuclear Magnetic Resonance spectroscopy parameters for 3-dimensional chemical structures. 

Consequently, we derive an upper bound for the maximum expected prediction error of Lipschitz continuous regression models that is linearly dependent on the training set fill distance. We show that minimizing the training set fill distance significantly decreases the maximum approximation error of Lipschitz continuous regression models. We compare the FPS approach with other model-agnostic sampling techniques and demonstrate its superiority for low training set budgets in terms of maximum prediction error reduction.

Our analysis offers theoretical and empirical results, which set it apart from previous works. Specifically, we extend the theoretical work of~\cite{sener2018active} from classification to regression, demonstrating that reducing training set fill distance lowers the maximum prediction error of the regression model. Moreover, contrary to~\cite{Yu2010} and~\cite{Wu2019}, who studied the advantages of using FPS for regression tasks, our findings are supported by mathematical results providing theoretical motivation for what we show empirically. We emphasize that, according to our knowledge, prior research did not detect the relationship between reducing the fill distance of the training set using FPS and decreasing the maximum prediction error of a regression model, neither theoretically nor empirically. In addition, we provide further theoretical examinations and empirical investigations to show supplementary advantages of selecting training sets with the FPS for kernel regression models, exemplified for a Gaussian kernel. Specifically, our findings indicate that employing FPS for selecting training sets enhances the stability of this particular category of models.

\section{Related work}
Existing work concerning model-agnostic passive sampling is mostly related to coresets approaches. Coresets \citep{Feldman2019} aim to identify the most informative training data subset, according to some principle. The simplest coreset method is uniform sampling, which randomly selects subsets from the given pool of data points. Importance sampling approaches, such as the CUR algorithm~\citep{Mahoney2009}, assign to samples relevance-based weights. Cluster-besed methods such as $k$-medoids and $k$-medoids++~\citep{Mannor2011}, that are adapted version of $k$-means and $k$-means++~\citep{Murphy2022}, segment the feature space into clusters and select representative points from each cluster. Greedy algorithms iteratively select the most informative data points based on a predefined criterion. Well-known greedy approaches for subset selection are the submodular function optimization algorithms~\citep{Fujishige2005, krause2014submodular}, such as facility location~\citep{Frieze1974} and entropy function maximization~\citep{Sharma2015}. Various coresets strategies have also been designed for specific classes of regression models, such as $k$-nearest neighbours and naive Bayes~\citep{Wei2015}, logistic regression~\citep{Guo2007}, linear and kernel regression with Gaussian noise~\citep{Yu2006} and support vector machines~\citep{tsang05a}. Such approaches have been designed as active learning strategies and could be developed by exploiting the knowledge of the respective model classes, but do not rely on the models predictions. Assuming knowledge of the learning model may even lead to the development of training set selection strategies that are optimal with respect to some notion of optimality, as in the case of linear regression~\citep{John1975}. Unfortunately, these selection strategies theoretically guarantee benefits only for certain classes of models. In this work, we are interested in passive sampling strategies that are model-agnostic, thus having the potential to benefit multiple classes of regression models rather than just one.

We investigate the benefits of employing the FPS algorithm~\citep{Eldar94} for training dataset selection. The farthest point sampling is a greedy algorithm that selects elements by attempting to minimize the fill distance of the selected set, which is the maximal distance between the elements in the set of interest and their closest selected element. The work most similar to our is~\cite{sener2018active}, where the authors show that selecting the training set by fill distance minimization can reduce the average classification error on new points for convolutional neural networks (CNNs) with softmax output layers and bounded error function. However, these benefits do not necessarily extend to regression problems, even with simpler Lipschitz algorithms like KRR and FNN, as we illustrate with our experiments. The advantages of using FPS, thus of selecting training sets with a small fill distance, have also been investigated in the context of ML regression. For instance, in~\cite{Yu2010} the authors argue that for regression problems passive sampling strategies, as FPS, are a better choice than active learning techniques. Moreover, in~\cite{Wu2019} and~\cite{Cersonsky2021}, the authors have proposed variations of FPS, and they argue that these can result in more effective training sets. These variations involve selecting the initial point according to a specific strategy rather than randomly, and exploiting the knowledge of labels, when these are known in advance, to obtain subsets that are representative of the whole set in both feature and label spaces. However, these works only demonstrate the advantages of FPS and its variations empirically and do not provide any theoretical analysis to motivate the benefits of using these techniques for regression.

\section{Problem definition}
We now formally define the problem. We consider a supervised regression problem defined on the feature space $\cX \subset \mathbb{R}^d$ and the label space $\cY \subset \mathbb{R}$. We assume the solution of the regression problem to be in a function space $\cM:=\{f:\cX \rightarrow \cY \}$, and that for each set of weights $\bsw \in \R^m$ there exists a function in $\cM$ associated with it. $\cM$ can be interpreted as the space of functions that we can learn by training a given regression approach through the optimization of its weights $\bsw \in \R^m$. Additionally, we consider an error function ${l: \cX \times \cY \times \cM \rightarrow \mathbb{R}^+}$. The error function takes as input the features of a data point, its label, and a trained regression model and outputs a real value that measures the quality of the prediction of the model for the given data point. The smaller the error, the better the prediction. 

Furthermore, we consider a dataset $\cD := \{ (\bsx_q, y_q) \}_{q=1}^{k} \subset \cX
\times \cY $, $k \in \N$, consisting of independent realizations
of random variables ($\bsX$, $Y$) taking values in $\cZ:= \cX \times
\cY$ with joint probability measure $p_{\cZ}$. We study a scenario in which
we have only access to the realizations $\{\bsx_q\}_{q=1}^{k}$, while the
labels $\{y_q\}_{q=1}^{k}$ are unknown, and the goal is to use ML
techniques to predict the labels accurately and fast, recovering from data
the relation between the random variables $\bsX$ and $Y$. In supervised ML,
we first label a subset $\cL:=\{(\bsx_{q_j},y_{q_j})\}_{j=1}^b \subset \cD$, $b \ll k$, with $q_j \in \{1,2,\dots,k\} \;\forall j$. We then train a regression model $m_{\cL}:\cX \rightarrow \cY $ using a learning
algorithm $A(\cdot):2^{\cD}\rightarrow \R^m$ that maps a labelled subset
$\cL \subset \cD$ into weights $\bsw \in \R^m$ determining the learned
function $m_{\cL} \in \cM$ used to predict the labels of the remaining unlabelled
points in $\cU := \cD - \cL$. The symbol $2^{\cD}$ represents the set of all possible subsets of
$\cD$. In the following, we renumber the indices $\{q_j\}_{j=1}^b$ associated with the selected set $\mathcal{L}$, and denote them as $j$, that is, $\cL := \{(\bsx_j, y_j)\}_{j=1}^b$. Furthermore, given a set $\cL := \{(\bsx_j, y_j)\}_{j=1}^b \subset \cD$ we define $\cL_{\cX}:= \{\bsx_j\}_{j=1}^b$ and $\cL_{\cY}:=
\{y_j\}_{j=1}^b$. 

In several applications the labelling process is computationally expensive,
therefore, given a budget $b \ll k $ of points to label, the goal is to
select a subset $\cL \subset \cD$ with $|\cL|=b$ that is most beneficial to
the learning process of algorithm $A(\cdot)$. In this work we
focus on promoting robustness of the predictions, that is, we want to
minimize the maximum expected error of the predictions of the labels obtained with the
learned function. Specifically, the problem we want to solve can be
expressed as follows:
\begin{equation}
\label{optimization_problem}
\min_{\substack{\cL \subset \cD, \\ | \cL | =b}} \max_{(\bsx,y) \in \cU}\mathbb{E}[l(\bsx,y, m_{\cL})|\bsx],
\end{equation}
 In other words, we aim to select and label a training set $\cL$ of cardinality $b$, so that the maximum expected error associated to a trained regression model $m_{\cL}$ evaluated on the unlabelled points is minimized. We focus on model-agnostic training set sampling strategies that have the potential to benefit various learning algorithms. In particular, we do not optimize the data selection process to benefit only an a priori chosen class of learning models. 

\section{Fill distance minimization by Farthest Point Sampling}
\label{sect: theory}
Direct computation of the solution to the optimization problem in (\ref{optimization_problem}) is not possible as we do not know the labels for the points. 
To cope with this issue, we derive an upper bound for the minimization objective in (\ref{optimization_problem}) that depends linearly on the fill distance of the training set.
Afterwards, we describe FPS, which provides a computationally feasible approach to obtain suboptimal solution for minimizing the fill distance. 

\subsection{Effects of a training set fill distance minimization approach.}
\label{subsct: effects_fill}
First, let us introduce the fill distance, a quantity we can associate with subsets of the pool of data points we wish to label. It can be calculated by considering only the features of the data points. 
\begin{definition}
Given $\cD_{\cX} := \{\bsx_q\}_{q=1}^k$ subset of $\cX \subset \mathbb{R}^d$ and $\cL_{\cX}=\{\bsx_j\}_{j=1}^b \subset \cD_{\cX} $, the \emph{fill distance} of $\cL_{\cX}$ in $\cD_{\cX}$ is defined as
\begin{equation}
  h_{\cL_{\cX}\negthinspace,\thinspace \cD_{\cX}}:= \max_{\bsx\in \cD_{\cX}}\min_{\bsx_j \in \cL_{\cX}}\|\bsx -\bsx_j\|_2,
\end{equation}
where $\| \cdot \|_2$ is the $L_2$-norm. Put differently, we have that each point $\bsx \in \cD_{\cX}$ has a point $\pmb{x}_j \in \cL_{\cX} $ not farther away than $h_{\cL_{\cX}\negthinspace,\thinspace \cD_{\cX}}$.
\end{definition}

Notice that the fill distance depends on the distance metric we consider in the feature space $\cX$. In this work, for simplicity, we consider the $L_2$-distance, but the following result can be generalized to other distances. 

Next, we formulate two assumptions that we use in the theoretical result. The first assumption concerns the data being analyzed and the relationship between features and labels. 
\begin{assumption}\label{assumption1} We assume there exists $\epsilon \geq 0 $ such that 
for each data point $(\bsx_q,y_q)\in \cD$ we have that 
\begin{equation}
  \label{label_assumption}
\mathbb{E}\left[ \thinspace |Y- \mathbb{E}[Y | \bsx_q ]| \thinspace \big| \bsx_q\right]:= \int_{\cY} \thinspace \big| y- \mathbb{E}[Y| \bsx_q]\big| \thinspace p(y|\bsx_q)dy \leq \epsilon,
\end{equation}
where 
\begin{equation}
  \label{marginal}
p(y|\bsx_q) := \frac{p_{\cZ}(\bsx_q,y)}{p_{\bsX}(\bsx_q)} \; \text{ and } \; p_{\bsX}(\bsx_q):= \int_{\cY}p_{\cZ}(\bsx_q,y)dy.
\end{equation}
We refer to `$\epsilon$' as the labels uncertainty. Moreover, we assume that 
\begin{equation}
\label{lip_map}
\bigl|\thinspace \mathbb{E}\left[Y | \hat{\bsx} \right] - \mathbb{E}\left[Y | \tilde{\bsx} \right]\thinspace \bigr| \leq \lambda_{p}\|\hat{\bsx} -\tilde{\bsx}\|_2,
\end{equation}
$\forall \; \hat{\bsx}, \tilde{\bsx} \in \cX$, where $\lambda_{p} \in \mathbb{R}^+$.
\end{assumption}
Formula (\ref{label_assumption}) states that given a realization $ \bsX = \bsx_q$, the average absolute difference between the variable Y and its conditional expectation, taken over the distribution of Y given $\bsx_q$, is bounded by a positive scalar $\epsilon$.
In simpler words, given a data point location $\bsx_q \in \cX$ in the feature space, its associated label value is not fixed. Instead, it tends to be concentrated in a small region of the label space around its conditional expected value, whose size is determined by the positive scalar $\epsilon$. Formula (\ref{label_assumption}) models those scenarios where the underlying true mapping between the feature and label spaces is either stochastic in nature or deterministic with error fluctuations of magnitude parameterized by $\epsilon$. The Lipschitz continuity in (\ref{lip_map}) is an assumption on the regularity of the map connecting the feature space $\cX$ with the label space $\cY$. It tells us that if two data points have close representations in the feature space, then the conditional expectations of the associated labels are also close, that is, elements closer in $\cX$ are more likely to be associated labels close in $\cY$. 

The second assumption concerns the error function used to evaluate the performance of the model and the prediction quality of the model on the training set. 
Firstly, to formalize the notion that the prediction error of a trained model on the training set is bounded. Secondly, to limit our analysis to error functions that exhibit a certain degree of regularity, which also reflects the regularity of the regression model.
\begin{assumption}\label{assumption2}
We assume there exist $\epsilon_{\cL} \geq 0 $, depending on the labelled set $\cL \subset \cD$ and the trained regression model $m_{\cL}$, such that for each labelled point $(\bsx_j,y_j)\in \cL$ we have that
\begin{equation}
\label{bound_train}
\mathbb{E}[l(\bsx_j,Y,m_{\cL}) \big|\bsx_j]\leq \epsilon_{\cL}.
\end{equation}
We consider $\epsilon_{\cL}$ as the maximum expected prediction error of the trained model $m_{\cL}$ on the labelled data $\cL$. Moreover, we assume that for any $y \in \cY$ and $\cL \subset \cD$ the error function $l(\cdot,y, m_{\cL})$ is $\lambda_{l_{\cX}}$-Lipschitz and that for any $x \in \cX$ and $\cL \subset \cD$, $l(\bsx,\cdot, m_{\cL})$ is $\lambda_{l_{\cY}}$-Lipschitz and convex.
\end{assumption}
With (\ref{bound_train}) we assume that the expected error on the training set is bounded. Moreover, with the Lipschitz continuity assumptions we limit our study to error functions that show a certain regularity. However, these regularity assumptions on the error function are not too restrictive and are connected with the regularity of the evaluated trained model as we show in Remark~\ref{rmk:Lipschitz}. For instance, the $\lambda_{l_{\cY}}$-Lipschitz regularity and the convexity in the second argument are verified by all $L_{p}$-norm error functions, with $1 \leq p < \infty$. 

With that, we formulate the main theoretical result of this work, which is a theorem that provides an upper bound for the optimization objective in (\ref{optimization_problem}), depending linearly on the fill distance of the selected training set. 
\begin{theorem}
\label{theorem}
Given $\cD := \{(\bsx_q, y_q)\}_{q=1}^k = \cU \sqcup \cL$ set of independent realizations of the random variables $(\bsX,Y)$ taking values in $\cZ:= \cX \times \cY$ with joint probability measure $p_{\cZ}$, trained model $m_{\cL} \in \mathcal{M}$ and error function  $l: \cX \times \cY \times \mathcal{M} \rightarrow \mathbb{R}^+$. If Assumptions~\ref{assumption1} and~\ref{assumption2} are fulfilled, then we have that
\begin{equation}
  \label{bound}
  \max_{(\bsx,y) \in \cU}\mathbb{E}\left[l(\bsx,y, m_{\cL})|\bsx\right]  \leq h_{\cL_{\cX}\negthinspace,\thinspace \cD_{\cX}}\left(\lambda_{l_{\cX}}+\lambda_{l_{\cY}}\lambda_{p}\right)+ \underbrace{ \lambda_{l_{\cY}} \epsilon}_{\substack{\text{labels} \\ \text{uncertainty}}}  +  \underbrace{\epsilon_{\cL},}_{\substack{\text{max error} \\ \text{training set}}} 
\end{equation}
where $h_{\cL_{\cX}\negthinspace,\thinspace \cD_{\cX}}$ is the fill distance of $\cL_{\cX}$ in $\cD_{\cX}$, $\epsilon$ and $\lambda_{p}$ are the labels uncertainty and Lipschitz constant from assumption~\ref{assumption1}, respectively, $\lambda_{l_{\cX}}$ and $\lambda_{l_{\cY}}$ are the Lipschitz constants of the error function, and $\epsilon_{\cL}$ is the maximum expected error of the trained model predictions on the labelled set $\cL$.
\end{theorem}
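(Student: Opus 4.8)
The plan is to fix an arbitrary unlabelled point $(\bsx,y)\in\cU$, bound the conditional expected error $\mathbb{E}[l(\bsx,Y,m_{\cL})\mid\bsx]$ (the inner expectation integrates over the conditional law of $Y$ given $\bsx$) by the right-hand side \emph{uniformly} in $(\bsx,y)$, and then pass to the maximum over $\cU$. The starting observation is that the definition of the fill distance supplies, for this $\bsx$, a labelled feature $\bsx_j\in\cL_{\cX}$ with $\|\bsx-\bsx_j\|_2\le h_{\cL_{\cX}\negthinspace,\thinspace\cD_{\cX}}$; this nearest labelled point is the reference against which the error at $\bsx$ will be compared. The idea is to interpolate between the random pair $(\bsx,Y)$ and $(\bsx_j,Y)$ through two deterministic anchors, the conditional means $\mu(\bsx):=\mathbb{E}[Y\mid\bsx]$ and $\mu(\bsx_j):=\mathbb{E}[Y\mid\bsx_j]$, controlling each transition with exactly one of the hypotheses so that the error terms telescope.

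Concretely, I would assemble a four-term chain. First, since $l(\bsx,\cdot,m_{\cL})$ is $\lambda_{l_{\cY}}$-Lipschitz, combining with the label-uncertainty bound~(\ref{label_assumption}) gives
\begin{equation*}
\mathbb{E}[l(\bsx,Y,m_{\cL})\mid\bsx]\le l(\bsx,\mu(\bsx),m_{\cL})+\lambda_{l_{\cY}}\,\mathbb{E}\bigl[|Y-\mu(\bsx)|\bigm|\bsx\bigr]\le l(\bsx,\mu(\bsx),m_{\cL})+\lambda_{l_{\cY}}\epsilon.
\end{equation*}
Second, the $\lambda_{l_{\cX}}$-Lipschitz continuity of $l(\cdot,\mu(\bsx),m_{\cL})$ together with $\|\bsx-\bsx_j\|_2\le h_{\cL_{\cX}\negthinspace,\thinspace\cD_{\cX}}$ yields $l(\bsx,\mu(\bsx),m_{\cL})\le l(\bsx_j,\mu(\bsx),m_{\cL})+\lambda_{l_{\cX}}\,h_{\cL_{\cX}\negthinspace,\thinspace\cD_{\cX}}$. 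Third, applying the $\lambda_{l_{\cY}}$-Lipschitz property once more in the label slot and then the Lipschitz bound~(\ref{lip_map}) on the conditional mean,
\begin{equation*}
l(\bsx_j,\mu(\bsx),m_{\cL})\le l(\bsx_j,\mu(\bsx_j),m_{\cL})+\lambda_{l_{\cY}}|\mu(\bsx)-\mu(\bsx_j)|\le l(\bsx_j,\mu(\bsx_j),m_{\cL})+\lambda_{l_{\cY}}\lambda_{p}\,h_{\cL_{\cX}\negthinspace,\thinspace\cD_{\cX}}.
\end{equation*}

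The final, conceptually decisive step closes the loop back onto the training-set hypothesis. Here I would invoke the \emph{convexity} of $l(\bsx_j,\cdot,m_{\cL})$: by Jensen's inequality applied to the conditional law of $Y$ given $\bsx_j$,
\begin{equation*}
l(\bsx_j,\mu(\bsx_j),m_{\cL})=l\bigl(\bsx_j,\mathbb{E}[Y\mid\bsx_j],m_{\cL}\bigr)\le\mathbb{E}[l(\bsx_j,Y,m_{\cL})\mid\bsx_j]\le\epsilon_{\cL},
\end{equation*}
the last inequality being the training bound~(\ref{bound_train}). Summing the four displays collapses the telescope and produces precisely $h_{\cL_{\cX}\negthinspace,\thinspace\cD_{\cX}}(\lambda_{l_{\cX}}+\lambda_{l_{\cY}}\lambda_{p})+\lambda_{l_{\cY}}\epsilon+\epsilon_{\cL}$, with no dependence on the chosen $(\bsx,y)$, so taking the maximum over $\cU$ finishes the argument. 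I expect the only genuinely delicate point to be this last step: the other three transitions are routine triangle-inequality estimates driven by Lipschitz continuity, but bridging from the \emph{deterministic} evaluation $l(\bsx_j,\mu(\bsx_j),m_{\cL})$ to the \emph{expected} training error $\epsilon_{\cL}$ is exactly what forces the convexity assumption on $l$ in the label argument, and it is convexity rather than a further Lipschitz estimate that keeps the bound sharp in $\epsilon_{\cL}$.
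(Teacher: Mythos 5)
Your proposal is correct and follows essentially the same argument as the paper: the same telescoping decomposition through the two anchors $\mathbb{E}[Y\mid\bsx]$ and $\mathbb{E}[Y\mid\bsx_j]$, with the $\lambda_{l_{\cX}}$- and $\lambda_{l_{\cY}}$-Lipschitz bounds playing identical roles and the final passage to $\epsilon_{\cL}$ handled, exactly as in the paper, by convexity plus Jensen's inequality applied to the conditional law of $Y$ given $\bsx_j$. The only difference is cosmetic: you perform the feature-argument swap at the deterministic point $\mu(\bsx)$ after first pulling $Y$ to its conditional mean, whereas the paper swaps the feature argument first, inside the integral against $p(y\mid\tilde{\bsx})$.
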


\begin{proof}
  First we want to find an upper bound for $ \E\left[l(\tilde{\bsx},Y,m_{\cL})|\tilde{\bsx}\right]$ for each $\tilde{\bsx} \in \cU_{\cX}$. Fixed $\tilde{\bsx} \in \cU_{\cX}$, by the definition of the fill distance we know there exists $\bsx_j \in \cL_{\cX}$ such that $\|\tilde{\bsx} - \bsx_j\|_2 \leq h_{\cL_{\cX}\negthinspace,\thinspace \cD_{\cX}}$.
  \begin{equation}
    \begin{IEEEeqnarraybox}[][c]{rCl}\label{bound1}
      \E \left[l(\tilde{\bsx},Y,m_{\cL})|\tilde{\bsx}\right] &=& \int_{\cY}l(\tilde{\bsx},y,m_{\cL})p(y|\tilde{\bsx})dy \\
      & \leq & \int_{\cY}\bigl|l(\tilde{\bsx},y,m_{\cL})-l(\bsx_j,y,m_{\cL})\bigr|p(y|\tilde{\bsx})dy + \int_{\cY}l(\bsx_j,y,m_{\cL})p(y|\tilde{\bsx})dy \\
      & \leq & h_{\cL_{\cX}\negthinspace,\thinspace \cD_{\cX}}\lambda_{l_{\cX}} + \int_{\cY}l(\bsx_j,y,m_{\cL})p(y|\tilde{\bsx})dy
\end{IEEEeqnarraybox}
  \end{equation}
  where $\lambda_{l_{\cX}}$ is from Assumption~\ref{assumption2}. The second inequality in (\ref{bound1}) follows from the $\lambda_{l_{\cX}}$-Lipschitz continuity of the error function. We can bound the remaining
  term as follows
  \begin{equation}
    \begin{IEEEeqnarraybox}[][c]{rCl}
    \int_{\cY} l(\bsx_j,y,m_{\cL}) p(y|\tilde{\bsx})dy  
    &\leq &\int_{\cY}\bigl|l(\bsx_j,y,m_{\cL}) -l(\bsx_j,\mathbb{E}\left[Y | \tilde{\bsx}\right],m_{\cL})\bigr| p(y|\tilde{\bsx})dy\\  
    &&+ \int_{\cY}\bigl| l(\bsx_j,\mathbb{E}\left[Y | \tilde{\bsx}\right],m_{\cL}) - l(\bsx_j,\mathbb{E}\left[Y |\bsx_j\right],m_{\cL})\bigr| p(y|\tilde{\bsx})dy\\
                                                          &&+ \int_{\cY}l(\bsx_j,\mathbb{E}\left[Y |\bsx_j\right],m_{\cL})  p(y|\tilde{\bsx})dy\\
                                                          &\leq &\lambda_{l_{\cY}}\int_{\cY}\bigl|y -\mathbb{E}\left[Y | \tilde{\bsx}\right]\bigr| p(y|\tilde{\bsx})dy\\  
                                                          &&+ \, \lambda_{l_{\cY}}\int_{\cY}\bigl| \mathbb{E}\left[Y | \tilde{\bsx}\right] - \mathbb{E}\left[Y |\bsx_j\right]\bigr| p(y|\tilde{\bsx})dy\\
                                                          &&+ \int_{\cY}\mathbb{E}[l(\bsx_j,Y,m_{\cL}) \big|\bsx_j]p(y|\tilde{\bsx})dy\\
                                                          &\leq & \lambda_{l_{\cY}} \epsilon +
                                                          \lambda_{l_{\cY}}\int_{\cY}(\lambda_{p} h_{\cL_{\cX}\negthinspace,\thinspace \cD_{\cX}})\thinspace p(y|\tilde{\bsx})dy + \int_{\cY}\epsilon_{\cL} \thinspace p(y|\tilde{\bsx})dy\\
                                                         &\leq &  \lambda_{l_{\cY}} \epsilon   + \lambda_{l_{\cY}}\lambda_{p} h_{\cL_{\cX}\negthinspace,\thinspace \cD_{\cX}} + \epsilon_{\cL}.
    \end{IEEEeqnarraybox}
  \end{equation}
   The second inequality follows from the $\lambda_{l_{\cY}}$-Lipschitz continuity of the error function and Jensen's inequality, which is used to obtain the conditional expectation in the integrand of the last term. The third inequality follows from the definition of labels uncertainty, the $\lambda_p$-Lipschitz continuity of the conditional expectation of the random variable $Y$ and the assumption that the expected error on the training set is bounded by $\epsilon_{\cL}$. The fourth inequality is obtained by taking out the constants from the integrals in the second and third terms and noticing that, from the definition of $p(y|\tilde{\bsx})$ in (\ref{marginal}), we have $ \int_{\cY}p(y|\tilde{\bsx})dy = 1$. Since the above inequality holds for each $\tilde{\bsx} \in \cU_\cX$, we have that
  \begin{equation}
    \max_{(\bsx,y) \in \cU}\mathbb{E}\left[l(\bsx,y, m_{\cL})|\bsx\right]  \leq h_{\cL_{\cX}\negthinspace,\thinspace \cD_{\cX}}\left(\lambda_{l_{\cX}}+\lambda_{l_{\cY}}\lambda_{p}\right)+ \lambda_{l_{\cY}} \epsilon  +  \epsilon_{\cL}.
  \end{equation}
  \end{proof}

Formula (\ref{bound}) provides an upper bound for the minimization objective in (\ref{optimization_problem}) that is linearly dependent on the fill distance of the training set. Note that our derived bound also depends on the labels uncertainty `$\epsilon$'. In particular, the larger the labels uncertainty, the larger the bound for a fixed training set fill distance. Assuming that the maximum error on the labelled data ($\epsilon_{\cL}$) is negligible, the smaller the fill distance, the smaller the bound for the maximum expected approximation error on the unlabelled set, conditional to the unlabelled data locations. Although $\epsilon_{\cL}$ is typically considered to be small, its presence in the formula suggests that the maximum expected error on the unlabelled set is also dependent on the maximum error of the predictions on the labelled set used for training, thus, on how well the trained model fits the training data. Additionally, the connection between the bound and the regularity of the map connecting the features and the labels, and the chosen error function are highlighted by the presence of the Lipschitz constants $\lambda_{p}$, $\lambda_{l_{\cX}}$ and $\lambda_{l_{\cY}}$ on the right-hand side of (\ref{bound}).

We remark that if we consider the
error function to be the absolute value of the difference between true and predicted labels, Theorem~\ref{theorem} holds for all Lipschitz continuous learning algorithms, such as kernel ridge regression with the Gaussian kernel and feed forward neural networks.

\begin{remark}
  \label{rmk:Lipschitz}
  If the trained model $m_{\cL} \in \cM $ is $\lambda_{l_{\cX}}$-Lipschitz continuous,
  then also the absolute value error function is $\lambda_{l_{\cX}}$-Lipschitz continuous. To see this, fix $y \in \cY$, $ \cL \subset \cD$ and $\bsx, \tilde{\bsx} \in \cX$. Then we have
  $$
  |l(\bsx,y, m_{\cL})-  l(\tilde{\bsx},y, m_{\cL})|= \big||m_{ \cL}(\bsx)-y|-|m_{\cL}(\tilde{\bsx})-y|\big| \leq |m_{ \cL}(\bsx)- m_{\cL}(\tilde{\bsx}) |.
  $$
  
  Moreover, the absolute value error function is always $\lambda_{l_{\cY}}$-Lipschitz with $\lambda_{l_{\cY}} = 1$. As a matter of fact, fixed $\bsx \in \cX$,
  $m_{\cL} \in \cM $ and $y, \tilde{y} \in \cY$ we have
  $$
  |l(\bsx,y, m_{\cL})-  l(\bsx,\tilde{y}, m_{\cL})|= \big||m_{ \cL}(\bsx)-y|-|m_{\cL}(\bsx)-\tilde{y}|\big| \leq |y - \tilde{y}|.
  $$

  \end{remark}

\subsection{Selecting training sets with farthest point sampling}\label{subsec:Theory_FPS} 

\begin{algorithm}[t]
    \caption{Farthest Point Sampling (FPS)}\label{alg: FPS}
  \textbf{Input} Dataset $\cD_{\cX}=\{\bsx_q\}_{q=1}^{k} \subset \cX $ and data budget $b \in \mathbb{N}, \; b \ll k$.\\
  \textbf{Output}\text{ Subset $\cL^{FPS}_{\cX}\subset D_{\cX}$  with $|\cL^{FPS}_{\cX}|= b$.}
    \begin{algorithmic}[1]     
            \State~Choose $\hat{\bsx} \in \cD_{\cX}$ randomly and set $\cL^{FPS}_{\cX}= \hat{\bsx}$.
            \While{$|\cL^{FPS}_{\cX}| < b$}
            \State~$\bar{\bsx}= \argmax{\bsx_q \in \cD_{\cX}}\min\limits_{\bsx_j \in \cL^{FPS}_{\cX}}\|\bsx_q -\bsx_j\|_2$.
            \State~$\cL^{FPS}_{\cX} \leftarrow \cL^{FPS}_{\cX}  \cup \bar{\bsx}$.
            \EndWhile
    \end{algorithmic}
\end{algorithm}

Theorem~\ref{theorem} provides an upper bound for the maximum expected value of the error function on the unlabelled data, conditional to the knowledge of the data features. Our aim is to select a training set by minimizing such a bound. Assuming that the value of the maximum prediction error of the trained regression model on the training set is negligible, we can attempt the minimization of the upper bound in (\ref{bound}) by solving the following minimization problem
\begin{equation}
  \label{eq_kcenter}
  \min_{\substack{\cL \subset \cD, \\ | \cL | =b}} h_{\cL_{\cX}\negthinspace,\thinspace \cD_{\cX}},
\end{equation}
  where $\cD := \{ (\bsx_q, y_q)\}_{q=1}^{k} \subset \cX
  \times \cY $ is the pool of data points we want to label, and $\cL := \{(\bsx_j, y_j)\}_{j=1}^b$ is the set of labelled points we use for training. The minimization problem in (\ref{eq_kcenter}) is equivalent to the $k$-center clustering problem~\citep{HarPeled2011}. Given a set of points in a metric space, the $k$-center clustering problem consists of selecting $k$ points, or centers, from the given set so that the maximum distance between a point in the set and its closest center is minimized, i.e., the fill distance of the $k$ centers in the set is minimized. Unfortunately, the $k$-center clustering problem is NP-Hard~\citep{Hochbaum1984}. However, using farthest point sampling (FPS), described in Algorithm~\ref{alg: FPS}, it is possible to obtain sets with fill distance at most a factor of 2 from the minimal fill distance in polynomial time~\citep{HarPeled2011}. It is worth to note that reducing the factor of approximation below 2 would require solving an NP-hard problem~\citep{Hochbaum1985}. 
  
  The FPS can be implemented using $\cO(|\cD|)$ space and takes $\cO(|\cD||\cL^{FPS}|)$ time~\citep{HarPeled2011}. Thus, FPS provides a suboptimal solution, but obtaining a better approximation with theoretical guarantees would not be feasible in polynomial time. 
To give a qualitative understanding of the data efficiency of FPS, with our implementation of the FPS algorithm, it takes approximately 70 seconds\thinspace\footnote{We used a 48-cores CPU with 384 GB RAM.} to select 1000 points from the training dataset provided within the selection-for-vision DataPerf challenge~\citep{Mazumder2022}, consisting of circa 3.3 millions points in $\mathbb{R}^{256}$.

\subsection{ An illustrative numerical example}

\begin{figure}
  \begin{subfigure}{0.49\textwidth}
  \begin{center}
  \includegraphics[width=\textwidth]{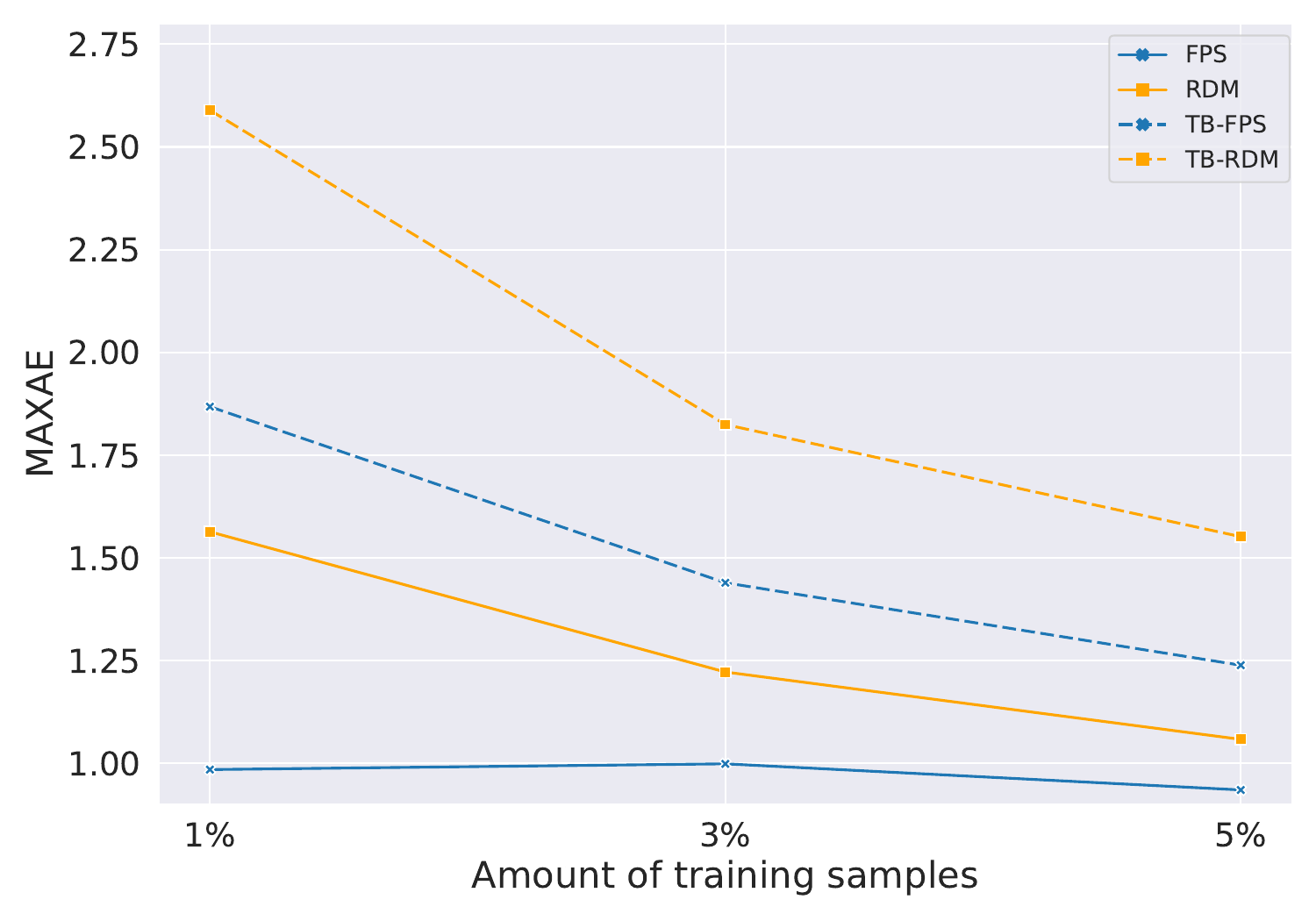}
  \caption{No noise}\label{toy_modela}
\end{center}
\end{subfigure}
  \begin{subfigure}{0.49\textwidth}  \begin{center}
  \includegraphics[width=\textwidth]{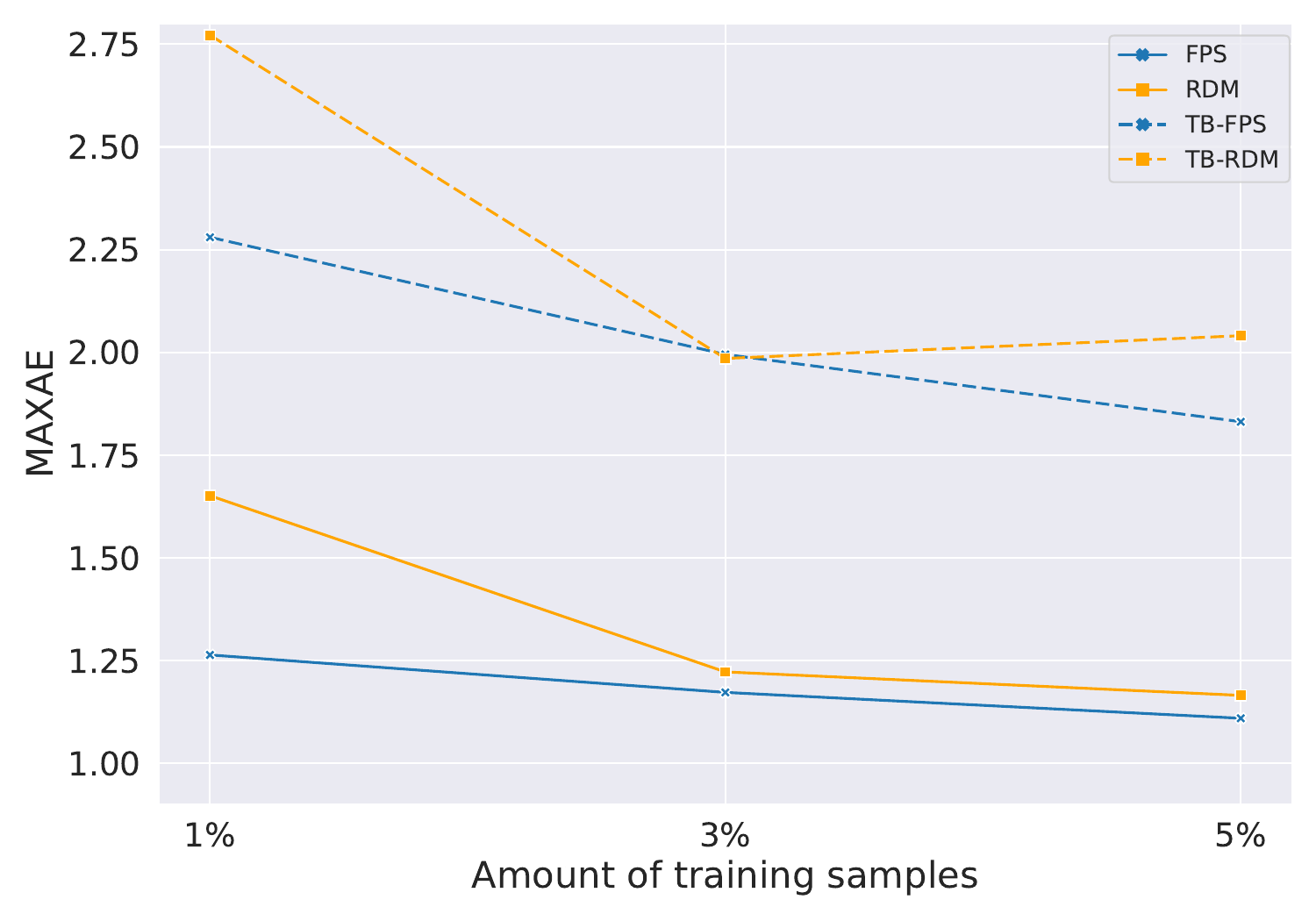}
  \caption{data-to-noise ratio: 12.5}\label{toy_modelb}
\end{center}
\end{subfigure}
\caption{Results for regression tasks on the illustrative example with a linear regression model trained on sets of various sizes selected randomly and with the FPS. The maximum absolute error (MAXAE) of the trained linear regression model and the theoretical bound (TB) for the expected maximum error of the linear model, computed as in (\ref{bound}), are shown for each training set size. The amount of data used for training is expressed as a percentage of the available data points.}
\label{toy_model}
\end{figure}

  To provide a more empirical understanding of our theoretical result and the effects of minimizing the training set fill distance using FPS, we provide an illustrative example where we empirically compute the theoretical bound provided in Theorem~\ref{theorem} and compare it with the computed maximum error achieved by a given regression model.
  
 We analyze a set of data points $\hat{\cD}:= \{\bsx_i\}_{i=1}^{1000}\subset [-1,1]^2$, the function $f(\bsx_i) = x_{i,1}\times x_{i,2}$, which is the function we aim to predict from data, where $\bsx_i = [x_{i,1}, x_{i,2}]$, and a linear regression model. We consider the absolute error as error function, which is the absolute value of the difference between the actual and predicted function evaluated on a specific data point. We note that $f(\bsx)$ is Lipschitz continuous with respect to the absolute error function in $[-1,1]^2$ with Lipschitz constant $\lambda_p =  \sqrt{2}$. Moreover, the Lipschitz constant associated with the absolute error of the predictions of a linear regression model are $\lambda_{l_{\cX}}= \|\bsa_{\cL}\|_2$ and $\lambda_{l_{\cY}}=1$, where $\bsa \in \mathbb{R}^2$ is the vector of the weights learned by the linear model trained on a set $\cL \subset \cD$. $\lambda_{l_{\cX}}$ coincides with the Lipschitz constant of the regression model as we know from Remark~\ref{rmk:Lipschitz}. We also consider a noisy version of the dataset, by adding random noise $\epsilon_i \in \mathbb{R}$ with mean zero and variance 0.1 to each of the data point labels, i.e., $y_i =f(\bsx_i) + \epsilon_i$. To quantify the amount of noise in the data we compute the data to noise ratio $DTNS :=
\frac{1}{1000}\sum_{i=1}^{1000} y_i^2 \Big/ \frac{1}{1000}\sum_{i=1}^{1000} \epsilon_i^2 $, which in this illustrative experiment is equal to 12.5. Moreover, we approximate the labels' uncertainty with the maximal noise magnitude $\bar{\epsilon}:= \max_{1 \leq i \leq 1000} |\epsilon_i|$.

 We train the linear model independently on subsets $\cL_j \subset \hat{\cD},\; j=1,2,3,$ consisting of 1\%, 3\% and 5\% of the available data points, that is, 10, 30 and 50 points, respectively. Next, we compute the maximal error of the predictions on the training sets,$\{\epsilon_{\cL_j}\}_{j=1}^3$, and compute the theoretical bounds (TB) related to each of the selected training sets and respective trained models as follows 
 $$ TB(\cL_{j}, \bsa_{\cL_{j}}):= h_{\cL_j,\hat{\cD}}\left(\|\bsa_{\cL_j}\|_2+ \sqrt{2}\right) + \epsilon_{\cL_j} +\bar{\epsilon}.$$  
 We consider both, training sets selected randomly and with FPS.
Fig.~\ref{toy_model} illustrate the maximum error of the predictions of the linear model trained on randomly selected training sets (in orange) and sets selected with FPS (in blue), for the noiseless and noisy data scenarios, respectively. Moreover, Fig.~\ref{toy_model} illustrate the related $TB$ (dashed lines) for each training set size and selection strategy considered. The figure suggests that the theoretical bound captures the behavior of the true maximum error as expected from the theory, independently of the selection strategy, the training set size, and whether the data is noisy or not.
Furthermore, Fig.~\ref{toy_model} indicates that selecting the training set by fill distance minimization with the FPS reduces the theoretical bound calculated according to (\ref{bound}) in Theorem~\ref{theorem} and the maximum error of predictions, with respect to the randomly selected training sets. Comparing Fig.~\ref{toy_modela} and Fig.~\ref{toy_modelb}, we can see that including noise in data increases the maximum prediction error and the theoretical error bound, independently of the selection strategy employed. This can be expected as noise can distort the true underlying relationship between the input features and the target variable, making the regression task more challenging. It is also important to note that including noise does not only affect the bound in terms of the labels uncertainty, but it also has an effect on other quantities such as the maximum error on the training set or the Lipschitz constant of the trained regression model, which is determined by the learned weights. This is because both the maximum error on the training set and the learned weights depend on the label values considered in the regression task. 

Additionally, it is worth highlighting that, with FPS, the maximum prediction error is flat, that is, it converges fast to a plateau value. This is particularly evident in Fig.~\ref{toy_modela} where noise is not included in the data and where it can be clearly seen that such a phenomenon may not be reflected in our proposed bound, indicating that it may be further improved. In the second paragraph of the empirical section \ref{empirical_discussion}, we empirically investigate on three different datasets why there can be a fast decay of the maximum expected error when using the FPS and how this is related to the data points distribution in the feature space. Nonetheless, Fig~\ref{toy_model} suggests that our proposed bound provides an effective qualitative estimate of the expected maximum error and that reducing the training set fill distance benefits the robustness of the trained model. 

\section{Kernel ridge regression with the Gaussian kernel (KRR)}
\label{sec:kernel}
Our theoretical analysis suggests that selecting training sets by fill distance minimization with the FPS leads to a reduction of the maximum expected prediction error of Lipschitz continuous models. We note that our analysis relies on the Lipschitz continuity assumption of the loss function, which is related to the regularity of the trained model under consideration, as suggested by Remark 5. A natural question is whether we can highlight additional benefits of using the FPS for selecting the training set by tightening the assumption on the regularity of the regression model through the consideration of specific regression models. In this work, we investigate the additional benefit of selecting the training set using FPS for kernel regression models with the Gaussian kernel, a class of regression approaches successfully employed in various applications such as molecular and material sciences~\citep{Deringer2021}, or robotics~\citep{Deisenroth2015}. Besides considering the Lipschitz continuity of the (absolute) error function with this regression model, we also study how selecting the training set with the FPS increases the model stability for this specific class of regression approaches. 

Kernel ridge regression is a machine learning technique that combines the concepts of kernel methods and ridge regression to perform non-parametric, regularized regression~\citep{Deringer2021}. In this work, we use a Gaussian kernel function. 
Given two data points $\bsx_q, \bsx_r \in \cX$, the Gaussian kernel is defined as follows:
\begin{equation}
   k(\bsx_q, \bsx_r) := e^{-\gamma\|\bsx_q-\bsx_r\|_2^2},
\end{equation}
where  $\gamma \in \mathbb{R}^+$ is a kernel hyperparameter to be selected through an optimization process.
Provided a training set $\cL = \{(\bsx_j,y_j)\}_{j=1}^b$, the weights $\pmb{\alpha}=[\alpha_1,
\alpha_2,\dots,\alpha_b]^T \in \mathbb{R}^b$ of a KRR model are given by the solution of the following minimization problem
\begin{equation}
   \label{KRR_minimization}
   \pmb{\alpha} = \argmin{\bar{\pmb{\alpha}}}\sum_{j=1}^b(m_{\cL}(\bsx_j)-y_j)^2 + \lambda \bar{\pmb{\alpha}}^T\pmb{K}_{\cL} \;\bar{\pmb{\alpha}}.
\end{equation}
 Here, $\pmb{K}_{\cL}\in \mathbb{R}^{b,b}$ is the kernel matrix, i.e., $\pmb{K}_{\cL}(q,r)= k(\bsx_q,\bsx_r)$, and the
 parameter $\lambda \in \mathbb{R}^+$ is the so-called regularization parameter that addresses eventual ill-conditioning problems of the matrix $\pmb{K}_{\cL}$. The scalars $\{m_{\cL}(\bsx_j)\}_{j=1}^b$ are the labels predicted by the KRR method associated with the training data locations $\{\bsx_j\}_{j=1}^b$.
 The analytic solution to the minimization problem in (\ref{KRR_minimization})
 is given by
 \begin{equation}
  \label{regression_parametes_KRR}
   \pmb{\alpha}= (\pmb{K}_{\cL}+\lambda\pmb{I})^{-1}\bsy
 \end{equation}
 where $\bsy=[y_1,y_2,\dots,y_b]^T$. 

Given the location $\bsx \in \cX$ of a new data point, its associated predicted label $y(\bsx)$ is defined as follows
 \begin{equation}
   \label{kernel_prediction}
   y(\bsx) := m_{\cL}(\bsx) = \sum_{j=1}^b\alpha_j k(\bsx,\bsx_j).
 \end{equation}

 \subsection{Kernel ridge regression with data selected by FPS}
To address the question of the Lipschitz continuity of the KRR with the Gaussian kernel in view of Theorem~\ref{theorem} and Remark~\ref{rmk:Lipschitz}, we give the following lemma to express this established fact in our context:
\begin{lemma}
  If the error function is the absolute difference between the true and predicted labels, then the regression function provided by the kernel ridge regression algorithm with the Gaussian kernel is Lipschitz continuous.
 \end{lemma}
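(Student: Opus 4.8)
The plan is to establish that the KRR predictor $m_{\cL}$ is globally Lipschitz on $\mathbb{R}^d$ by bounding its gradient uniformly; since $\cX \subset \mathbb{R}^d$, Lipschitz continuity on $\cX$ follows at once, and then by Remark~\ref{rmk:Lipschitz} the associated absolute-value error function inherits the same Lipschitz constant in its first argument, so that Theorem~\ref{theorem} applies to KRR. Recall from (\ref{kernel_prediction}) that $m_{\cL}(\bsx) = \sum_{j=1}^b \alpha_j\, k(\bsx, \bsx_j)$ with $k(\bsx, \bsx_j) = e^{-\gamma \|\bsx - \bsx_j\|_2^2}$. Because the weights $\pmb{\alpha}$ from (\ref{regression_parametes_KRR}) are fixed once $\cL$ has been chosen, the only quantity to control is the spatial variation of the individual kernel terms, and $\|\pmb{\alpha}\|_1$ will enter the final constant as a finite factor.

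First I would differentiate a single kernel term, obtaining $\nabla_{\bsx} k(\bsx, \bsx_j) = -2\gamma (\bsx - \bsx_j)\, e^{-\gamma \|\bsx - \bsx_j\|_2^2}$, so that $\|\nabla_{\bsx} k(\bsx, \bsx_j)\|_2 = 2\gamma\, t\, e^{-\gamma t^2}$ where $t := \|\bsx - \bsx_j\|_2 \geq 0$. The key step is then an elementary one-variable optimization: maximizing $g(t) := t\, e^{-\gamma t^2}$ over $t \geq 0$ gives a maximizer at $t = 1/\sqrt{2\gamma}$ with maximal value $1/\sqrt{2\gamma e}$, whence each kernel term has gradient norm bounded, uniformly in $\bsx$, by $\sqrt{2\gamma/e}$. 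Summing over the $b$ terms with the triangle inequality yields $\|\nabla m_{\cL}(\bsx)\|_2 \leq \sqrt{2\gamma/e}\, \sum_{j=1}^b |\alpha_j| = \sqrt{2\gamma/e}\, \|\pmb{\alpha}\|_1$ for every $\bsx$.

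Since this bound is independent of $\bsx$ and $m_{\cL}$ is smooth on all of $\mathbb{R}^d$, the mean value theorem applied along the segment joining any two points gives $|m_{\cL}(\bsx) - m_{\cL}(\tilde{\bsx})| \leq \sqrt{2\gamma/e}\, \|\pmb{\alpha}\|_1\, \|\bsx - \tilde{\bsx}\|_2$, which proves the claim with the explicit Lipschitz constant $\lambda_{l_{\cX}} = \sqrt{2\gamma/e}\, \|\pmb{\alpha}\|_1$. The argument is essentially routine and I do not expect a genuine obstacle; the only point needing care is that the gradient bound be \emph{uniform} in $\bsx$ (so that the constant does not blow up as $\bsx$ moves away from the training points), which is exactly what the Gaussian decay guarantees through the one-dimensional optimization. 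One could alternatively avoid differentiating $m_{\cL}$ altogether by observing that $k(\cdot, \bsx_j)$ is the composition of the $1$-Lipschitz map $\bsx \mapsto \|\bsx - \bsx_j\|_2$ with the function $r \mapsto e^{-\gamma r^2}$, whose derivative is precisely the quantity bounded above, so each kernel term is $\sqrt{2\gamma/e}$-Lipschitz and the same constant for $m_{\cL}$ follows.
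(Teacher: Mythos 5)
Your proof is correct and rests on the same underlying fact as the paper's: the one-dimensional Gaussian profile $r\mapsto e^{-\gamma r^2}$ has derivative uniformly bounded by $\sqrt{2\gamma/e}$, so each kernel term $k(\cdot,\bsx_j)$ is Lipschitz with that constant and the predictor, being a fixed finite linear combination, inherits Lipschitz continuity. Where you differ is in how the per-term bounds are aggregated and certified. The paper writes $m_{\cL}(\bsx)=\pmb{\alpha}_{\cL}^T\bsk_{\bsx}$, applies Cauchy--Schwarz to get $\|\pmb{\alpha}_{\cL}\|_2\,\|\bsk_{\tilde{\bsx}}-\bsk_{\hat{\bsx}}\|_2$, and then bounds each coordinate of $\bsk_{\tilde{\bsx}}-\bsk_{\hat{\bsx}}$ by $\lambda_k\|\tilde{\bsx}-\hat{\bsx}\|_2$ (implicitly via the reverse triangle inequality, which your closing remark about composing the $1$-Lipschitz map $\bsx\mapsto\|\bsx-\bsx_j\|_2$ with $e^{-\gamma r^2}$ makes explicit), arriving at the constant $\sqrt{b}\,\|\pmb{\alpha}_{\cL}\|_2\,\lambda_k$. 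You instead bound $\|\nabla m_{\cL}\|_2$ pointwise and invoke the mean value theorem, obtaining $\|\pmb{\alpha}_{\cL}\|_1\sqrt{2\gamma/e}$; since $\|\pmb{\alpha}_{\cL}\|_1\le\sqrt{b}\,\|\pmb{\alpha}_{\cL}\|_2$ and $\lambda_k=\sqrt{2\gamma/e}$, your constant is never worse and you additionally make the value of $\lambda_k$ explicit, which the paper leaves abstract. The only point worth noting is that the gradient-plus-MVT route requires the segment between the two points to lie in the domain, which holds because $m_{\cL}$ is defined and smooth on all of $\R^d$; you address this, so there is no gap.
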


  \begin{proof}
    Consider the training set features $\cL_{\cX} =\{\bsx_j\}_{j=1}^b $ and set of learned weights $\pmb{\alpha}_{\cL}:=[\alpha_1, \alpha_2, \dots, \alpha_b]^T\in \mathbb{R}^b$ obtained by training the KRR on $\cL$. Then, given $\bsx \in \cX$ the predicted label $y(\bsx)$ provided the KRR approximation function can be computed as follows:
    \begin{equation}
      y(\bsx)= \sum_{j=1}^b \alpha_{j} k(\bsx, \bsx_j) = \pmb{\alpha}_{\cL}^T \bsk_{\bsx},
    \end{equation}
    where $k(\bsx, \bsx_j):= e^{-\gamma \| \bsx-\bsx_j\|_2^2}$, and $\bsk_{\bsx}:=[k(\bsx, \bsx_1), k(\bsx, \bsx_2), \dots, k(\bsx, \bsx_b)]^T \in \mathbb{R}^b$. Next, considering $\tilde{\bsx}, \hat{\bsx}\in \cX$, we have
    \begin{align*}
      |y(\tilde{\bsx}) - y(\hat{\bsx})| & \leq | \pmb{\alpha}_{\cL}^T \bsk_{\tilde{\bsx}}- \pmb{\alpha}_{\cL}^T \bsk_{\hat{\bsx}} |\\
                                             & \leq \| \pmb{\alpha}_{\cL}\|_2 \|  \bsk_{\tilde{\bsx}}- \bsk_{\hat{\bsx}}\|_2\\
                                             & = \|\pmb{\alpha}_{\cL}\|_2 \sqrt{\sum_{j=1}^b \left(e^{-\gamma \| \tilde{\bsx}-\bsx_j\|_2^2} - e^{-\gamma \| \hat{\bsx}-\bsx_j\|_2^2}\right)^2} \\
                                             & \leq \|\pmb{\alpha}_{\cL}\|_2  \sqrt{b }\lambda_{k} \| \tilde{\bsx} - \hat{\bsx}\|_2,
    \end{align*}
    where $\lambda_k$ is the Lipschitz constant of the function $e^{-\gamma r^2}, \; r \in \mathbb{R}^+$.
  \end{proof}

\subsection{Increased numerical stability of Gaussian kernel regression with FPS}
\label{sect: stability analysis}

Numerical stability in a regression approach is a key factor in ensuring the robustness of the learning algorithm with respect to noise and therefore its reliability. A standard criterion for measuring the numerical stability in case of kernel regression is the condition number of the kernel matrix, $\mathbf{K}_{\cL}\in \mathbb{R}^{b,b}$. 
The condition number of a matrix is defined as
\begin{equation}
  \label{conditon number}
    cond(\mathbf{K}_{\cL}):= \| \mathbf{K}_{\cL}\|_2 \|\mathbf{K}^{-1}_{\cL} \|_2 =\frac{\lambda_{max}(\mathbf{K}_{\cL})}{\lambda_{min}(\mathbf{K}_{\cL})},
\end{equation}
where $\lambda_{max}(\mathbf{K}_{\cL}) $ and $\lambda_{min}(\mathbf{K}_{\cL}) $ are the largest and smallest eigenvalues of $\mathbf{K}_{\cL}$, respectively. The smaller the condition number, the more numerically stable the algorithm. For high condition numbers, the numerical computations involving the kernel matrix can suffer from amplification of rounding errors and loss of precision that can lead to numerical instability when performing operations like matrix inversion or solving linear systems involving the kernel matrix. Such phenomena may also lead to instability of the predictions as small variations in the input may lead to significant variations in the output.

To increase the model stability we aim to select a training set that leads to a kernel matrix with a small condition number~\eqref{conditon number}.
In the following, we recapture results related to the stability of the kernel matrix that have been collected in~\cite{Wendland2004} in the context of numerical mathematics. Our goal is to connect these results with the FPS and show that FPS can be used to increase the stability of KRR models by reducing the condition number of the kernel matrix. For further information and analysis on the stability of kernel matrices, we direct the reader to~\cite{Wendland2004}, {Chapter 12}.
From the cited literature, we know that the largest eigenvalue of a kernel matrix is mainly dependent on the number of points we consider and not on how we choose them. In particular, the value of the largest eigenvalue can be bounded as follows
  \begin{equation}
    \lambda_{max}(\mathbf{K}_{\cL}) \leq b \max_{q,r= 1, \dots, b}| \mathbf{K}_{\cL}(q,r)|.
  \end{equation}
Thus, the maximum eigenvalue is bounded by a quantity that depends linearly on the number of training samples times the maximal entry of the kernel matrix. Since we are considering Gaussian kernels, the maximal entry of the kernel is bounded. Consequently, the value of the maximal eigenvalue grows at most as fast as the number of points we select, independently of how we choose them. 

On the contrary, the value of the smallest eigenvalue is strongly dependent on how we choose the training points. To study that, we use the separation distance, a quantity we can associate to subsets of our pool of unlabelled data points.

  \begin{definition}
    Given set $\cL_{\cX} := \{\bsx_j\}_{j=1}^b \in \mathbb{R}^d$, the \emph{separation distance} of the points in $\cL_{\cX}$ defined as
    \begin{equation*}
        s_{\cL_{\cX}} :=\frac{1}{2}\min_{\substack{\bsx_q, \bsx_r \in \cL_{\cX} \\ q \neq r}} \|\bsx_q -\bsx_r\|_2.
    \end{equation*}
    In words, the separation distance is half the minimal distance between two points in $\cL_{\cX}$. Given a training set $\cL \subset \cD$ we define $s_{\cL_{\cX}}$ to be its separation distance.
  \end{definition}

With the concept of separation distance in mind, we observe that the value of the smallest eigenvalue of the Gaussian kernel matrix can be bounded from below as~\citep{Wendland2004}
  \begin{equation}
    \label{bound sep dist}
    \lambda_{min}(\mathbf{K}_{\cL}) \geq C_d\left(\sqrt{2\gamma}\right)^{-d}e^{\frac{-40.71d^2}{(s_{\cL_{\cX}}^2 \gamma)}}s_{\cL_{\cX}}^{-d},
  \end{equation}
where $d\in \mathbb{N}$ is the training data dimension, which is fixed, $\gamma \in \mathbb{R}^+$ is the Gaussian kernel hyperparameter, representing the width of the Gaussian, and $s_{\cL_{\cX}} \in \mathbb{R}^+$ is the training set separation distance. It is important to notice that the lower bound of the smallest eigenvalue decreases exponentially as the separation distance of the selected set decreases. Consequently, given two training sets of the same size, a small difference in their separation distance may lead to a large difference between the smallest eigenvalue of their corresponding kernel matrices, thus also in condition number and model stability. 
Note that Theorem 12.3 from \cite{Wendland2004} provides a general lower bound for the minimum eigenvalue for all kernels that are, slightly simplified, a positive definite function that possesses a positive Fourier transform, where the bound depends on the separation distance and properties of the Fourier transform. This result holds, besides for the Gaussian kernel, for example also for kernels from the Matérn class under some conditions on the kernel parameters~\citep{Rasmussen.Williams:2006}.
Given Formula (\ref{bound sep dist}), in order to increase the model stability of the kernel regression approach with Gaussian kernel, we aim to select a training set that solves the following NP optimization problem
  \begin{equation}
    \label{separation_distance}
  \max_{\substack{\cL \subset \cD \\ |\cL| = b} } s_{\cL_{\cX}}.
  \end{equation}

  Interestingly, the FPS provides sets with separation distance at most a factor of 2 from the maximal separation distance~\citep{Eldar94}. 
Moreover, to obtain an approximation factor better than 2, an NP problem must be solved. Thus, the FPS provides in polynomial time a solution with an optimal approximation factor to both the problems (\ref{eq_kcenter}) and (\ref{separation_distance}). Consequently, when we consider kernel regression approaches with the Gaussian kernel, selecting the training set with the FPS leads to more robust and stable models.

\section{Experimental results}

The main focus of this section is to investigate the effects of minimizing the training set fill distance on regression tasks from quantum chemistry, where molecular properties are predicted on the QM7, QM8 and QM9 datasets. In particular, we study the performance of FPS in comparison to several sampling baselines while using two machine learning models for prediction, KRR and FNN. Additionally, we empirically investigate the potential benefit of minimizing the training set fill distance for multivariate regression tasks, that is, regression tasks where the label value to predict is multidimensional. More precisely, we focus on the force-field prediction task on molecular trajectories from the rMD17 dataset. The molecular force-field consists of the per-atom forces in a molecule. 

Note that our GitHub repository\footnote{at \href{https://github.com/Fraunhofer-SCAI/Fill_Distance_Regression}{\url{https://github.com/Fraunhofer-SCAI/Fill_Distance_Regression}}} contains all the code necessary to reproduce the results we present. The repository includes code for downloading, reading, and preprocessing the datasets, our implementation of the FPS, regression models, and evaluation procedures. Furthermore, we have included a Jupyter notebook that reproduces the experiments on QM7, with a runtime of only a few minutes.
\subsection{Datasets}
\label{subsect:datasets}
QM7~\citep{blum, rupp12} is a benchmark dataset in quantum chemistry, consisting of 7165 small organic molecules with up to 23 atoms including 7 heavy atoms: C, N, O and S. It includes information such as the Cartesian coordinates and the atomization energy of the molecules. We use QM7 for a regression task, where the feature vector for a molecule is the Coulomb matrix~\citep{rupp12}.
The Coulomb matrix is defined as
\begin{equation}
\bsC_{i,j} =
\begin{cases}
  \frac{1}{2}z_i^{2.4}  & \text{if } i = j \\
  \frac{z_i z_j}{\|\bsr_i - \bsr_j\|_2} & \text{if } i \neq j
\end{cases}
\end{equation}
where $z_i$ is the nuclear charge of the $i$-th atom and $\bsr_i$ is its position.
In the case of QM7 each molecule is thereby represented as an element in $\mathbb{R}^{529}$, and the label value to predict is the atomization energy, a scalar value describing amount of energy in electronvolt (eV) required to completely separate all the atoms in a molecule into individual gas-phase atoms. 

QM8~\citep{Ruddigkeit2012, Ramakrishnan2015} is a curated collection of 21,786 organic molecules with up to 8 heavy atoms (C, N, O, and F). For each of the molecules it provides the SMILES representation~\citep{SMILES} together with various molecular properties, such as the lowest two singlet transition energies and their oscillator strength. These molecular properties have been computed considering different approaches. In this study we consider those values computed with hybrid exchange correlation functional PBE0. To generate the molecular descriptors we employ Mordred~\citep{Mordred}, a publicly available library that exploits the molecules' topological information encoded in the SMILES strings to provide 1826 physical and chemical features. To work with a more compact representation, we remove 530 features for which the values across the dataset have zero variance. Thus, each molecule in QM8 is represented by a vector in $\mathbb{R}^{1296}$. Furthermore, we normalize the features provided by the Mordred library, to scale them independently in the interval (0, 1). The label value to predict in the regression task is the lowest singlet transition energy (E1), measured in eV, describing the energy difference between the ground state and the lowest excited state in a molecule when both states have singlet spin multiplicity. It is an important property in understanding the electronic behavior of molecules.

QM9~\citep{Ruddigkeit2012, ramakrishnan2014quantum} is a publicly available quantum chemistry dataset containing the properties of 133,885 small organic molecules with up to nine heavy atoms (C, N, O, F). QM9 is frequently used for developing and testing machine learning models for predicting molecular properties and for exploring the chemical space~\citep{Faber2017,Ramakrishnan2017,Pronobis2018}. QM9 contains the SMILES representation~\citep{SMILES} of the relaxed molecules, as well as their geometric configurations and 19 physical and chemical properties. In order to ensure the integrity of the dataset, we have excluded all 3054 molecules that did not pass the consistency test proposed by~\cite{ramakrishnan2014quantum}. Additionally, we have removed the 612 compounds that could not be interpreted by the RDKit package~\citep{RDKIT}. Furthermore, in order to ensure the uniqueness of data points, we have excluded 17 molecules that had SMILES representations that were identical to those of other molecules in the dataset. Following this preprocessing procedure, we obtained a smaller version of the QM9 dataset comprising 130,202 molecules. The molecular representation we employ is based on the Mordred~\citep{Mordred} library, as for the QM8 dataset, with the difference that in this case we do not normalize the features, to show that our results are independent of the normalization. To work with a more compact representation, we remove 519 features for which the values across the dataset have zero variance. Thus, each molecule in QM9 dataset is represented by a vector in $\mathbb{R}^{1307}$. The label value to predict is the HOMO-LUMO energy, measured in eV, describing the difference between the highest occupied (HOMO) and the lowest unoccupied (LUMO) molecular orbital energies. It is a useful quantity for examining the molecules kinetic stability.

The revised MD17~\citep{Christensen2020} (rMD17) is an updated version of the molecular dynamics dataset (MD17)~\citep{Unke2021} commonly used for developing and testing machine learning models for force-field prediction~\citep{Schuett2017,gasteiger2022gemnetoc, liu2022spherical}. The rMD17 consists of temporal trajectories of various small organic molecules of varying sizes and complexity. The dataset provides information on the Cartesian coordinates, atomic charges, and per atom forces, that is, the force-field, of each molecule at each time step of the molecules' trajectories. The per atom forces are provided in $\frac{\text{kcal}}{\text{mol} \times \text{ångstrong}}$. We use the rMD17 for a multivariate regression task, in which, using the atoms coordinates we aim at predicting the per atom forces of the molecules over the course of their trajectories. The molecular representation we use for the regression task is the one proposed by~\cite{Chmiela2017}, consisting, for each molecule, of a matrix $\bsD \in \mathbb{R}^{3n_a\times 3n_a}$ defined as follows
\begin{equation*}
  \bsD_{ij} =
  \begin{cases}
    \|\bsr_i -  \bsr_j\|_2^{-1}& \text{if } i > j \\
    0 & \text{if } i \leq j
  \end{cases}
\end{equation*}
where  $\bsr_i\in \mathbb{R}^3$ is the geometrical position of the $i$-th atom and $n_a$ is the number of atoms in the analyzed molecule. In this work, we study the trajectories of the Benzene with 9 atoms, Uracil and Malonaldehyde each consisting of 12 atoms. The trajectories of each of the considered molecules consists of 100,000 time steps. 

\subsection{Baseline sampling strategies}
We compare the effects of minimizing the training set fill distance through the FPS algorithm with three coresets benchmark sampling strategies. Specifically, we consider random sampling (RDM), the facility location algorithm and $k$-medoids++. Random sampling (RDM) is considered the natural benchmark for all the other coreset sampling strategies~\citep{Feldman2019}, and consists of choosing the points to label and use for training uniformly at random from the available pool of data points. Facility location~\citep{Frieze1974} is a greedy algorithm that aims at minimizing the sum of the distances between the points in the pool and their closest selected element. $k$-medoids++~\citep{Mannor2011} is a variant of $k$-means++~\citep{kmeanspp}, it partitions the data points into $k$ clusters and, for each cluster, selects one data point as the cluster center by minimizing the distance between points labelled to be in a cluster and the point designated as the center of that cluster. Both, facility location and $k$-medoids++, attempt to minimize a sum of pairwise distances. However, the fundamental difference is that facility location is a greedy technique, while $k$-medoids++ is based on a segmentation of the data points into clusters.

\subsection{Regression models}
\label{subsect: Regression models}
In this work we use ML regression models that have been utilized in previous works for molecular property prediction tasks. Specifically, we consider kernel ridge regression with the Gaussian kernel (KRR)~\citep{Stuke2019,Deringer2021} and feed forward neural networks (FNNs)~\citep{Pinheiro2020}. KRR and FNN are of interest to us because of their Lipschitz continuity, which, from Remark~\ref{rmk:Lipschitz}, we know is a required property to validate our theoretical analysis. 


We already described KRR in Section~\ref{sec:kernel} and showed the Lipschitz continuity. The hyperparameters $\gamma$ and $\lambda$ are optimized through the following process: first we perform a cross-validation grid search to find the best hyperparameter for each training set size using subsets obtained by random sampling. Next, the average of the best parameter pair for each training set size is used to build the final model. The KRR hyperparameters are varied on a grid of 12 points between $10^{-14}$ and $10^{-2}$. Note that, we do not use an optimal set of hyperparameters for each selection strategy and training set size. This decision is made because we aim to analyze the qualitative behaviour of a fixed model, where the only variable affecting the quality of the predictions is the selected training set. 


Feed-forward neural networks~\citep{Goodfellow-et-al-2016} (FNNs) are probably the simplest deep neural networks. Given $\bsx \in \cX$ the predicted label, $y(\bsx)$, provided by a FNN, with $l \in \mathbb{N}$ layers, can be expressed as the output of a composition of functions, that is,
\begin{equation}
  y(\bsx):= \phi_l \circ  \sigma_l \circ \phi_{l-1} \circ \sigma_{l-1} \circ \dots \circ \phi_{1}(\bsx),
\end{equation}
where the $\phi_i$ are affine linear functions or pooling operations and the $\sigma_i$ are nonlinear activation functions. Following along~\citep{Pinheiro2020}, we set $l=3$, consider only ReLu activation functions and define
\begin{equation}
  \phi_i(\bsx) = \bsW_i \bsx +\bsb_i 
\end{equation}
where the weight matrices $\bsW_i$ and the biases $\bsb_i$ are learned by minimizing the mean absolute error between the true and predicted labels of the data points in the training set. The Lipschitz continuity of FNN and other more advanced neural networks has been shown in the literature~\citep{Scaman18, Gouk2020}.

For the multivariate regression task we use the gradient-domain machine learning (GDML) method developed in~\cite{Chmiela2017}, which we follow to introduce the main idea behind GDML, briefly. See~\cite{Chmiela2017} for further information on this regression technique. GDML aims to learn the functional relationship
\[
\hat{\bsf}_F: \bsx_i \rightarrow \bsF_i
\]
between the coordinates $\bsx_i \in \mathbb{R}^{3n_a}$ of the atoms in a given molecule and the per-atom forces $\bsF_i \in \mathbb{R}^{3n_a}$. The GDML method relies on a kernel ridge regression technique with Matérn kernel functions to learn such function relationships from data. Given a training dataset $\cL = \{\bsx_j\}_{i=1}^b$, the estimation of the function $\hat{\bsf}_F$ on a data point $\bsx$ representing the per atom location in the cartesian space takes the form
$$
\hat{\bsf}_F(\bsx) := \sum_{j=1}^b \sum_{i=1}^{3n_a} \left(\alpha_{j,i}\right) \frac{\partial}{\partial{x_{j,i}}} \nabla k(\bsx, \bsx_j),
$$
where $\bsx_j = [x_{j,1}, \dots, x_{j,3n_a}]$, the parameters $\pmb{\alpha} \in \mathbb{R}^{b \times 3n_a}$ are learned by solving a ridge regression type optimization problem, and the function $k: \mathbb{R}^{3n_a} \times \mathbb{R}^{3n_a} \rightarrow \mathbb{R}$ is the employed Matérn kernel function. Given that GDML is based on a differentiable Matérn kernels we expect its predictions to exhibit some regularity. However, analyzing the Lipschitz continuity of this regression model is beyond the scope of this work.

\subsection{Evaluation metrics}
This section introduces and defines the metrics we consider to evaluate the performances of the studied regression models. We consider evaluation metrics for univariate and multivariate tasks, that is, for regression tasks with scalar and vector-valued labels, respectively.
\subsubsection*{Univariate regression}
We consider two metrics to evaluate the performance of the ML methods used for the regression tasks with scalar label values: Maximum Absolute Error (MAXAE) and Mean Absolute Error (MAE).
The MAXAE is the maximum absolute difference between the true target values $\{y_i\}_{i=1}^n$ and the predicted values $\{\tilde{y}_i\}_{i=1}^n$, that is, 
\begin{equation}
\text{MAXAE} := \max_{1\leq i \leq n} |y_i - \tilde{y}_i|,
\end{equation}
where $n$ is the number of unlabelled data points in the analyzed data pool.
The MAE is calculated by averaging the absolute differences between the predicted values and the true target values, that is, 
\begin{equation}
\text{MAE} := \frac{1}{n}\sum_{i=1}^n |y_i - \tilde{y}_i|.
\end{equation}
Furthermore, to evaluate the stability of the KRR approach we also consider the condition number of the kernel obtained from the training data defined as in (\ref{conditon number}).

\subsubsection*{Multivariate regression}
We consider three metrics to evaluate the performance of the ML method used for the force-field regression tasks with vector-valued label values: the atom-wise maximum error over the predicted forces (MAXAE$_F$), the molecule-wise maximum MAE (MAXMAE$_F$) and the mean absolute error (MAE$_F$). The MAXAE$_F$ is the maximum absolute difference between the entries of the true target values $\{\bsF_i\}_{i=1}^{n} \subset \mathbb{R}^{3n_a}$, describing the per-atom forces of the analyzed molecule with $n_a$ atoms, and those of the predicted values $\{\tilde{\bsF}_i\}_{i=1}^n \subset \mathbb{R}^{3n_a}$, that is, 
\begin{equation}
\text{MAXAE$_F$} := \max_{1\leq i \leq n} \max_{1\leq j \leq 3n_a}|F_{i,j} - \tilde{F}_{i,j}|,
\end{equation}
 where $\bsF_i= [F_{i,1}, F_{i,2}, \dots, F_{i,3n_a}]$ and the $3n_a$ is due to the fact that for each of the $n_a$ atoms in the molecule we consider the forces along the three Cartesian axes. The MAXMAE$_F$ is defined as follows:
\begin{equation}
\text{MAXMAE$_F$} := \max_{1\leq i \leq n} \left(\frac{1}{3 n_a}\sum_{j=1}^{3n_a}|F_{i,j} - \tilde{F}_{i,j}|\right).
\end{equation}
Both, the MAXAE$_F$ and the MAXMAE$_F$, are quantities we introduce to evaluate the robustness of the predictions of a given regression model for the force-field prediction task. The MAXAE$_F$ provides an atom-wise information on the worst case prediction error while the MAXMAE$_F$ focuses on the molecule-wise worst case error. 
To evaluate the average performance of a multivariate regression model we consider the MAE$_F$, that is, the average absolute differences between the predicted values and the true target values: 
\begin{equation}
\text{MAE$_F$} := \frac{1}{3n n_a}\sum_{i=1}^n \sum_{j=1}^{3n_a}|F_{i,j} - \tilde{F}_{i,j}|.
\end{equation}

\subsection{Numerical Results}

The experiments we perform involve testing the predictive accuracy of each trained model on all data points not used for training, in terms of the MAXAE and MAE for the molecular property prediction task and of the MAXAE$_F$, MAXMAE$_F$ and MAE$_F$ for the force-field prediction task. For each sampling strategy, we construct multiple training sets consisting of different amounts of samples. For each sampling strategy and training set size, the training set selection process is independently run five times. In the case of RDM, points are independently and uniformly selected at each run, while for the other sampling techniques, the initial point to initialize is randomly selected at each run. Therefore, for each selection strategy and training set size, each analyzed model is independently trained and tested five times. The reported test results are the average of the five runs. We also plot error bands, which, unless otherwise specified, represent the standard deviation of the results. We remark that the final goal of our experiments is to empirically show the benefits of using FPS compared to other model-agnostic state-of-the-art sampling approaches. We do not make any claims on the general prediction quality of the employed models on any of the studied datasets.

\subsubsection{Molecular property prediction on QM7, QM8 and QM9 datasets}

\label{molecular_prediction}

\begin{figure*}[t]
  \begin{center}
    \begin{subfigure}[t]{0.30\textwidth}
      \includegraphics[width=\textwidth, height = 3cm]{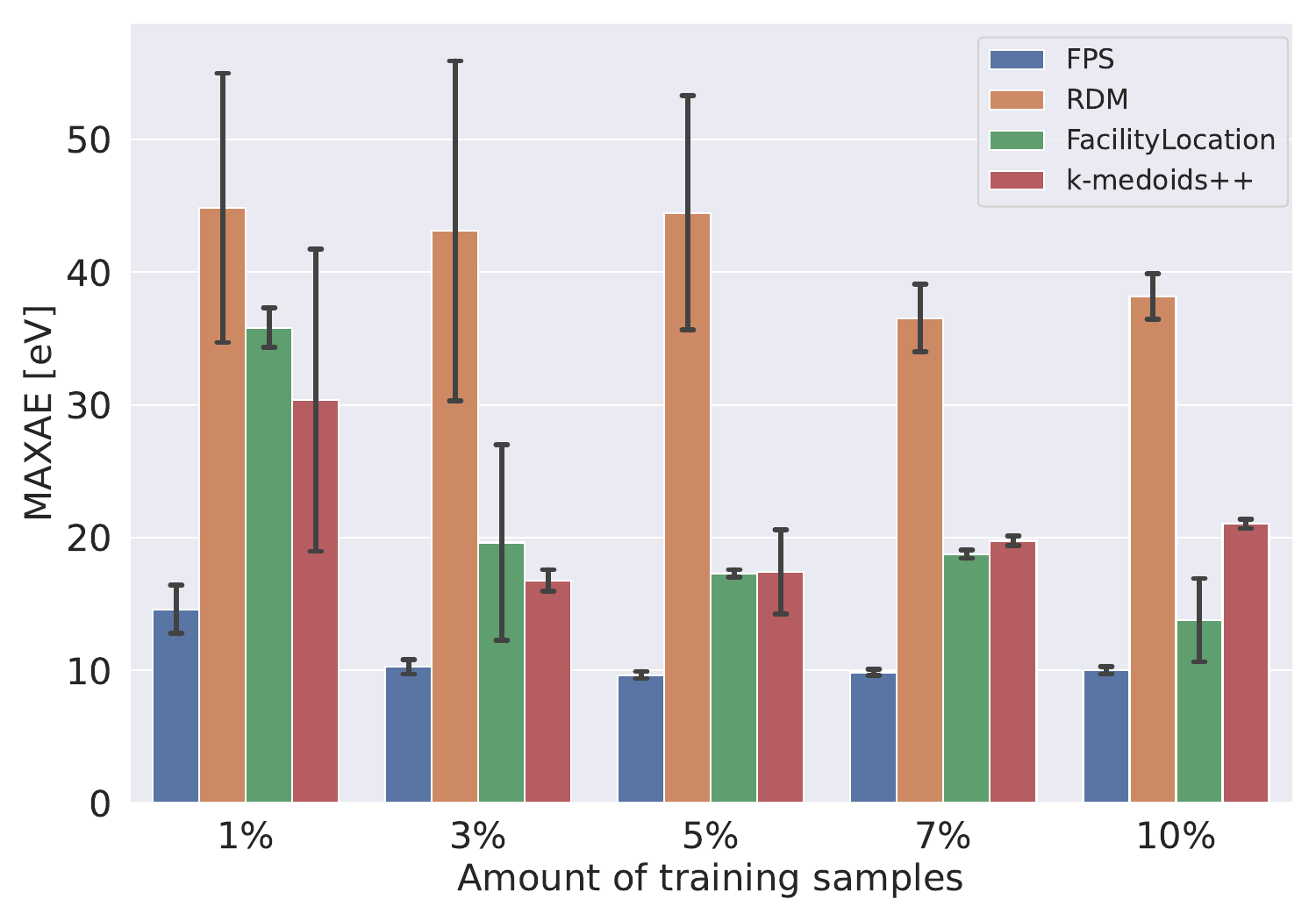}
      \includegraphics[width=\textwidth, height = 3cm]{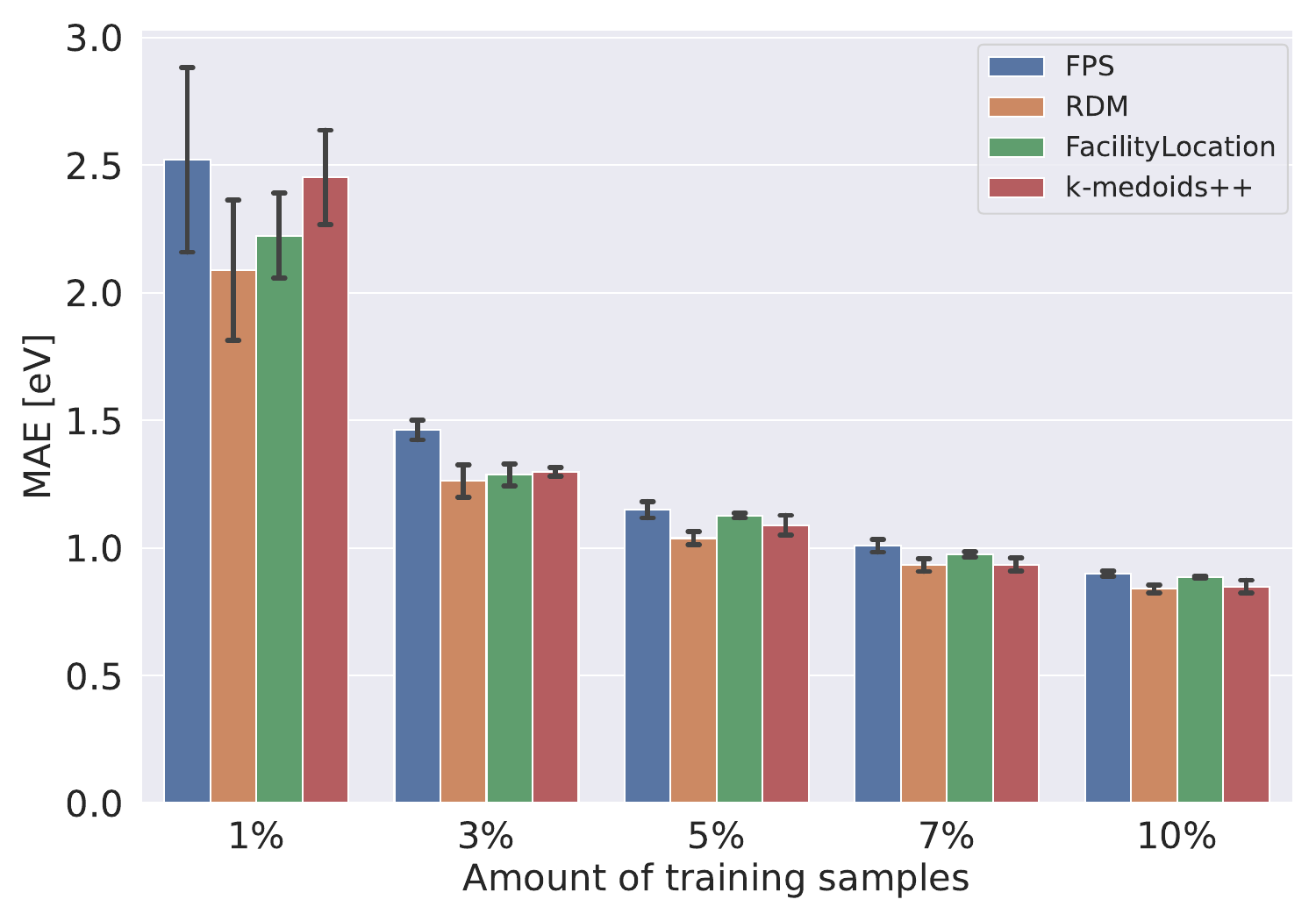}
      \caption{KRR on QM7}
    \end{subfigure}
    \begin{subfigure}[t]{0.30\textwidth}
    \includegraphics[width=\textwidth, height = 3cm]{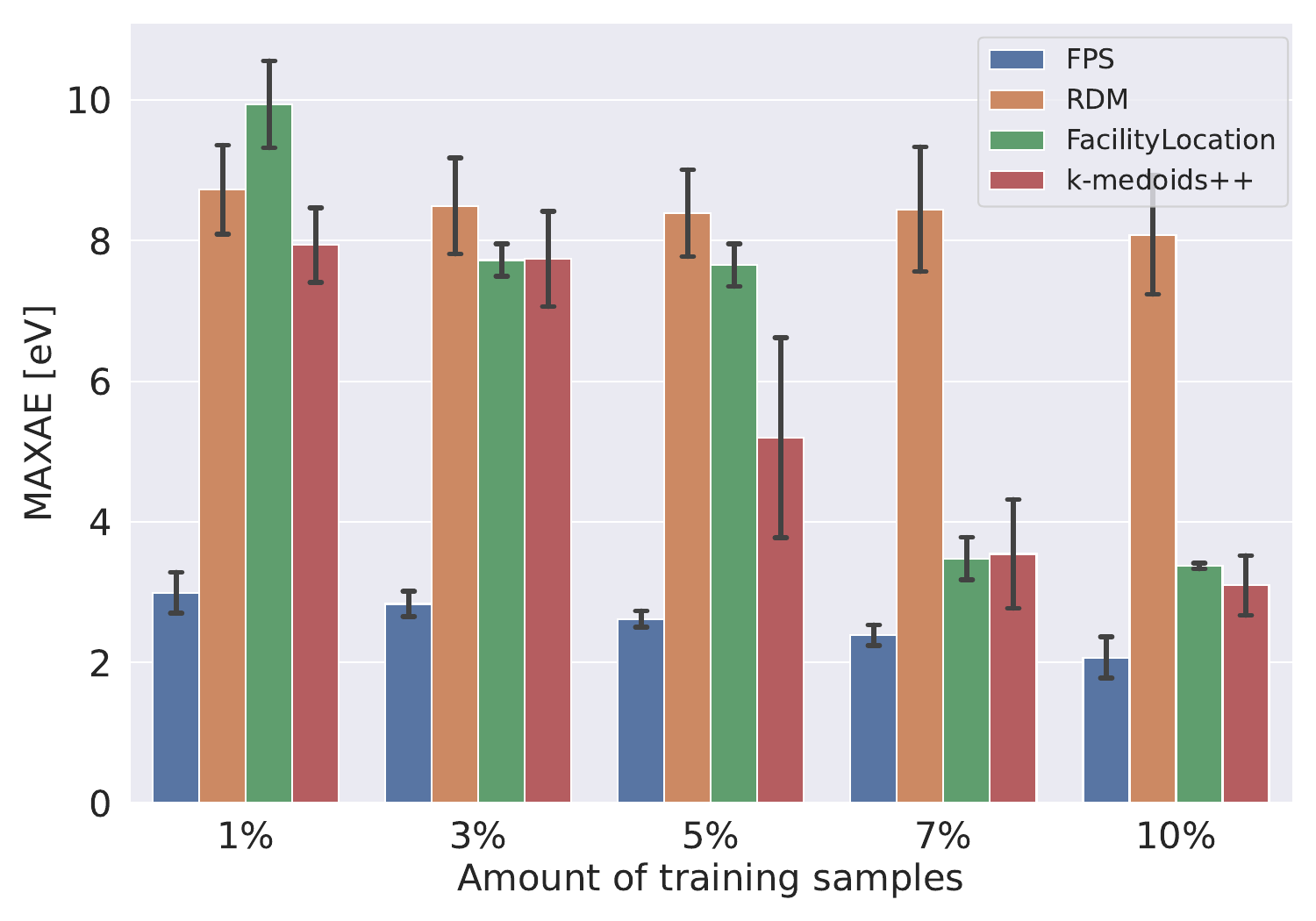}
    \includegraphics[width=\textwidth, height = 3cm]{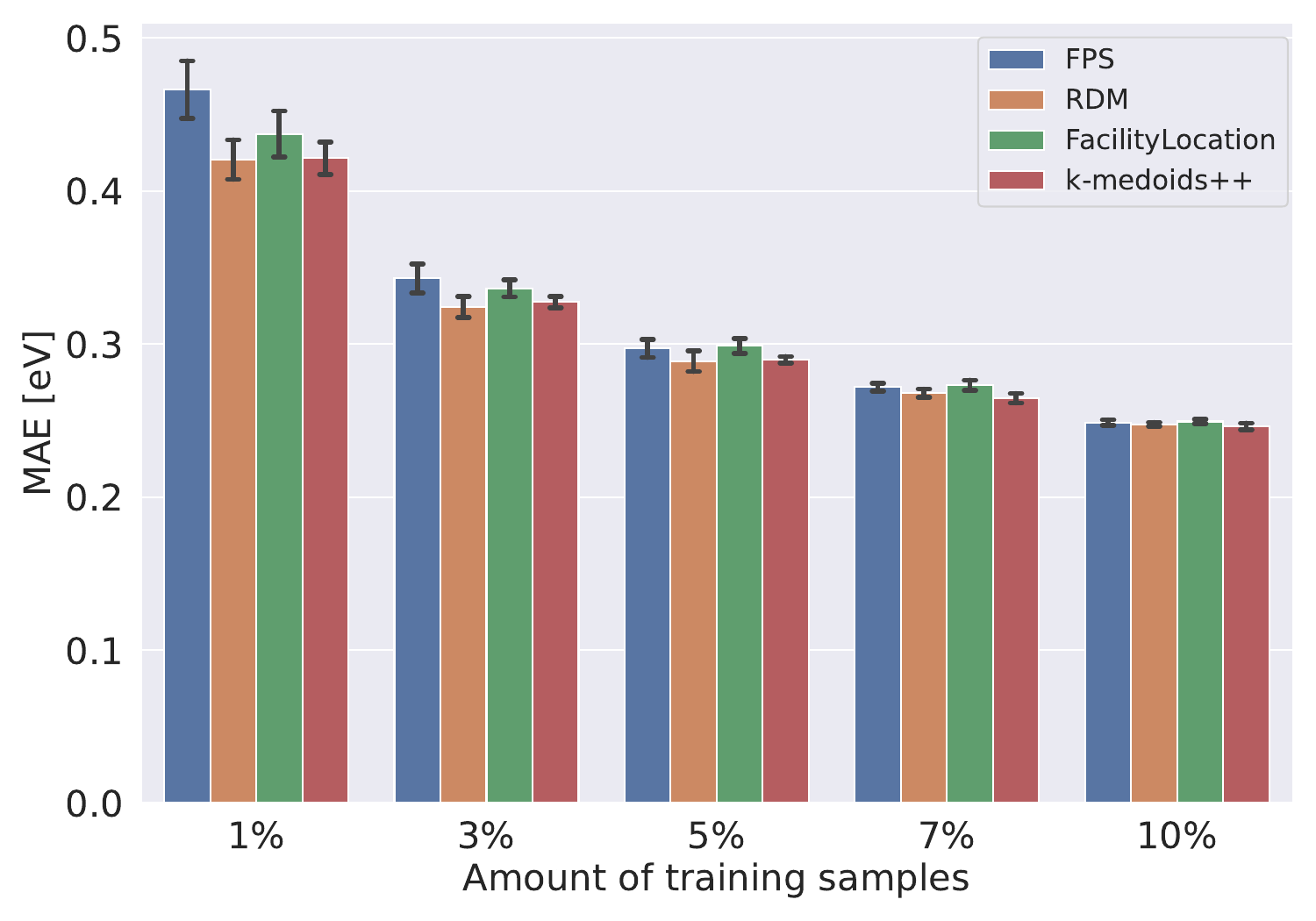}
    \caption{KRR on QM8}
  \end{subfigure}
   \begin{subfigure}[t]{0.30\textwidth}
    \includegraphics[width=\textwidth, height = 3cm]{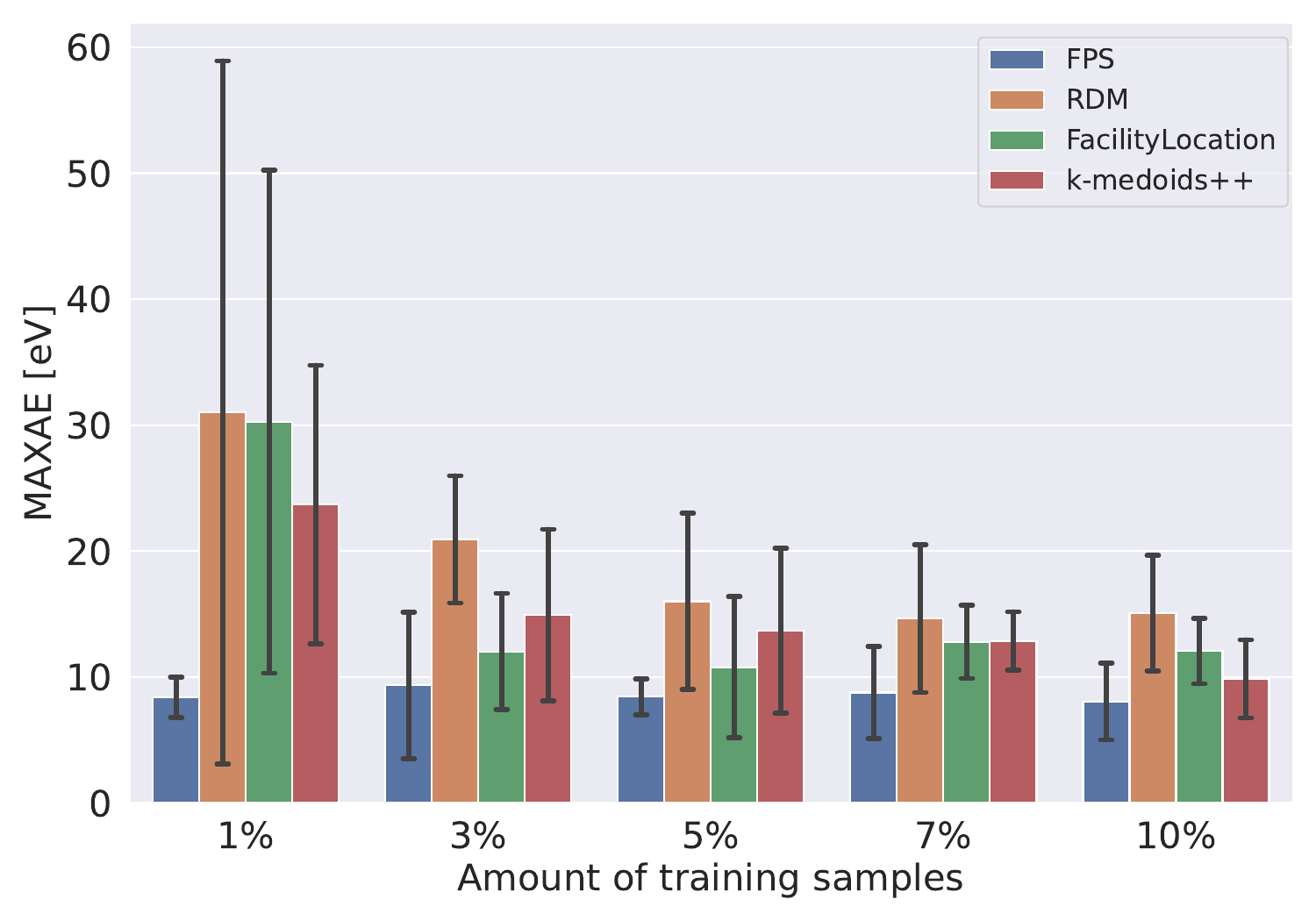}
    \includegraphics[width=\textwidth, height = 3cm]{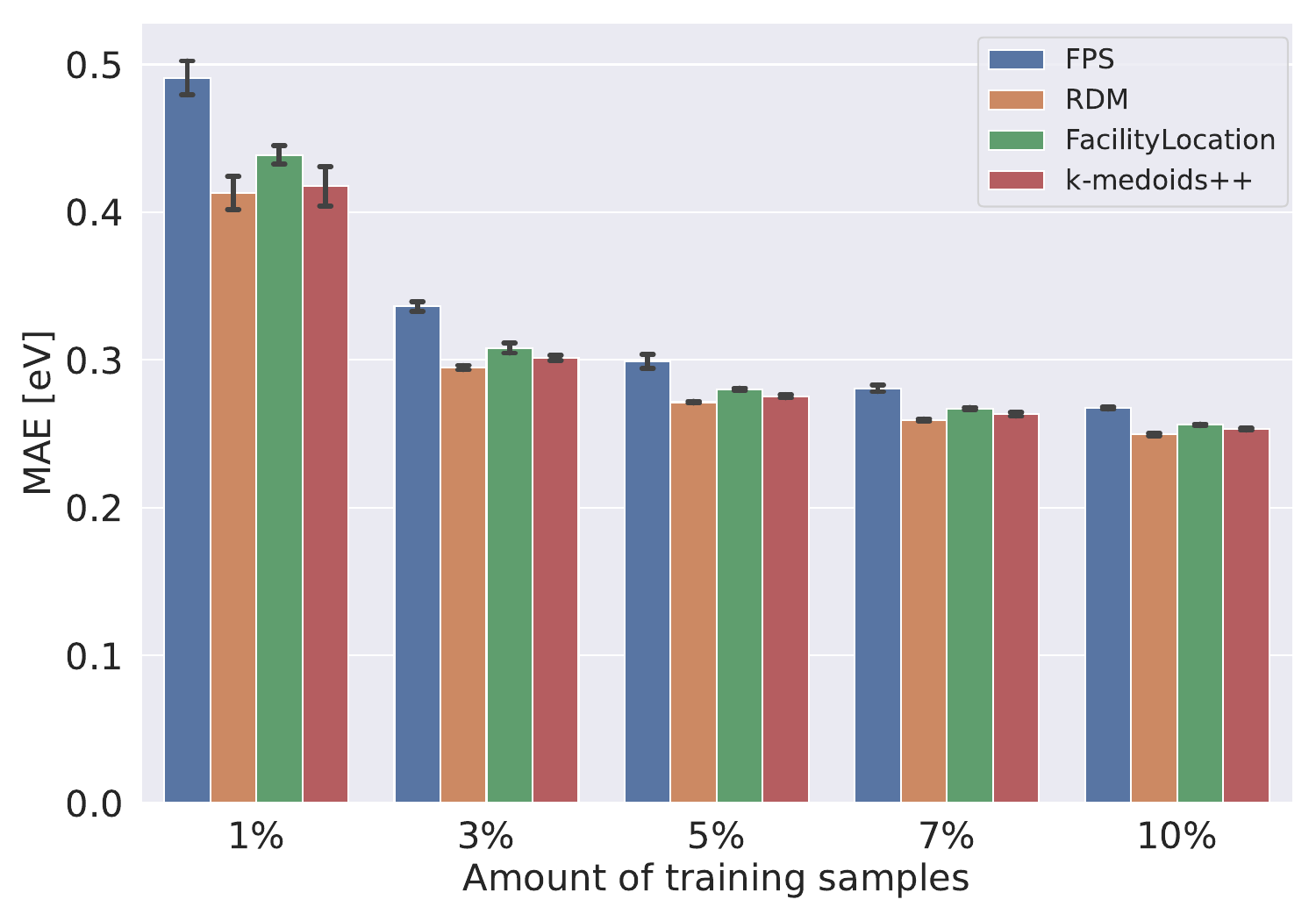}
    \caption{KRR on QM9}
  \end{subfigure}
  \vskip -0.1in
  \caption{Results for regression tasks on QM7, QM8 and QM9 using KRR trained on sets of various sizes, expressed as a percentage of the available data points, and selected with different sampling strategies. MAXAE (top row) and MAE (bottom row) are shown for each training set size and sampling approach.}
  \label{fig:GRR}
  \end{center}
  \vskip -0.2in
\end{figure*}

Fig.~\ref{fig:GRR} and Fig.~\ref{fig:FNN} show the results for the regression tasks on the QM7, QM8 and QM9 datasets using KRR and FNN, respectively. The graphs on the top rows of Fig.~\ref{fig:GRR} and Fig.~\ref{fig:FNN} illustrate the maximum error of the predictions on the unlabelled points. The results suggest that, independently of the dataset and the regression model employed for the regression task, selecting the training set by fill distance minimization using FPS, we can perform better than the other baselines in terms of the maximum error of the predictions. 

The graphs on the bottom row of Fig.~\ref{fig:GRR} and Fig.~\ref{fig:FNN} show the MAE of the predictions on the QM7, QM8 and QM9 datasets for KRR and FNN, respectively. These graphs indicate that selecting training sets with FPS doesn't drastically reduce the MAE of the predictions on the unlabelled points with respect to the baselines, independently of the dataset and regression model. On the contrary, we observe examples where FPS performs worse than one of the baselines, e.g., with the FNN on QM7, QM8 and QM9 when trained with 5\% of the available data points. These experiments suggest that, contrary to what has been shown for classification~\citep{sener2018active}, selecting training sets by fill distance minimization does not provide any significant advantage compared to the baselines in terms of the average error. This marks a fundamental difference between regression and classification tasks regarding the benefits of reducing the training set fill distance.
\begin{figure*}[t]
  \begin{center}
    \begin{subfigure}[t]{0.30\textwidth}
      \includegraphics[width=\textwidth, height = 3cm]{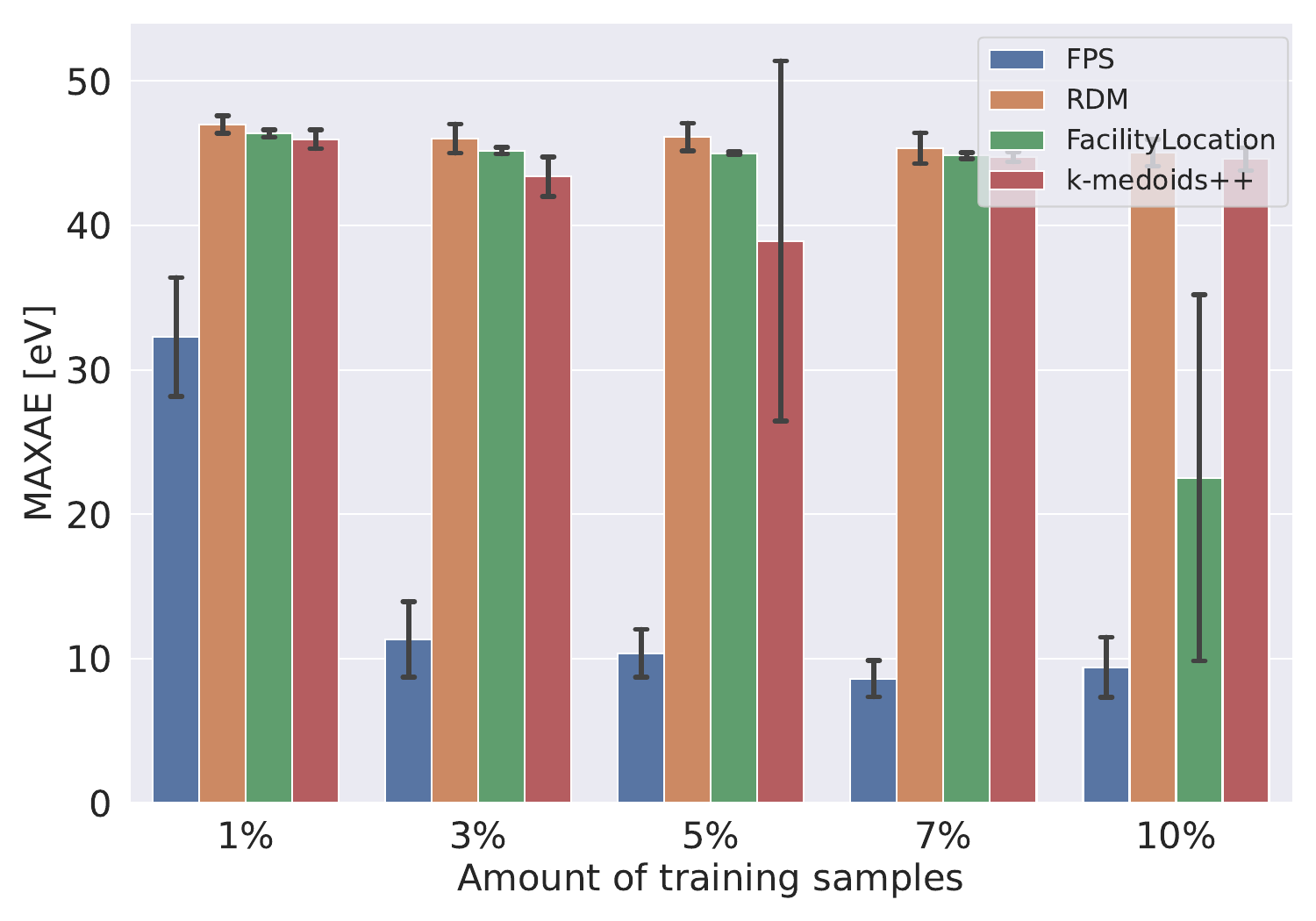}
    \includegraphics[width=\textwidth, height = 3cm]{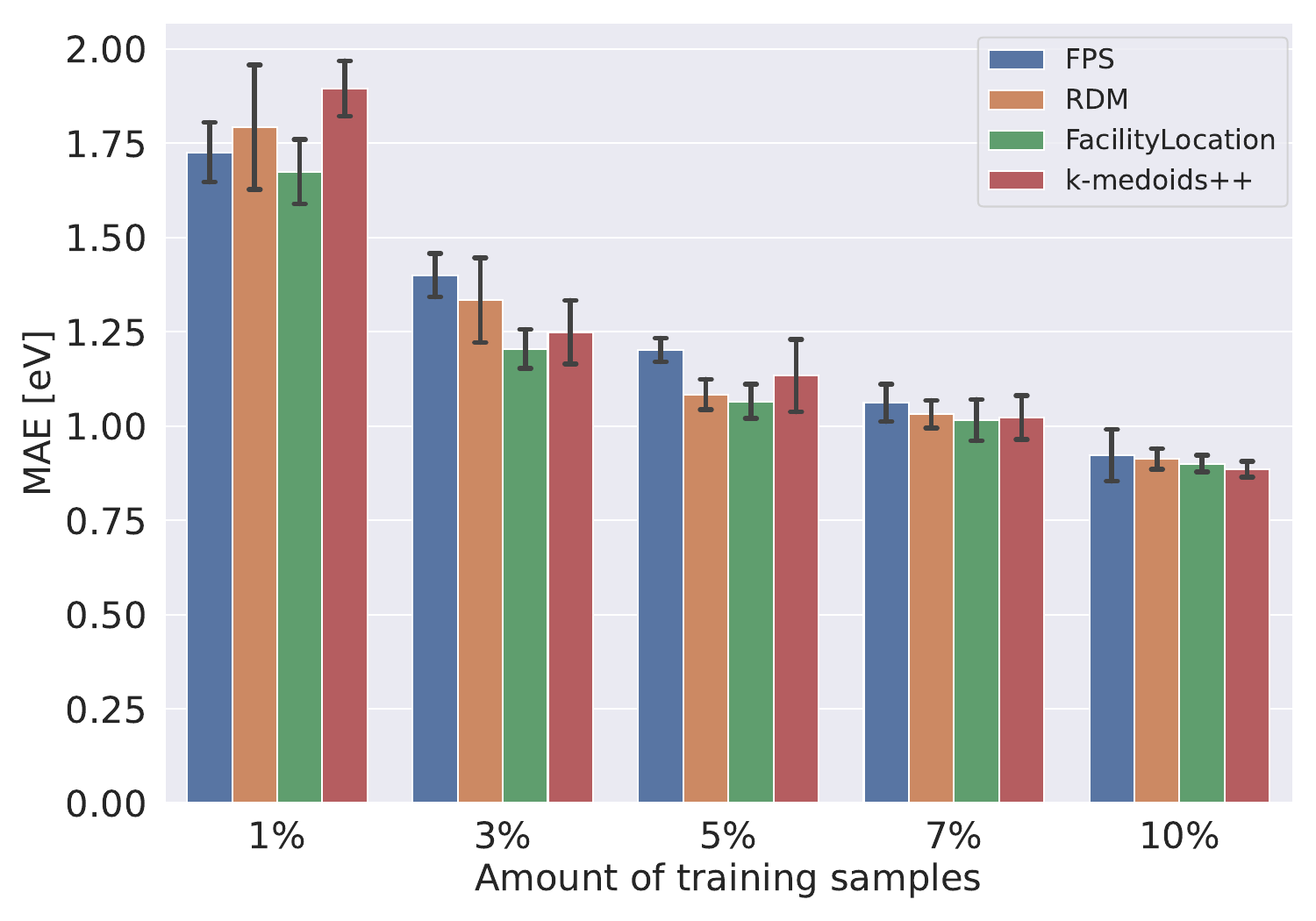}
    \caption{FNN on QM7}
  \end{subfigure}
   \begin{subfigure}[t]{0.30\textwidth}
    \includegraphics[width=\textwidth, height = 3cm]{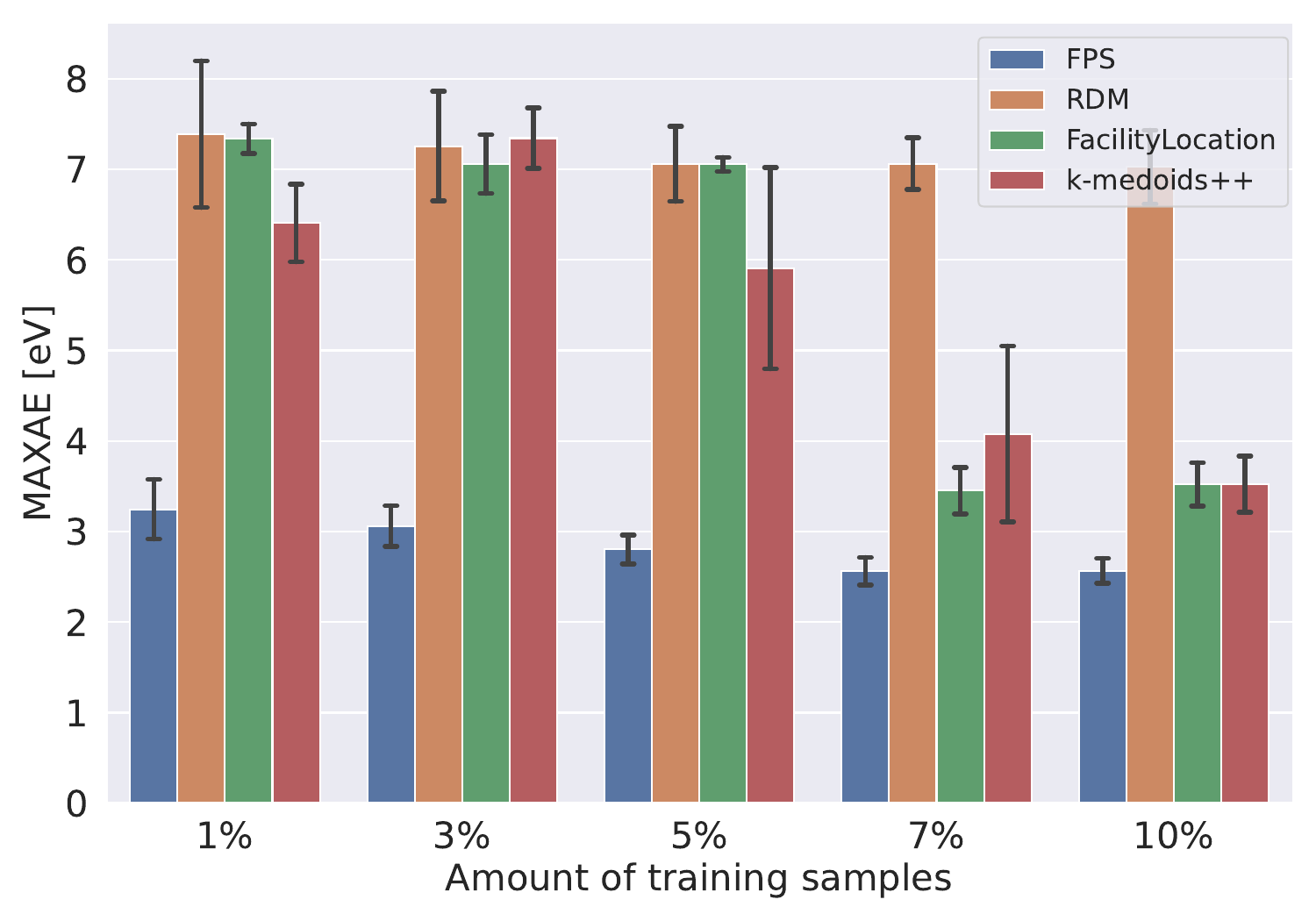}
    \includegraphics[width=\textwidth, height = 3cm]{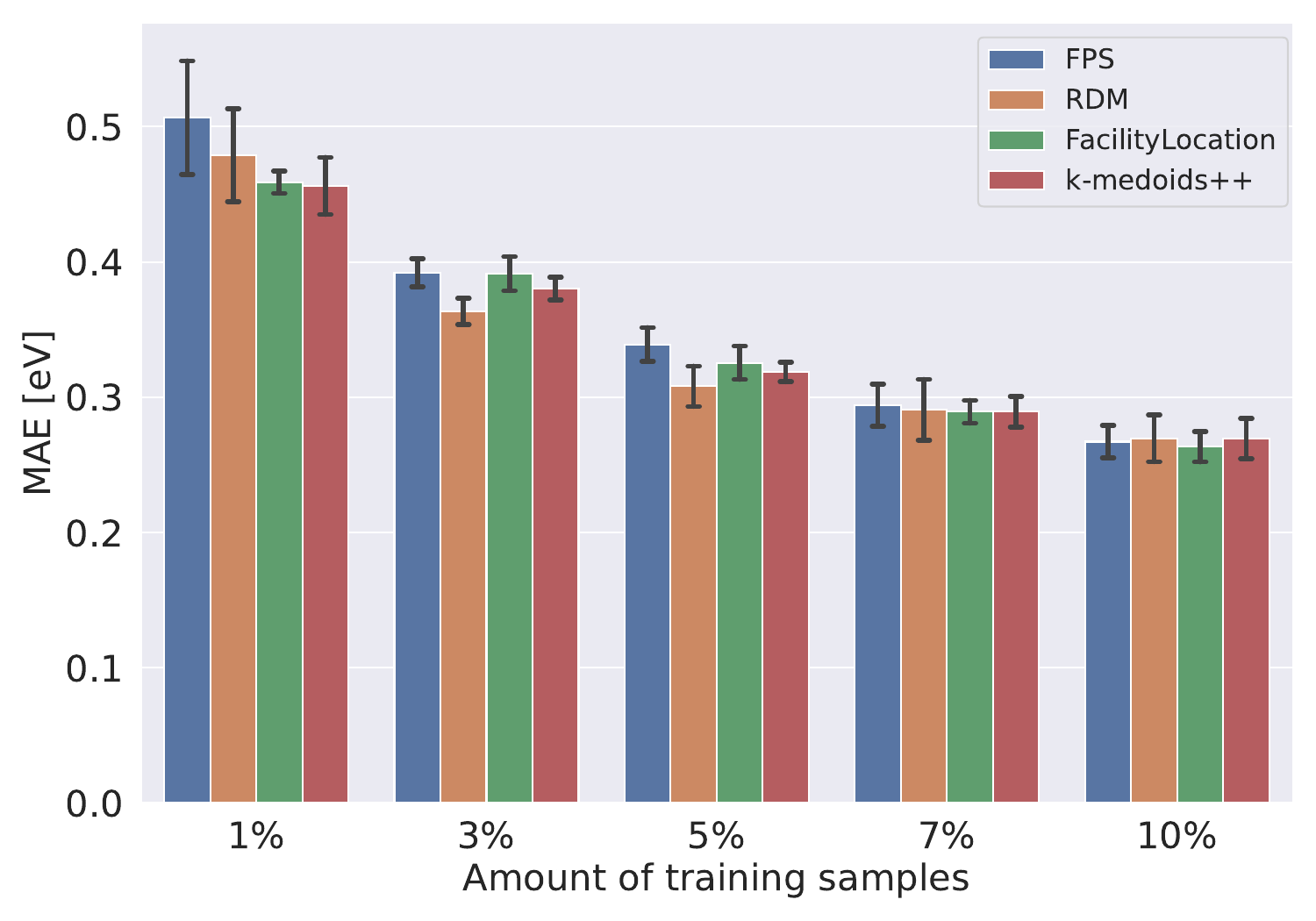}
    \caption{FNN on QM8}
  \end{subfigure}
    \begin{subfigure}[t]{0.30\textwidth}
      \includegraphics[width=\textwidth, height = 3cm]{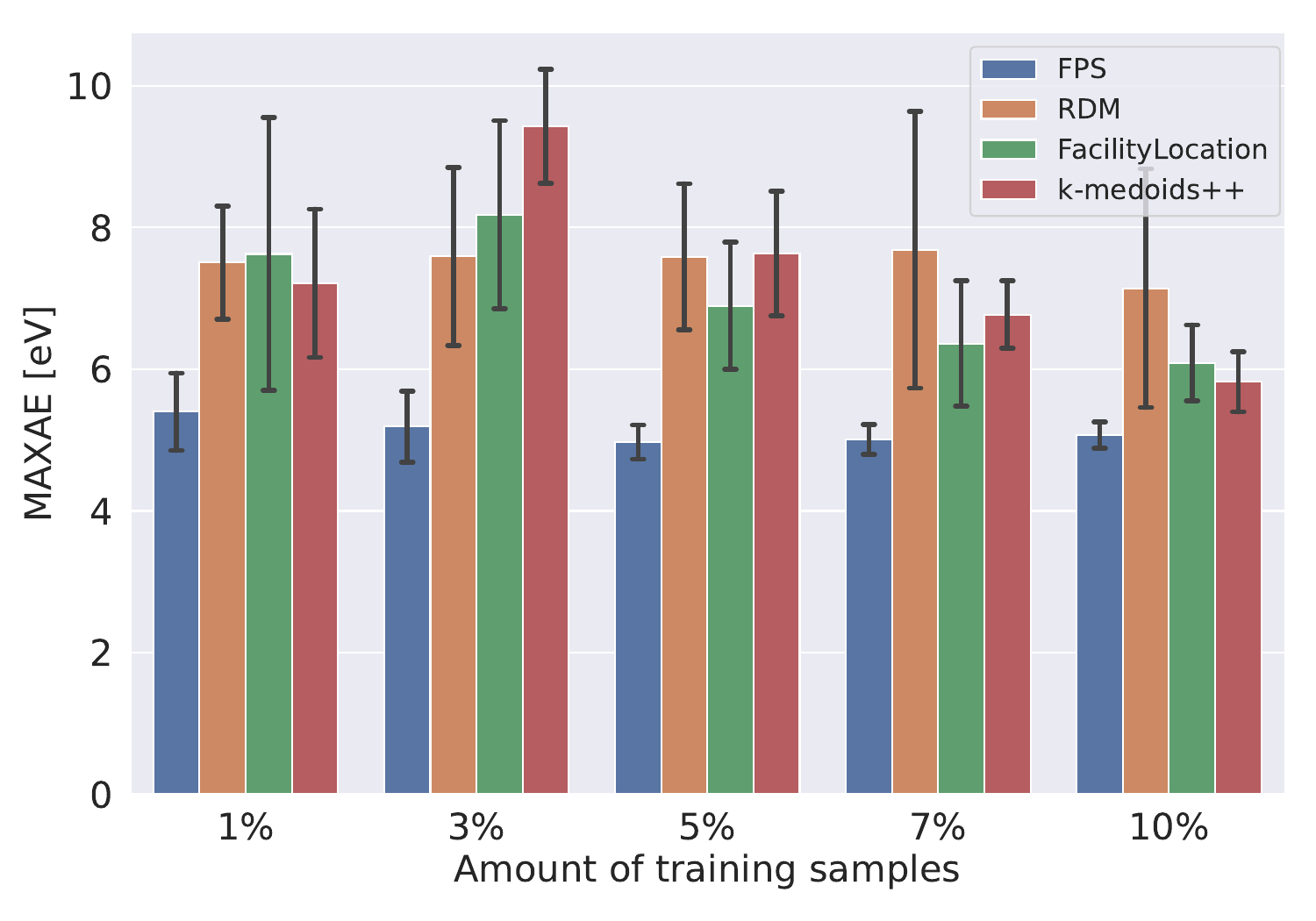}
    \includegraphics[width=\textwidth, height = 3cm]{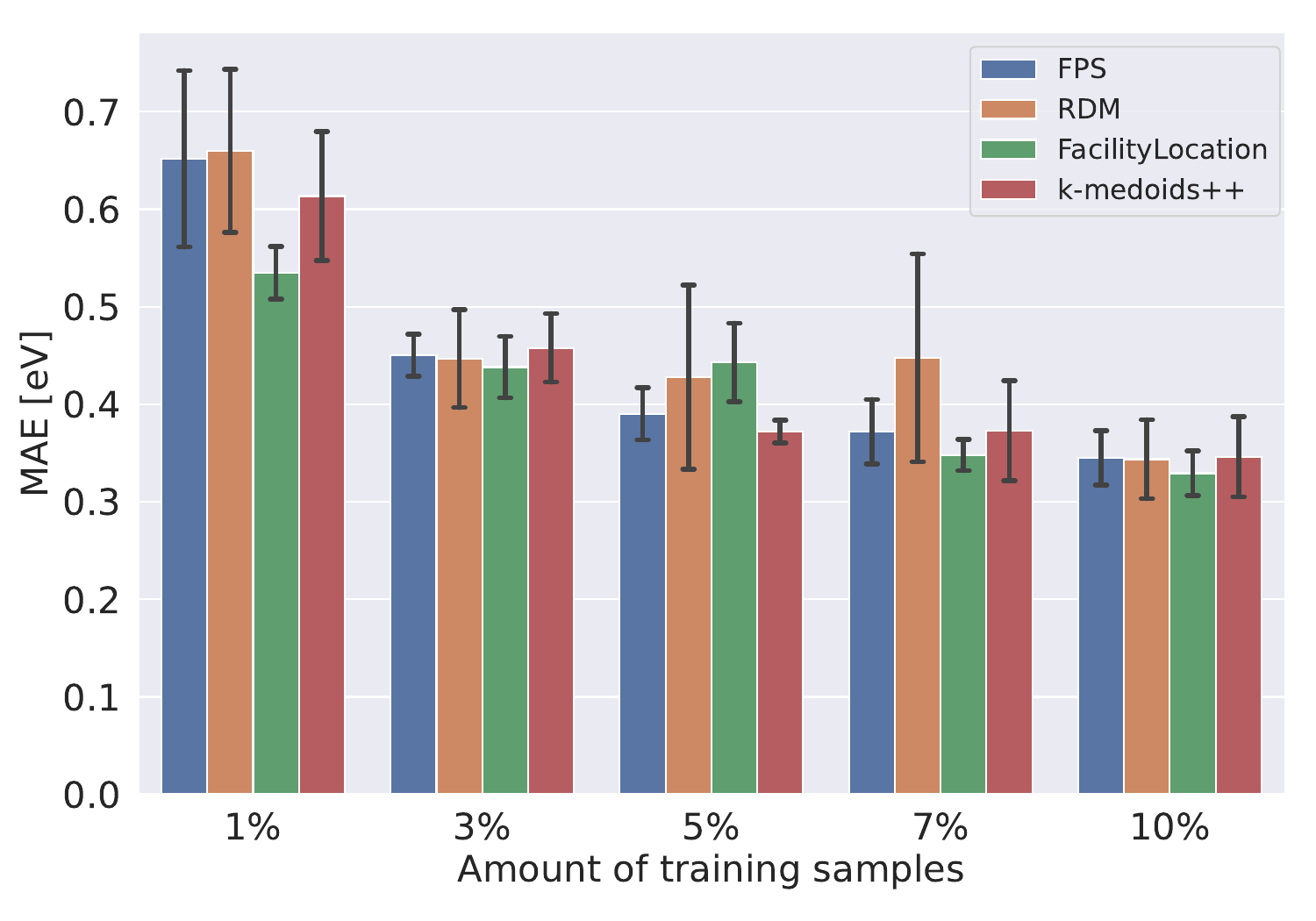}
    \caption{FNN on QM9}
  \end{subfigure}
  \caption{Results regression tasks on QM7, QM8 and QM9 using FNN trained on sets of various sizes, expressed as a percentage of the available data points, and selected with different sampling strategies. MAXAE (top row) and MAE (bottom row) of the predictions are shown for each training set size and sampling approach.}
  \label{fig:FNN}
  \end{center}
  \vskip -0.3in
\end{figure*}
The graphs on the top row of Fig.~\ref{fig:Condition GRR} show the condition number of the regularized kernel matrices generated during training of the KRR approach and used to calculate the regression parameters as shown in (\ref{regression_parametes_KRR}). For QM9, the condition number appears not to be affected by the training dataset choice, while for QM7 and QM8, choosing training sets with FPS reduces the condition number of the regularized kernel, particularly in the low data regime, leading to improved stability of the learned model as discussed in Section~\ref{sect: stability analysis}. We remark that the graphs in the top row of Fig.~\ref{fig:Condition GRR} depict the condition number of the regularized kernel matrix $\pmb{K}_{\cL} + \lambda \pmb{I}$, where $\pmb{K}_{\cL}$ is the Gaussian kernel matrix built from the training data and $\lambda \pmb{I}$ is the regularization term, introduced in (\ref{KRR_minimization}), used to address ill-conditioning problems. The hyperparameter $\lambda$ has been chosen following a procedure based on cross-validation on randomly selected subsets of the available data pool, as explained in \cref{subsect: Regression models}. The values of $\lambda$ we employed are $1.9 \cdot 10^{-4}$, $2.2 \cdot 10^{-3}$ and $1.5 \cdot 10^{-11}$ for QM7, QM8 and QM9, respectively. The bottom row of Fig.~\ref{fig:Condition GRR} illustrates the condition numbers of the non-regularized kernels. From the figure it can be clearly seen that, if no regularization is applied, for QM7 and QM8 the difference between the condition numbers of the matrices obtained with the FPS and those obtained using the benchmark strategies is close to an order of magnitude, as expected from (\ref{bound sep dist}). As for QM9, we still see a lower condition number when using the FPS in the low data limit, until $7\%$ of the data is employed for training. Notice that for QM9, the magnitude of the condition number is significantly higher than for the other datasets due to the larger size of the kernel matrix. It is also important to mention that, in our experiments differences in the conditions numbers are mainly due to differences in the minimum eigenvalues, in line with the theory reported in \cref{sect: stability analysis}.
\begin{figure*}[t]
  \begin{center}
    \begin{subfigure}[t]{0.30\textwidth}
      \includegraphics[width=\textwidth, height = 3cm]{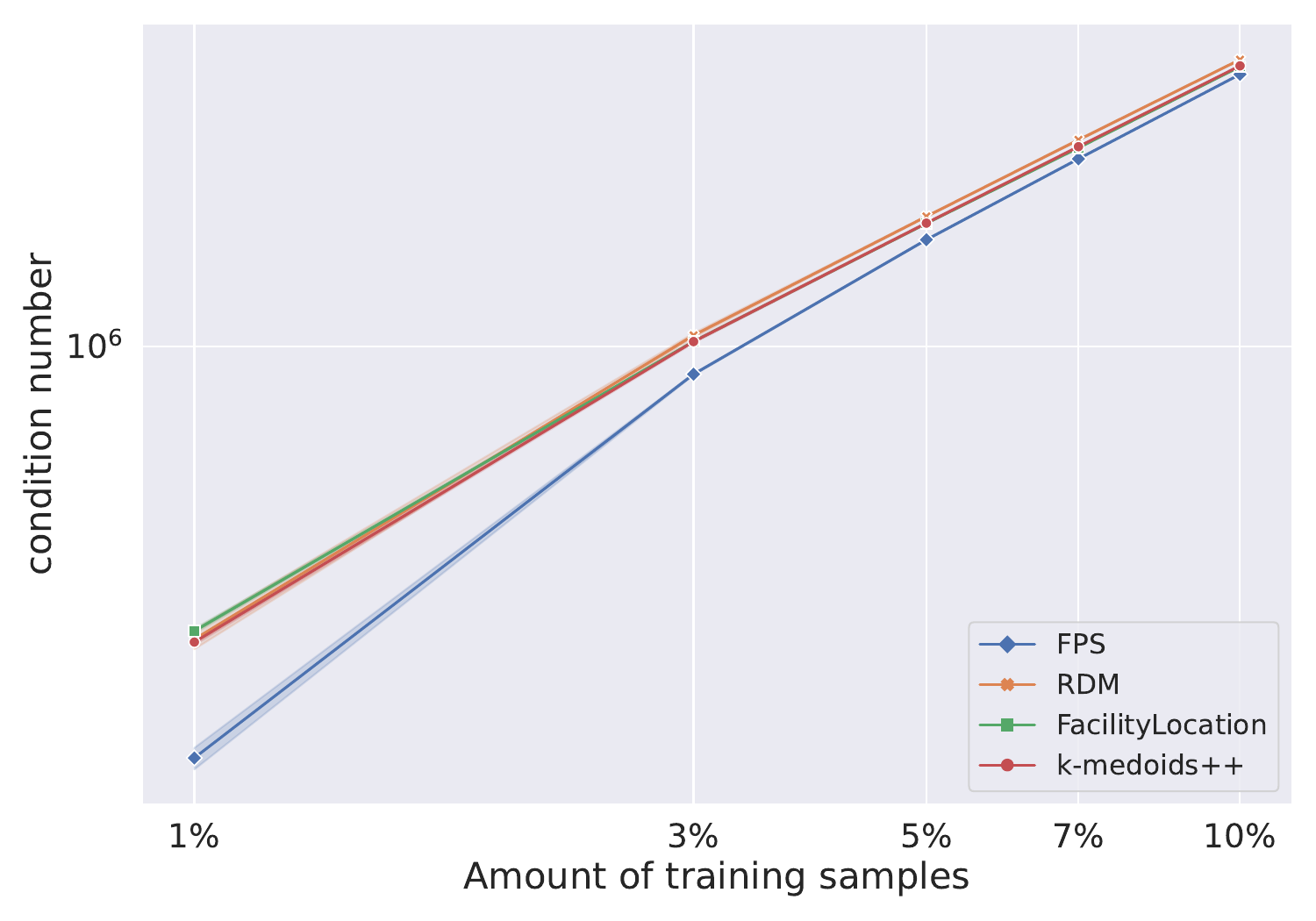}
      \includegraphics[width=\textwidth, height = 3cm]{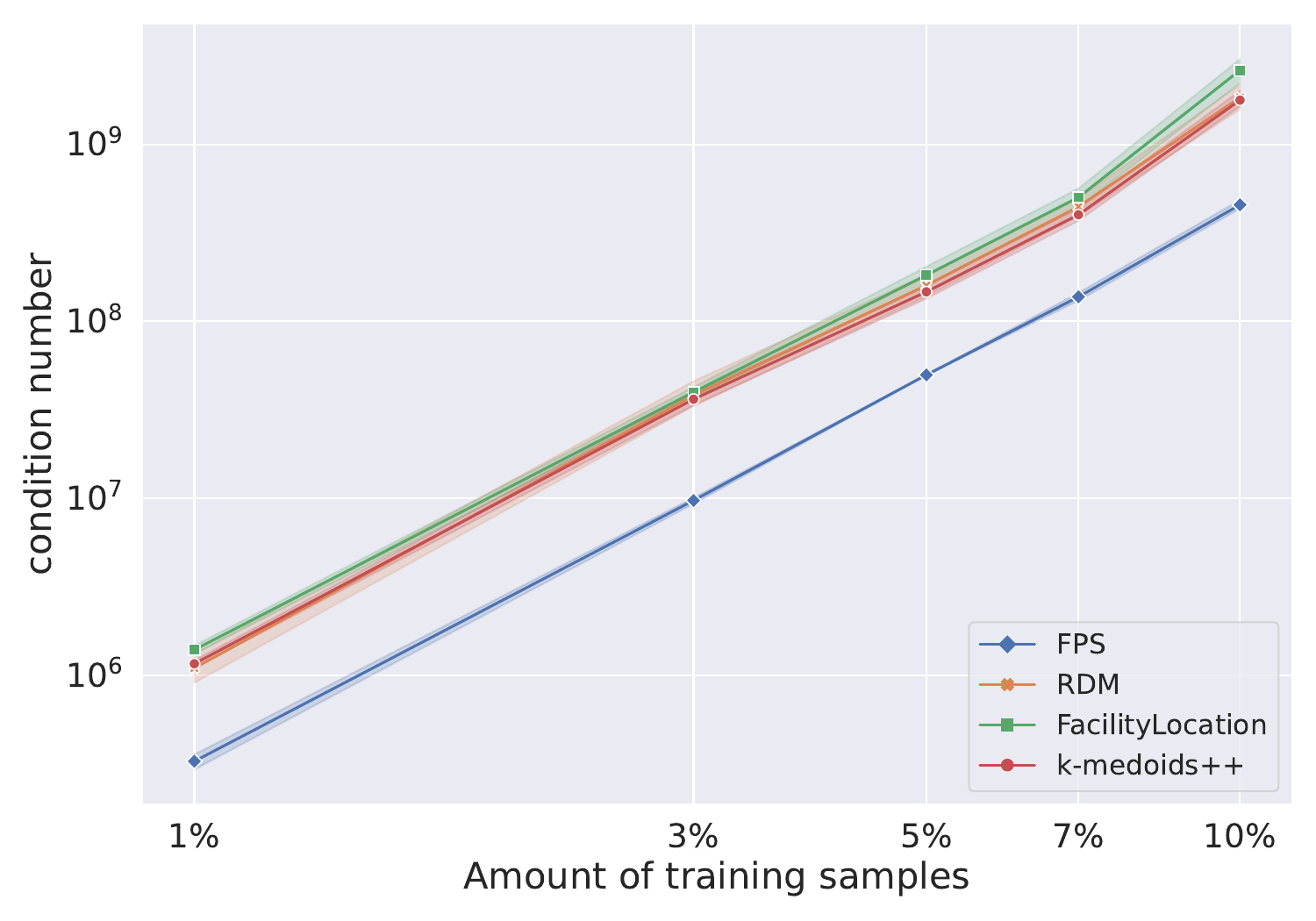}
      \caption{QM7}
    \end{subfigure}
    \begin{subfigure}[t]{0.30\textwidth}
    \includegraphics[width=\textwidth, height = 3cm]{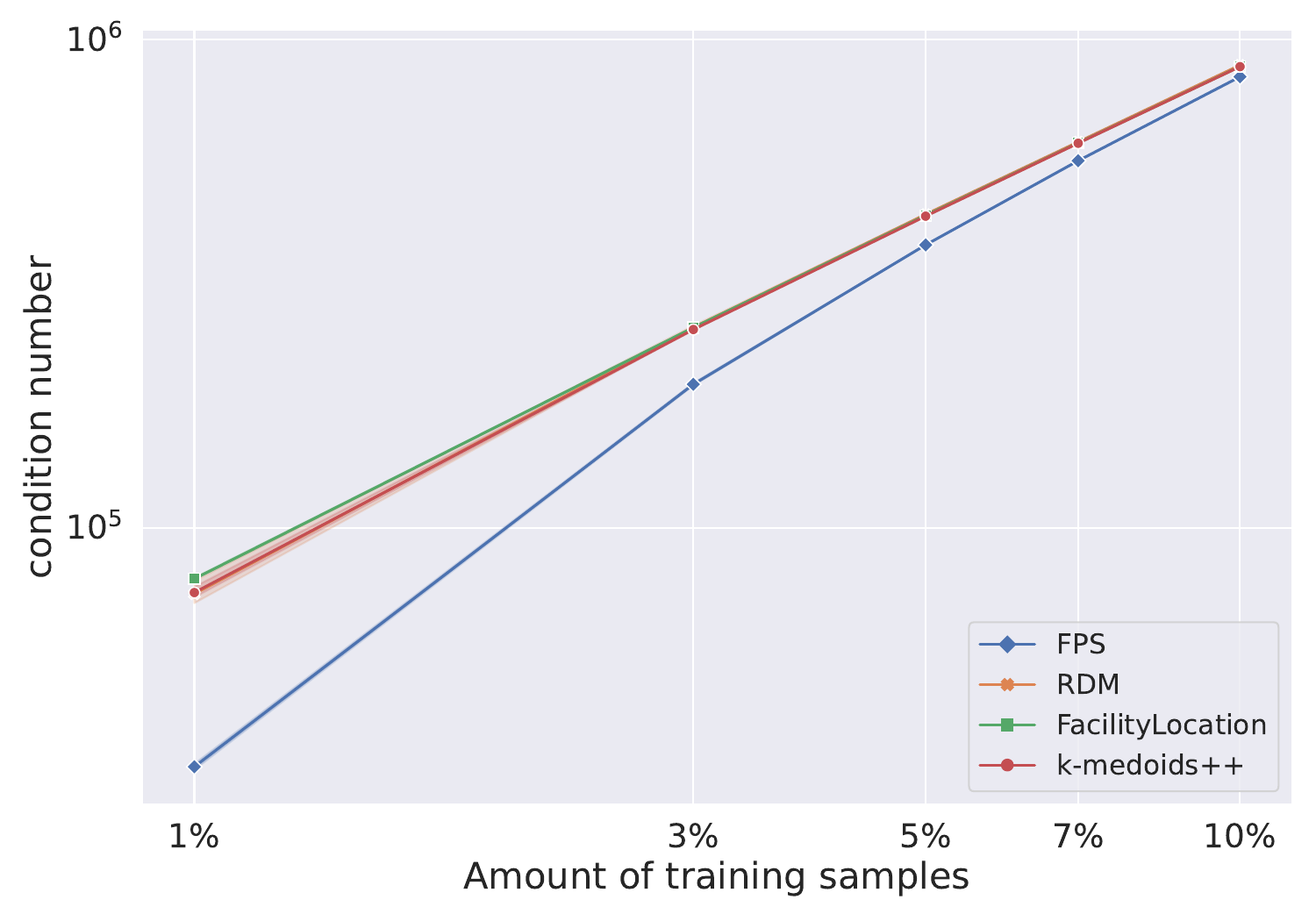}
    \includegraphics[width=\textwidth, height = 3cm]{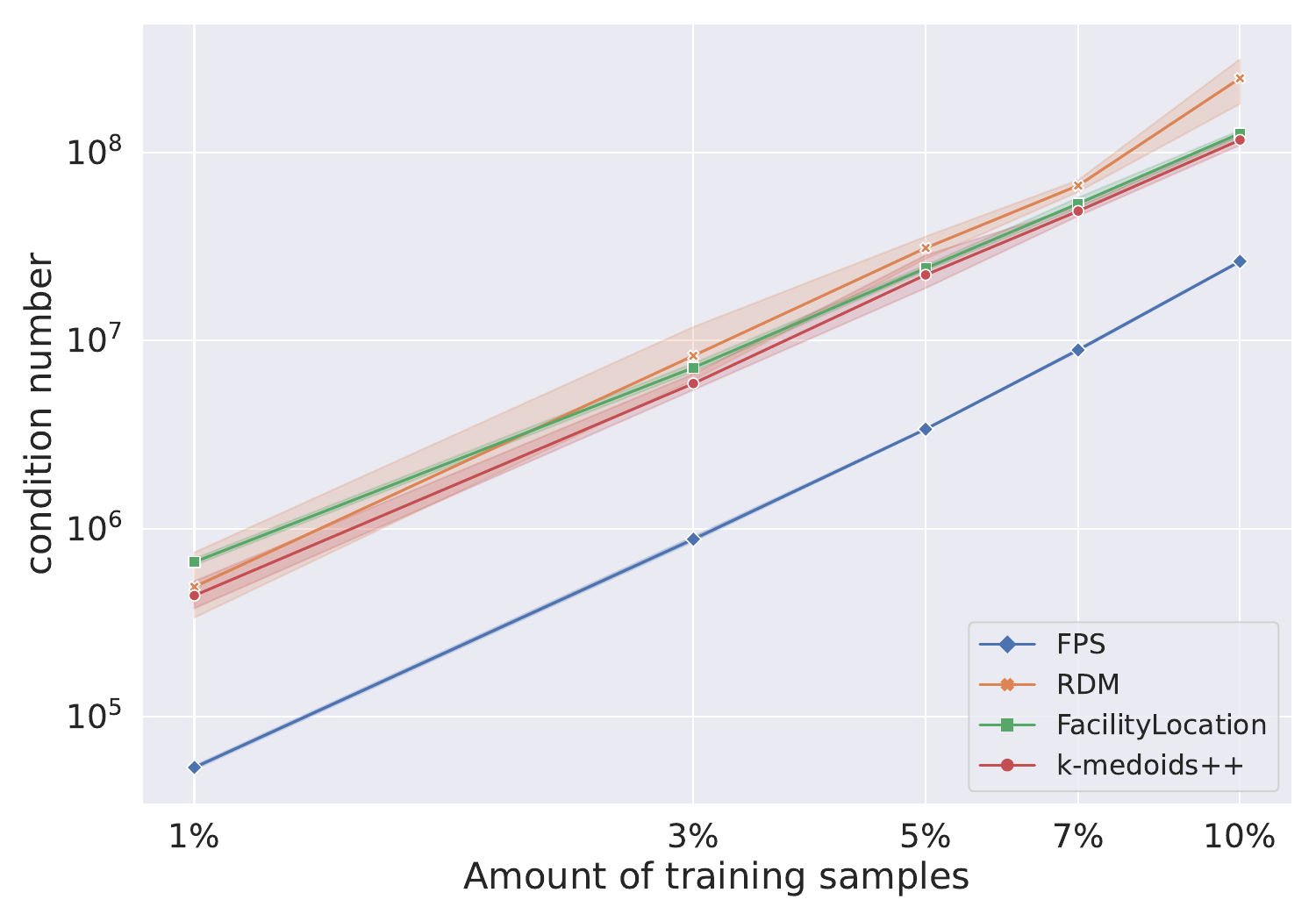}
    \caption{QM8}
  \end{subfigure}
   \begin{subfigure}[t]{0.30\textwidth}
    \includegraphics[width=\textwidth, height = 3cm]{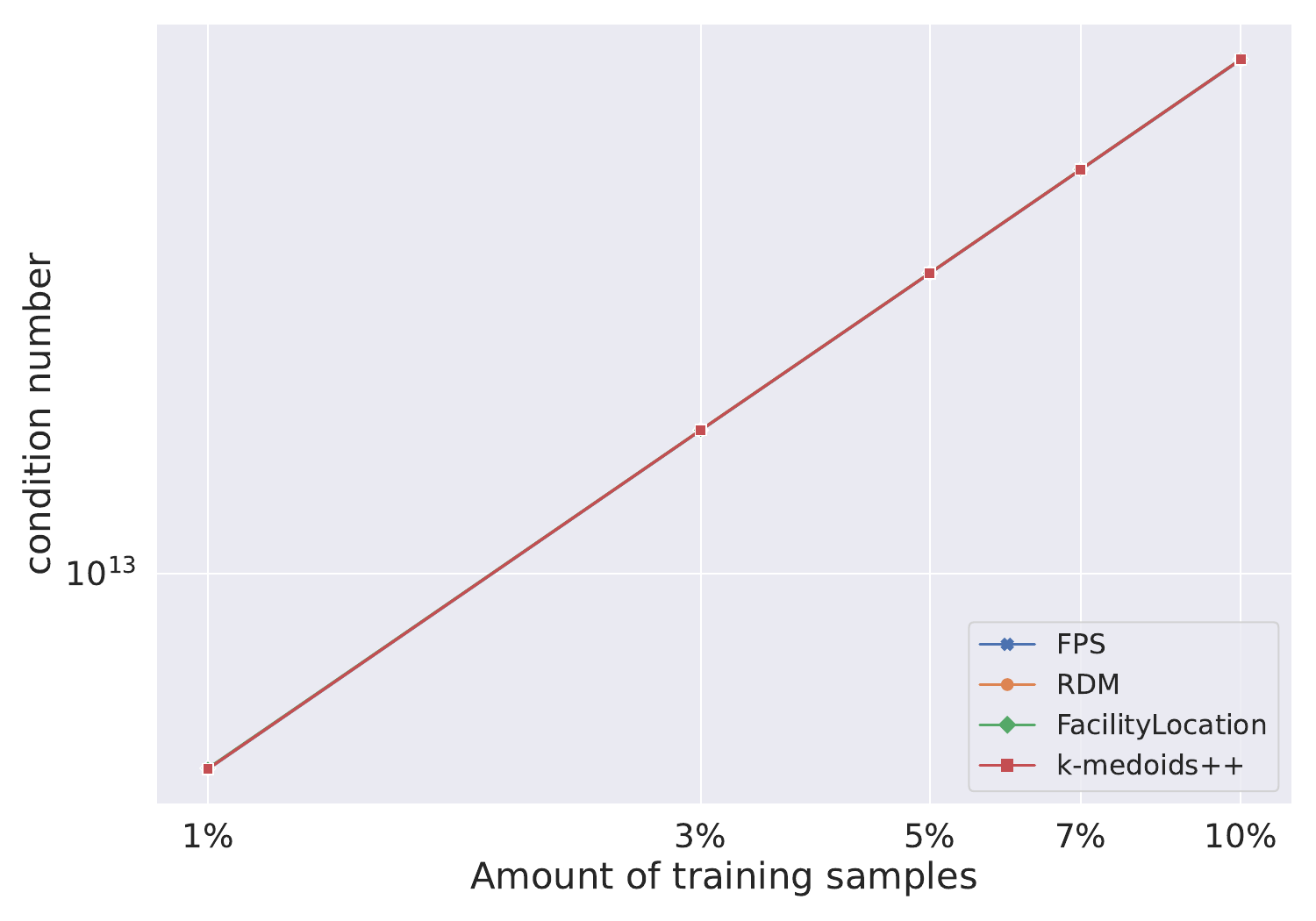}
    \includegraphics[width=\textwidth, height = 3cm]{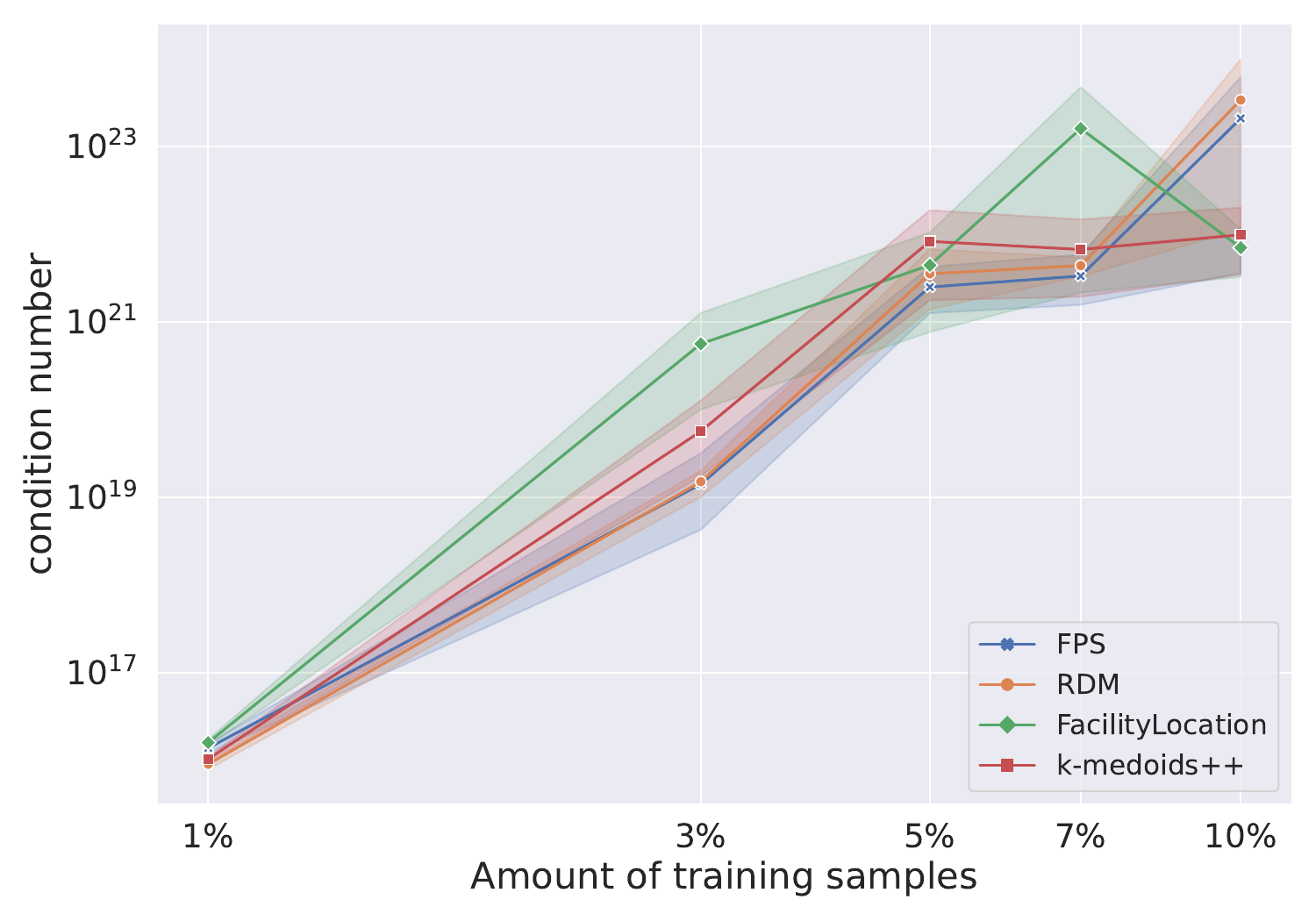}
    \caption{QM9}
  \end{subfigure}
  \vskip -0.1in
  \caption{ Condition number of the regularized (top row) and non-regularized (bottom row) Gaussian kernels are shown for each dataset, training set size and sampling approach. The graphs are on log-log scale and the error bands represent the confidence interval over five independent runs of the experiments. }
  \label{fig:Condition GRR}
  \end{center}
  \vskip -0.2in
\end{figure*}
\subsubsection{Empirical analysis and discussion}
\label{empirical_discussion}
This section further examines the empirical results presented in \cref{molecular_prediction}. Specifically, we identify the overall trends of the MAXAE and relate them to our theoretical study, focusing on their connection with the concept of fill distance of the training set. We emphasize what we think should be the practical application of our theoretical result. Next, we analyze the benefit of employing the FPS from a more empirical perspective, focusing on understanding how the FPS selection process works, that is, what points are prioritized during the selection process, how they are distributed, and what consequence this has on the learning process of a given regression model. Next, based on the observations of our empirical study, we discuss the limitations of the FPS.

Interestingly, with FPS, the MAXAE converges fast to a plateau value for all datasets and regression models (\cref{fig:GRR,fig:FNN}). Differently, with the baseline approaches, the MAXAE has much larger values in the low data regime and tends to decrease gradually as the size of the training sets increases. It is important to notice that these trends of the MAXAE of the predictions are directly correlated with the fill distances of the respective labelled sets used for training, illustrated in Fig.~\ref{fig:fill_distance_a}. From Fig.~\ref{fig:fill_distance_a} it can be clearly seen that independently of the dataset considered, with FPS, the fill distances are consistently lower even for small data budgets, while with the benchmarks, the fill distances are much larger in the low data regime and gradually decrease as the size of the training set increases. These observations indicate that the training set fill distance is directly correlated with the maximum error of the predictions on the unlabelled set. Consequently, by minimizing the training set fill distance, we can drastically reduce the MAXAE of the predictions. Nevertheless, our theoretical analysis shows that the training set fill distance is only linked to the maximum expected value of the error function computed on the unlabelled points. Moreover, this bound also depends on other quantities we may not know or that we cannot compute a priori. Namely, the labels uncertainty and the maximum prediction error on the training set, quantifying how well the trained regression model fits the training data. Thus, we believe that the training set fill distance should not be considered as the only parameter to obtain an a priori quantitative evaluation of the MAXAE of the predictions, but as a qualitative indicator of the model robustness that, if minimized, leads to a substantial reduction of the MAXAE.

\begin{figure*}[t]
  \begin{center}
    \begin{subfigure}{0.30\textwidth}
     \includegraphics[width=\textwidth, height = 3cm]{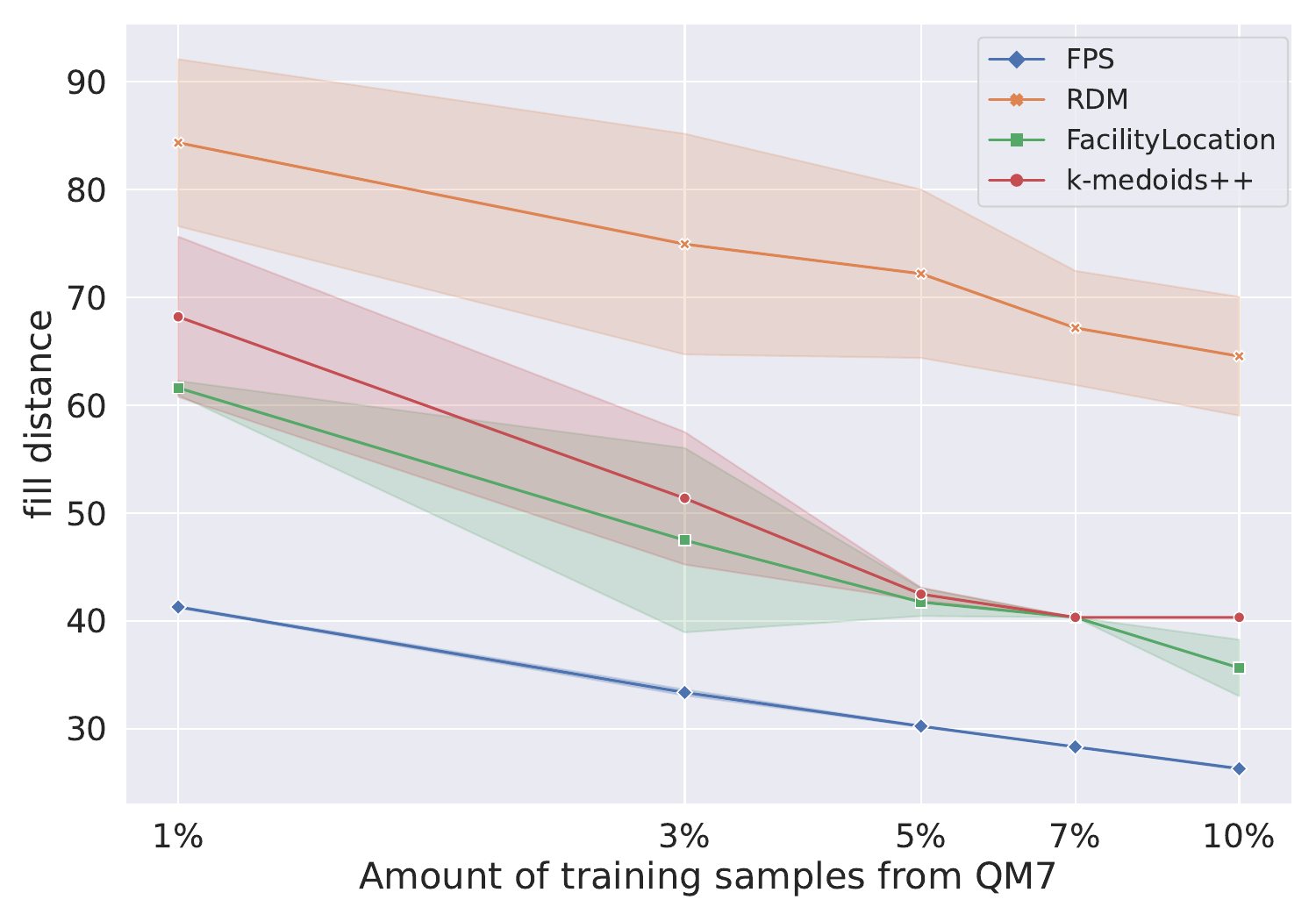}
      \includegraphics[width=\textwidth, height = 3cm]{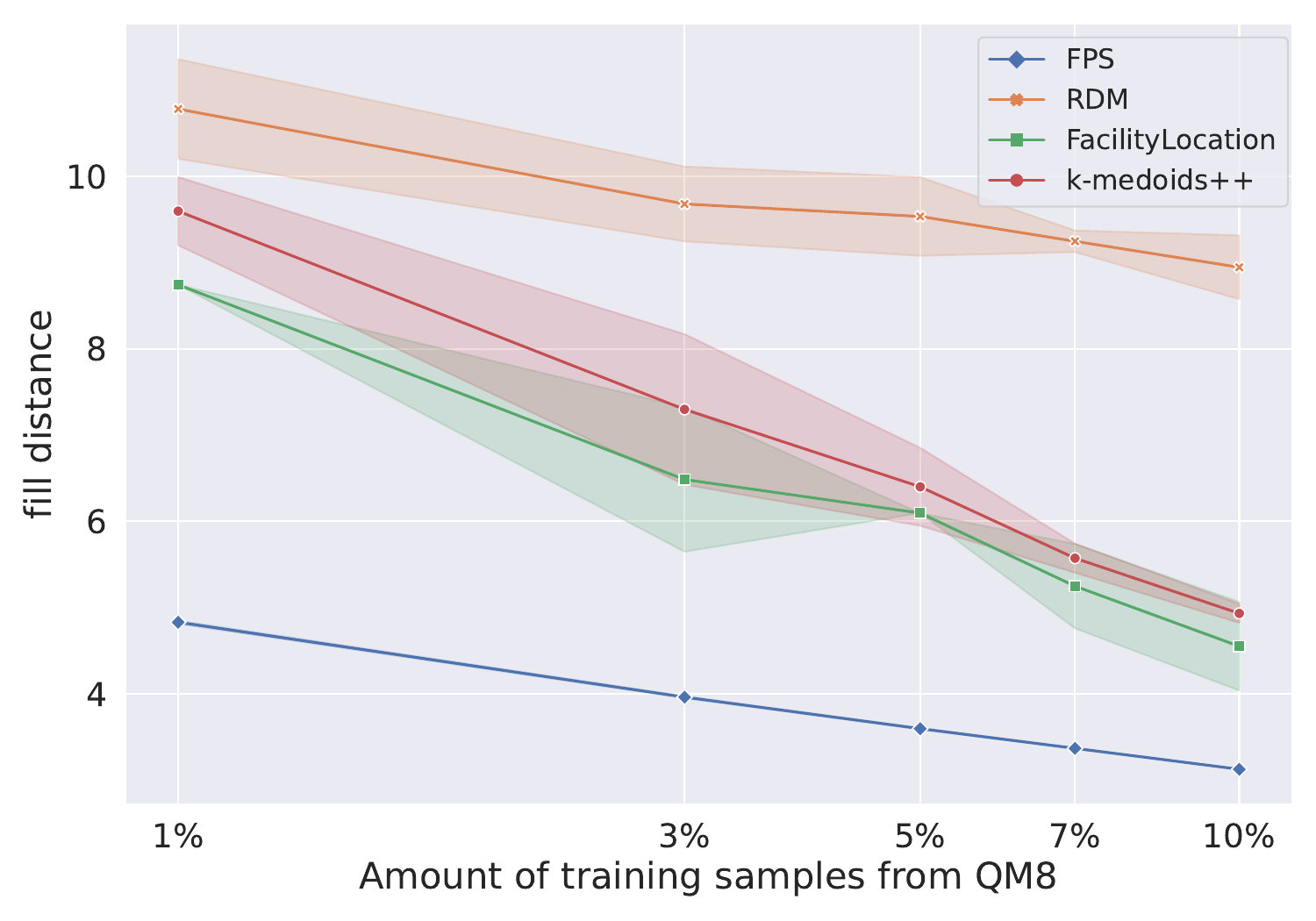}
      \includegraphics[width=\textwidth, height = 3cm]{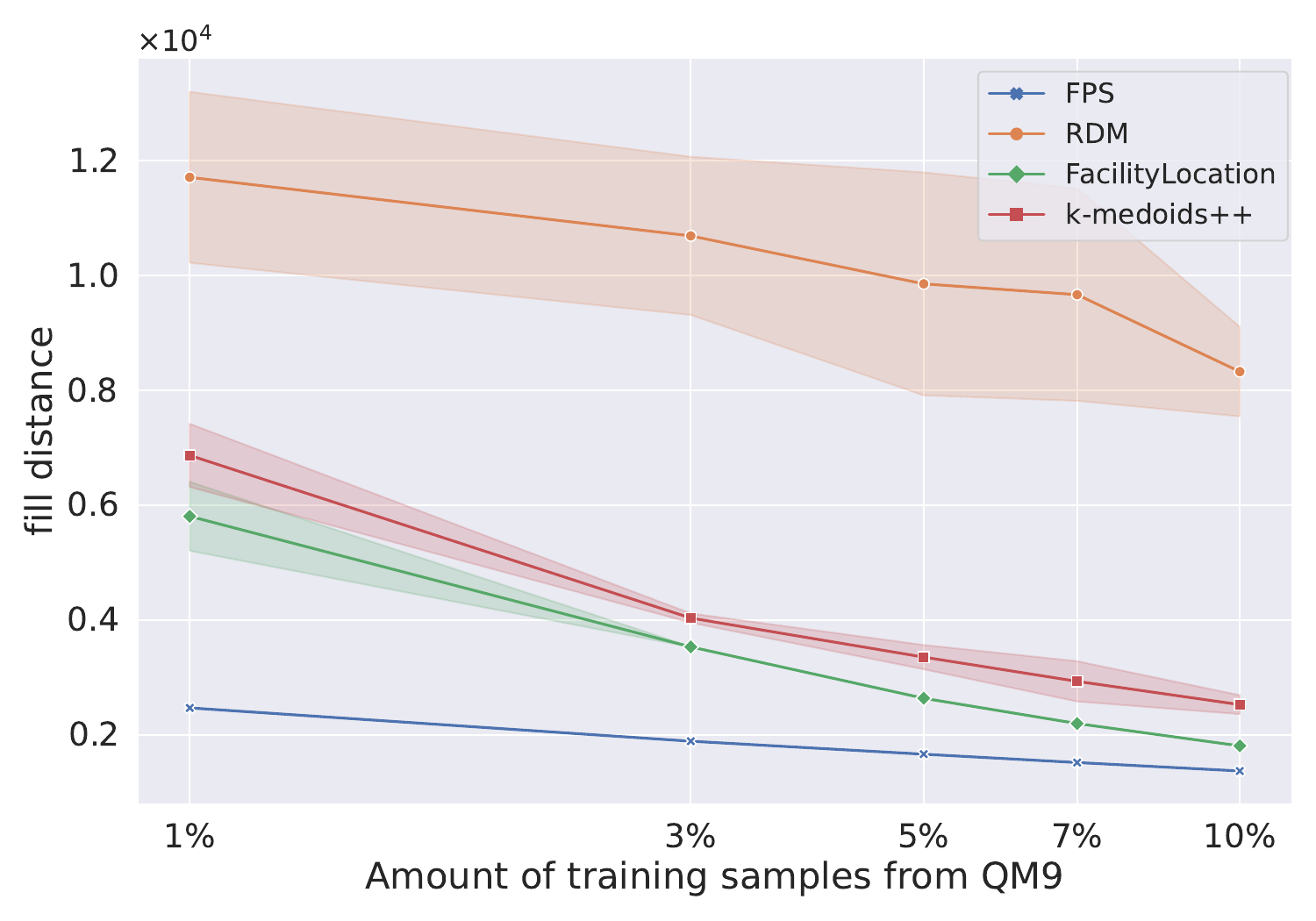}
      \caption{\centering Training sets fill distances. }\label{fig:fill_distance_a}
    \end{subfigure}
    \begin{subfigure}{0.30\textwidth}
      \includegraphics[width=\textwidth, height = 3cm]{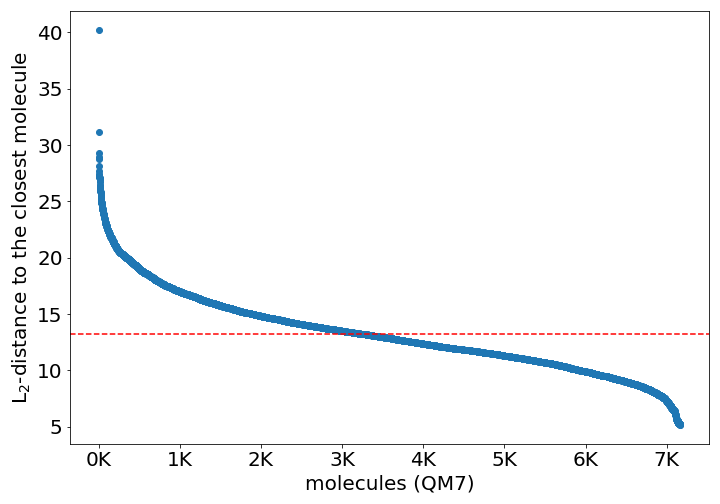}\\
      \includegraphics[width=\textwidth, height = 3cm]{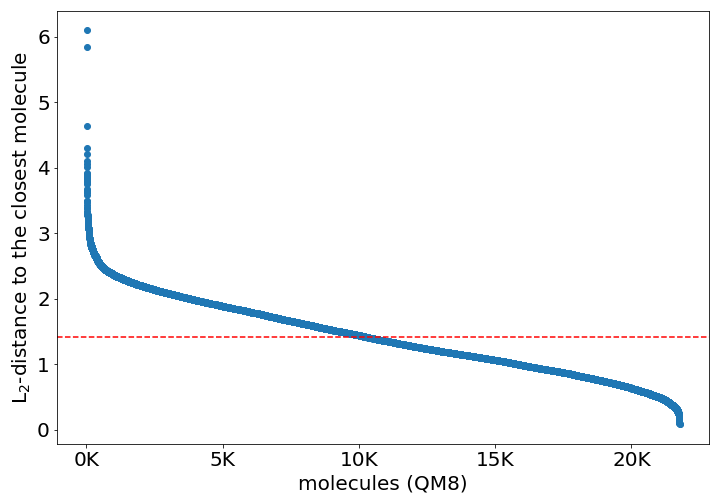}\\
      \includegraphics[width=\textwidth, height = 3cm]{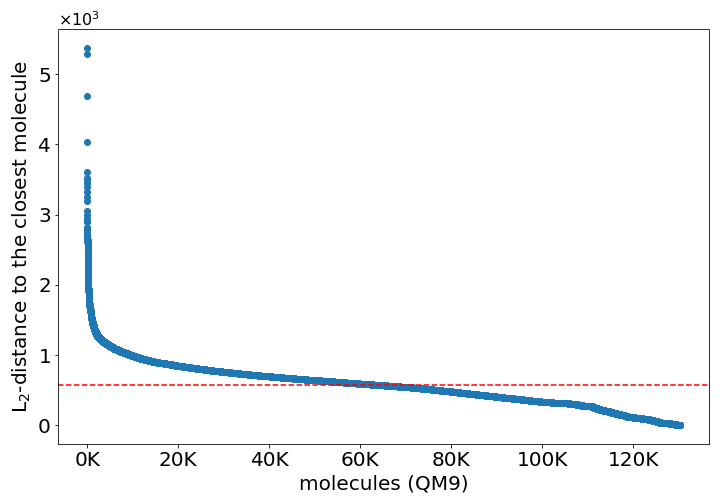}
      \caption{\centering Distances to the nearest neighbour.}\label{fig:fill_distance_b}
    \end{subfigure}
  \begin{subfigure}{0.30\textwidth}
    \includegraphics[width=\textwidth, height = 3cm]{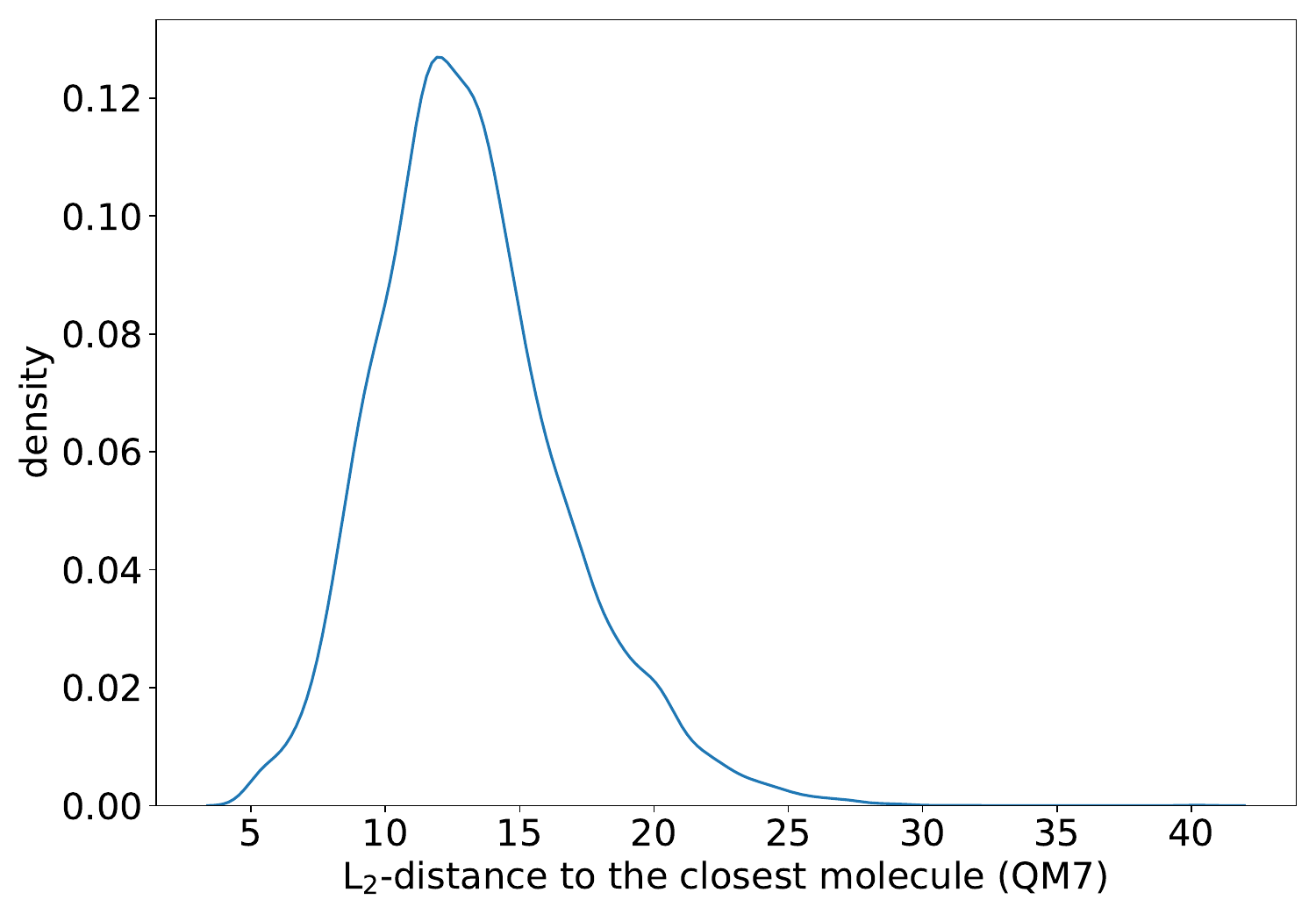}\\
    \includegraphics[width=\textwidth, height = 3cm]{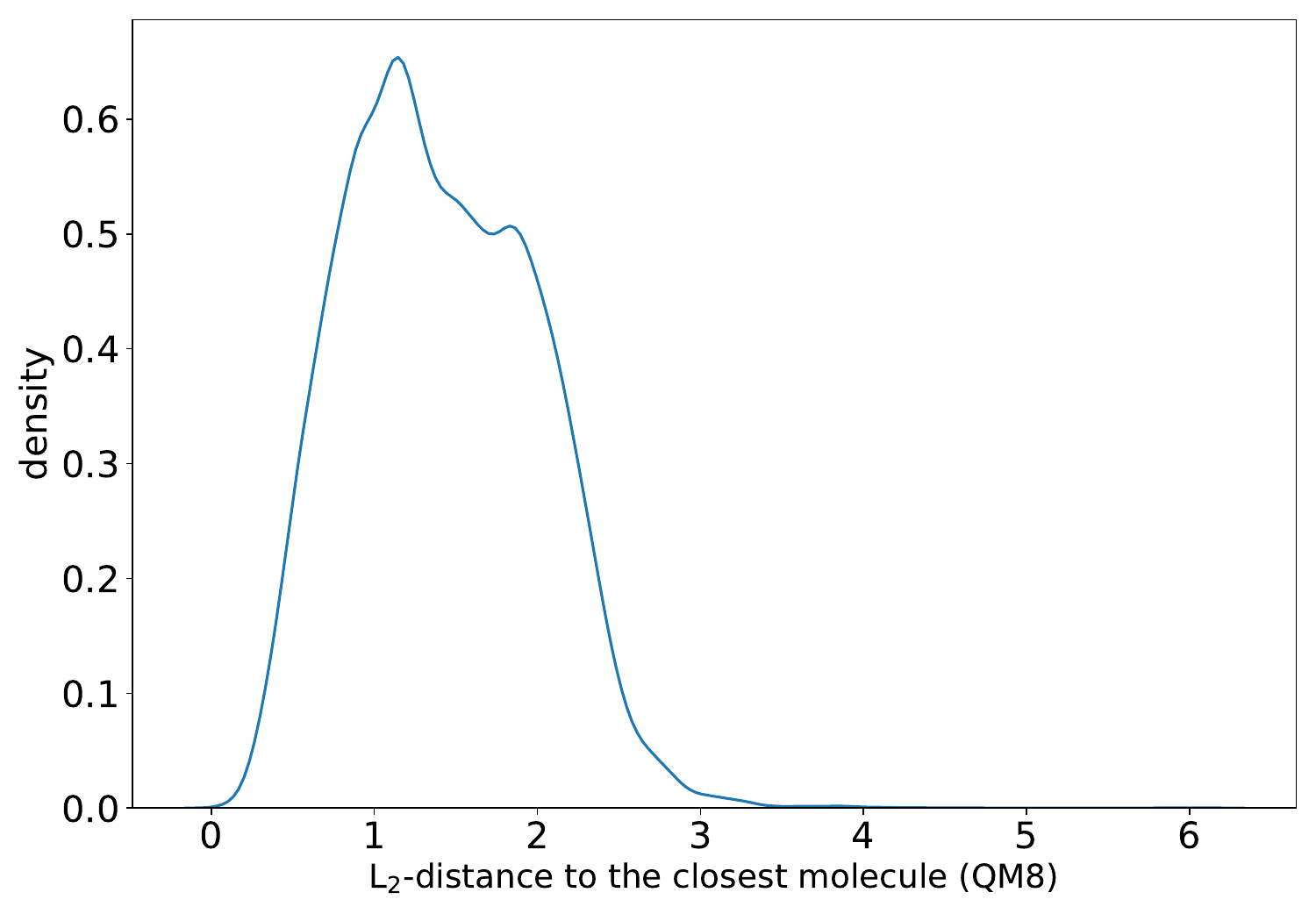}\\
    \includegraphics[width=\textwidth, height = 3cm]{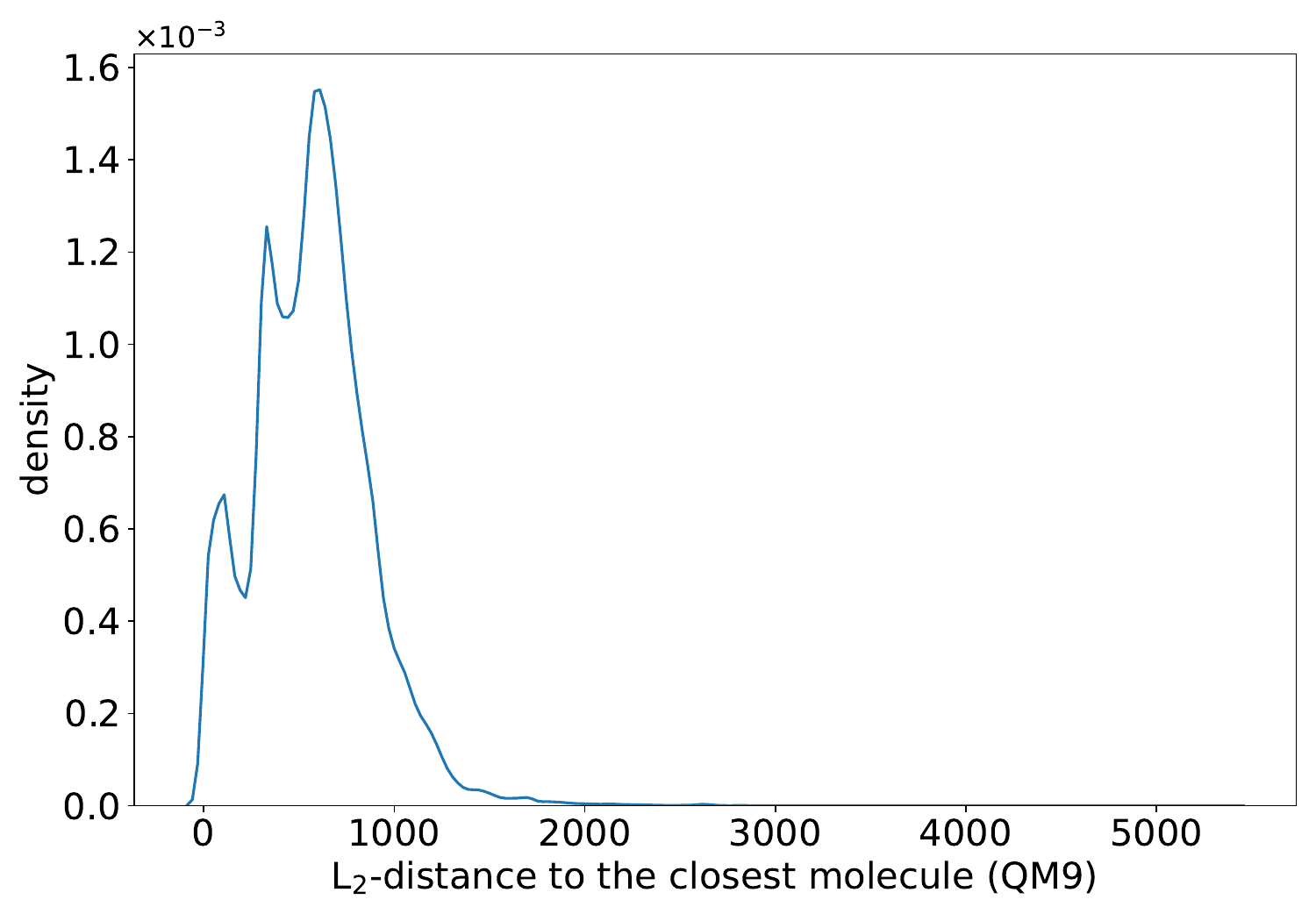}
    \caption{\centering Density of distances to the nearest neighbour.}\label{fig:fill_distance_c}
  \end{subfigure}
  \caption{(a) Fill distances of the selected training sets. (b) Euclidean distances to the nearest neighbour and (c) density of such distances for molecules in QM7 (top row), QM8 (middle row) and QM9 (bottom row). In (b) the red lines are the average distances between the molecules in the datasets and their nearest neighbour and the molecules are sequentially numbered such that the distances decrease in magnitude as the associated molecule numbers increase.}
  \label{fig:fill_distance}
  \end{center}
  \vskip -0.3in
\end{figure*}

As we mentioned, the bound provided in Theorem~\ref{theorem} also depends on the label's uncertainty and the maximum error on the training set. Thus, it is worth to provide additional insight on these two quantities. Unfortunately, the labels uncertainty is an intrinsic property of the dataset that we cannot compute or estimate unless we know the true solution of the regression problem or we have an estimate of the error performed by the numerical procedure used to create the dataset. Nonetheless, we can perform a-posteriori computation of the maximum error on the training sets and evaluate its behaviors with respect to the different selection strategies and the regression models we consider. Thus, in what follows we analyze the maximum error on the training sets selected for the experiments in~\cref{molecular_prediction} related to the best and worst performing selection strategies, that is, we consider FPS and random sampling, respectively.

Fig.~\ref{fig:Training_error} shows the maximum absolute prediction error on the training sets for QM7, QM8 and QM9 using KRR (top row) and FNN (bottom row). The training sets we consider are the same we used for the experiments in \cref{fig:GRR,fig:FNN}. It can be clearly seen that the maximum error tends to increase as the size of the training set increases, which is an expected result since the more data samples we have, the more difficult it becomes for the model to accurately fit all the data points. Moreover, it is worth noticing that the behavior of the maximum error on the training set for the KRR is consistent across different datasets and training set sizes, independently of the selection strategy we consider. In particular Fig.~\ref{fig:Training_error} suggests that, for the KRR, maximum errors on the training sets selected with the FPS and randomly are comparable. This observation also holds for the FNN on QM7. This fact indicates that differences in the maximum error on the relative test sets are mainly due to the other quantities appearing in the bound provided in Theorem~\ref{theorem}, such as the training set fill distance. However, when comparing the FPS and random sampling (RDM) on QM8 and QM9 with FNN, we notice more pronounced differences. Notably, on the training sets selected with RDM the maximum error tends to be smaller than on those selected with FPS. This may be motivated by the fact that FPS tends to select points that are farther apart and potentially isolated, as we later see, making it more difficult to reach local minima in the nonlinear optimization problem we are required to solve for the optimization of the weights of the FNN. It is also interesting to compare \cref{fig:Training_error}, which illustrates the error on the training sets, with \cref{fig:GRR,fig:FNN}, which illustrates the error on the relative test sets. From this comparison, we notice that for KRR, the maximum error on the training set is negligible or much smaller compared to the maximum error on the test set, particularly for QM8 and QM9. This is not the case for the FNN regression model. Specifically, with FNN on QM8 and QM9, the maximum error on the training sets selected with FPS is larger than on the test set. We relate this to the presence of isolated points in the training set that are difficult to learn when solving a nonlinear optimization problem. 

\begin{figure*}[t]
  \begin{center}
    \begin{subfigure}[t]{0.30\textwidth}
    \includegraphics[width=\textwidth, height = 3cm]{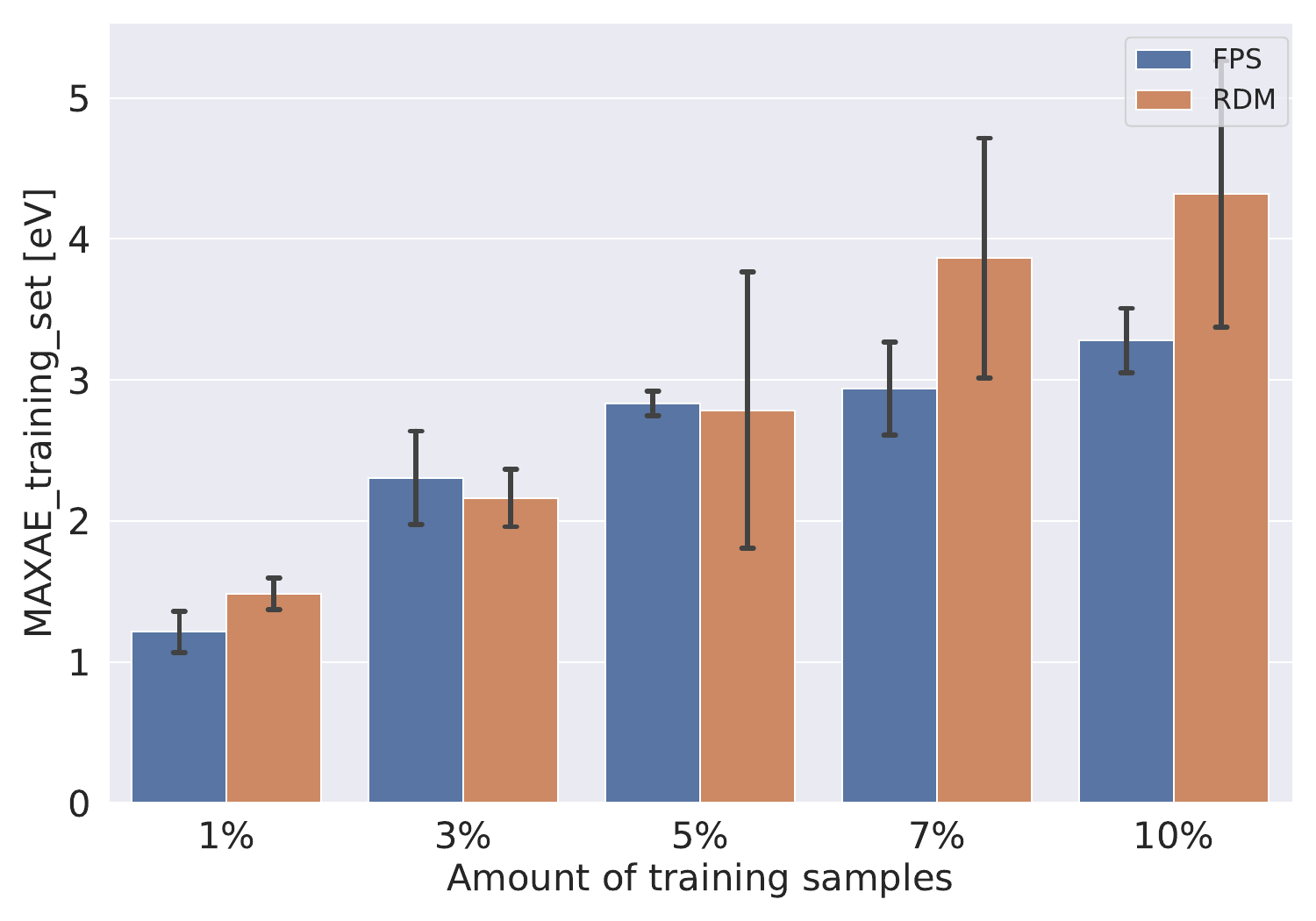}
    \includegraphics[width=\textwidth, height = 3cm]{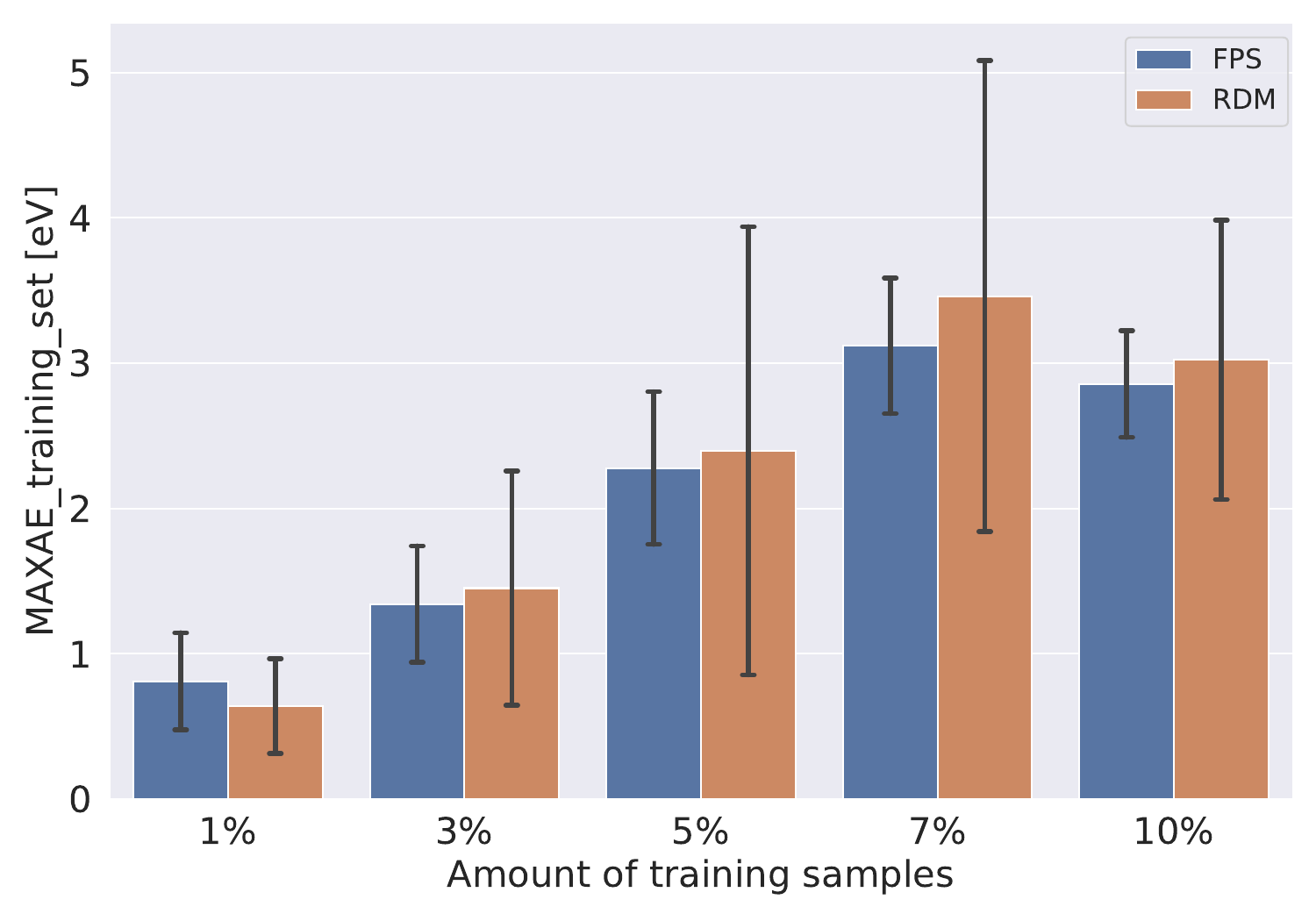}
    \caption{QM7}
  \end{subfigure}
   \begin{subfigure}[t]{0.30\textwidth}
    \includegraphics[width=\textwidth, height = 3cm]{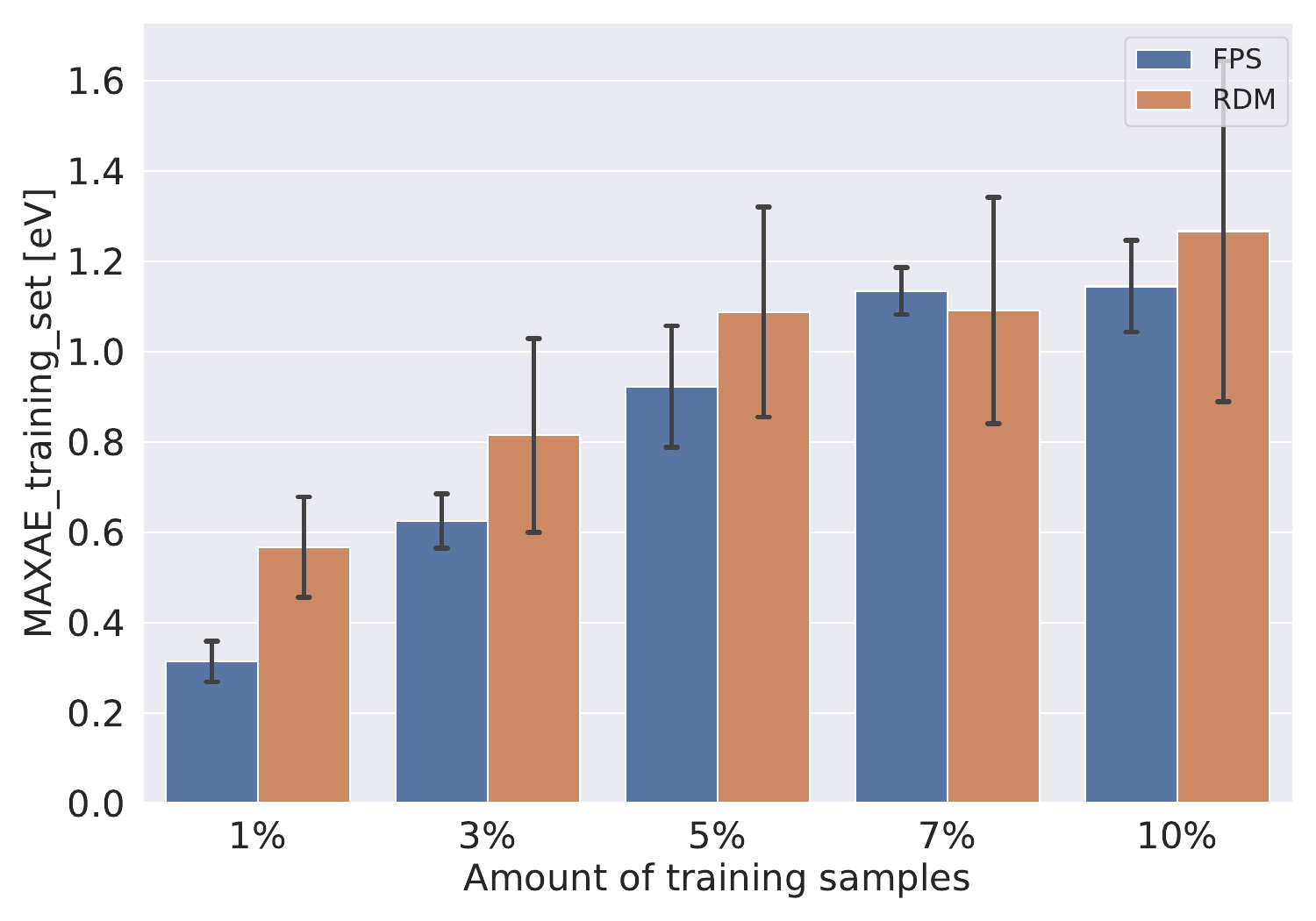}
    \includegraphics[width=\textwidth, height = 3cm]{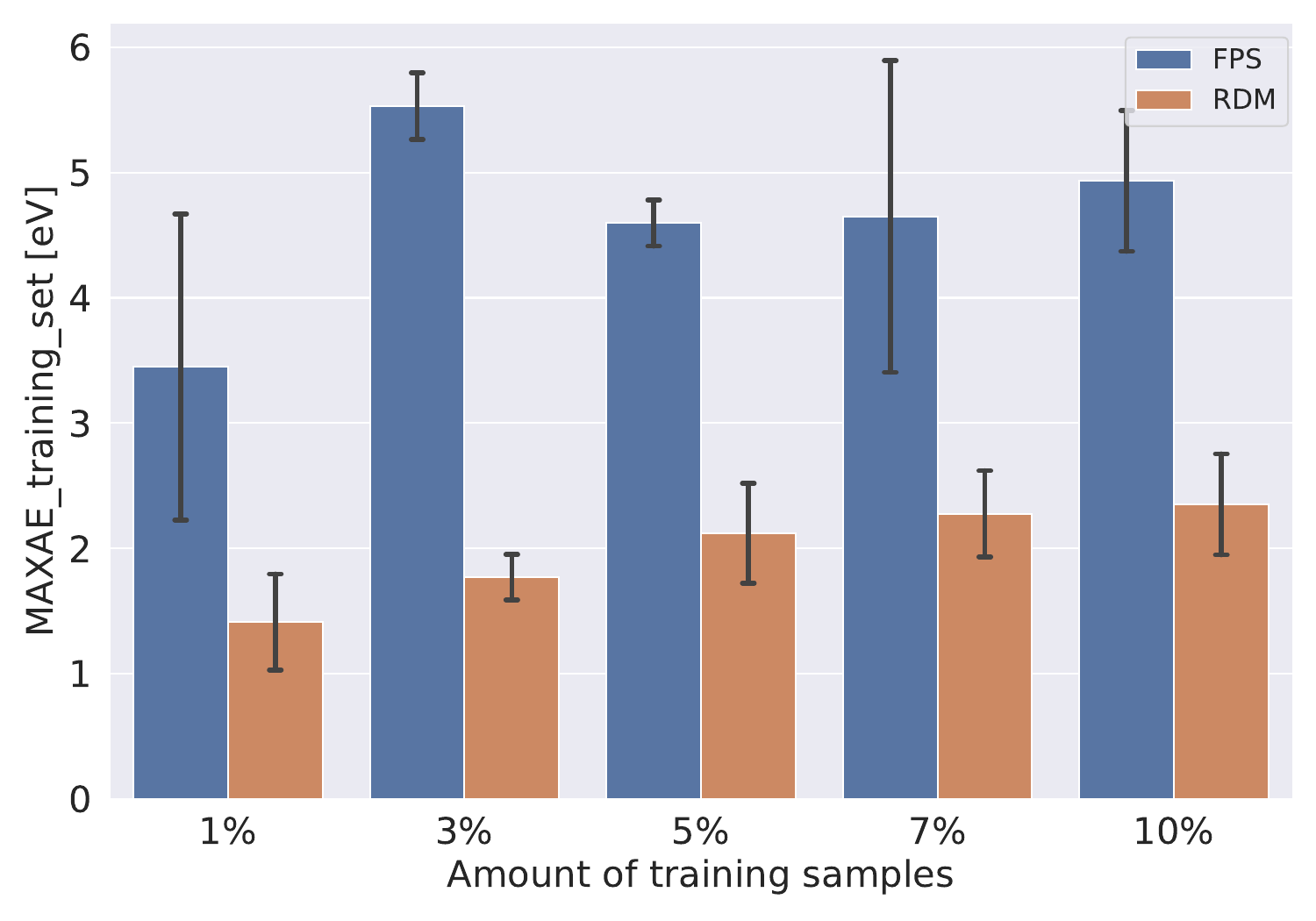}
    \caption{QM8}
  \end{subfigure}
    \begin{subfigure}[t]{0.30\textwidth}
    \includegraphics[width=\textwidth, height = 3cm]{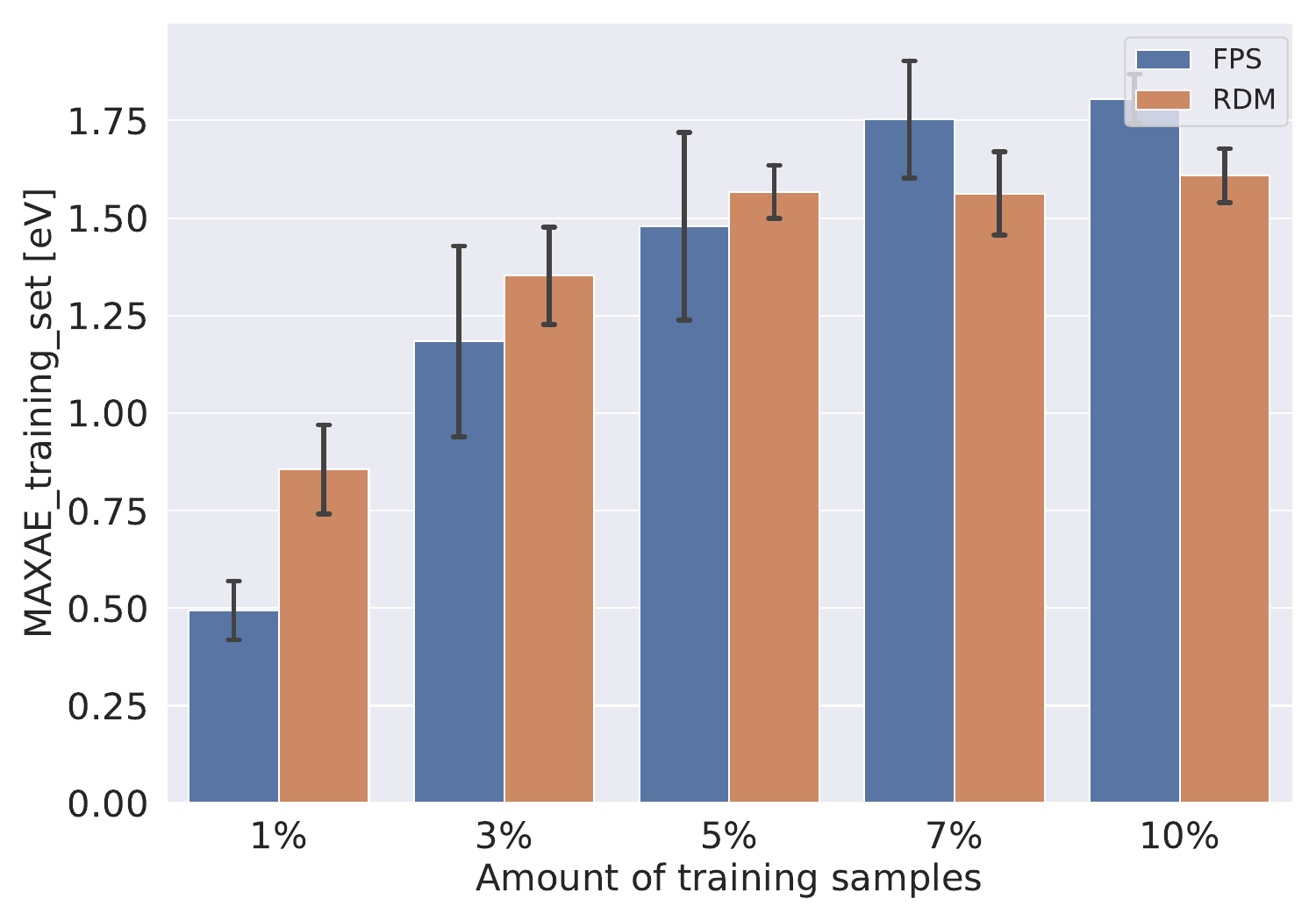}
    \includegraphics[width=\textwidth, height = 3cm]{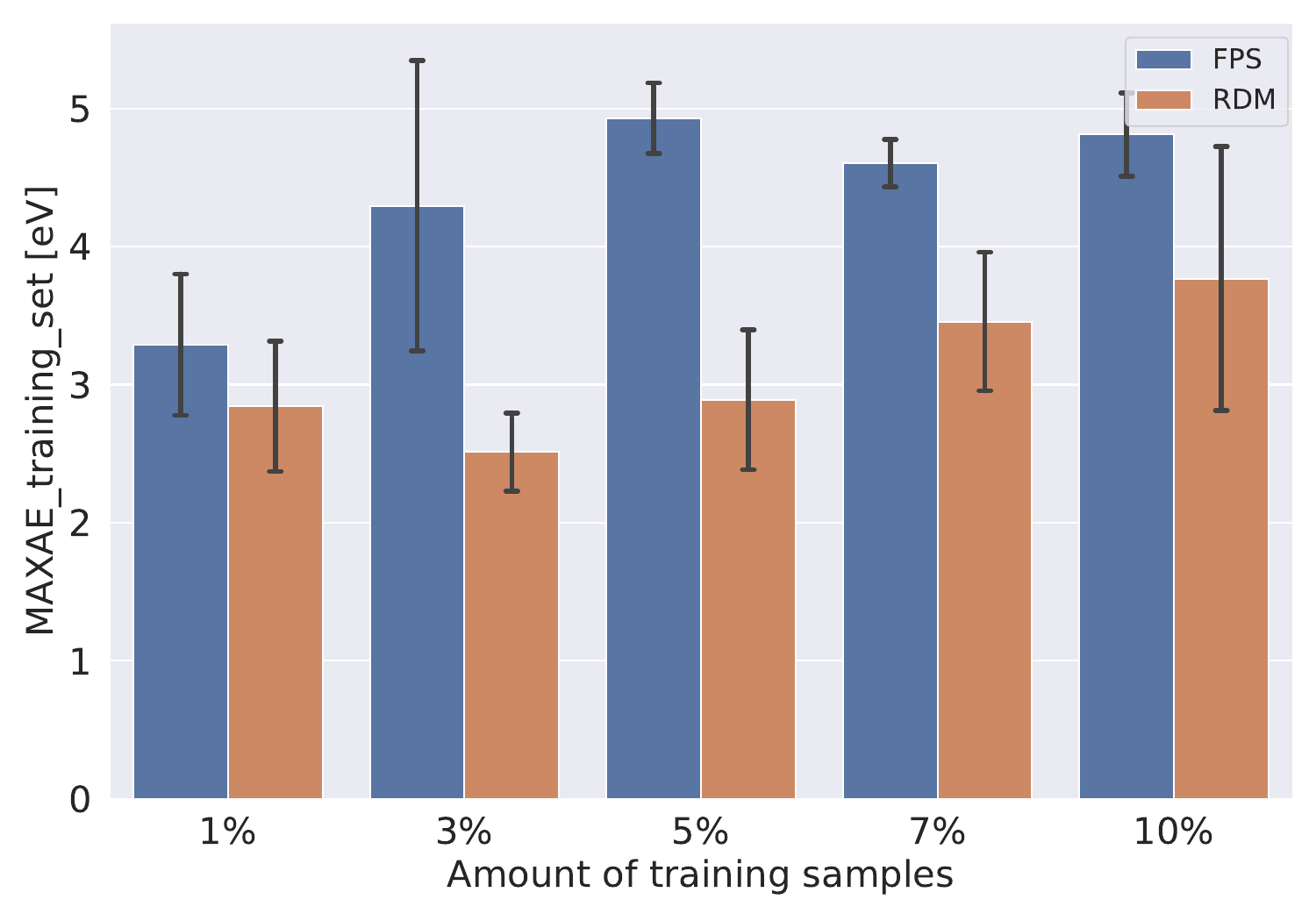}
    \caption{QM9}
  \end{subfigure}
  \caption{MAXAE on training set for QM7, QM8 and QM9 using KRR (top row) and FNN (bottowm row) trained on sets of various sizes, expressed as a percentage of the available data points, and selected randomly (RDM) or with FPS.}
  \label{fig:Training_error}
  \end{center}
  \vskip -0.3in
\end{figure*}


As a matter of fact, we think that the effectiveness of FPS is also due to its ability to sample, even for small training sets sizes, those points that are at the tails of the data distribution and that are convenient to label, as the predictive accuracy of the learning methods on those points would be limited due to the lack of data information in the portions of the feature space where data points are more sparsely distributed. To see this empirically, let us first consider Fig.~\ref{fig:fill_distance_b} and Fig.~\ref{fig:fill_distance_c}, showing for each molecule the Euclidean distance to the respective closest molecule and the density of such distances, respectively, for the QM7, QM8 and QM9 datasets. Fig.~\ref{fig:fill_distance_b} shows that, in all the analyzed datasets, there are ``isolated'' molecules for which the Euclidean distance to the nearest molecule is more than twice the average distance between the molecules in the dataset and their nearest neighbor, represented by the red line in the graphs. Fig.~\ref{fig:fill_distance_c}, representing the density distribution of the distances of the molecules to their closest data point, tells us that the ``isolated'' molecules are only a very small portion of the dataset and, therefore, represent the tail of the data distribution. We now see that FPS, contrary to the other baselines, can effectively sample the isolated molecules even for a low training data budget. Fig.~\ref{fig:sampled_molecules} highlights the Euclidean distances to the closest neighbour for molecules selected with FPS, and the other baseline strategies, from all the analyzed datasets. 
The size of the selected sets is 1\% of the available data points. Specifically, we are analyzing the same elements selected in the lowest training data budget we considered for the regression tasks in \cref{fig:GRR,fig:FNN}. Fig.~\ref{fig:sampled_molecules} clearly illustrates that, independently of the dataset, FPS selects points across the whole density spectrum. On the contrary, the baseline methods mainly sample points that have a closer nearest neighbour and that are nearer to the center of the data distribution (Fig.~\ref{fig:fill_distance_c}).

\begin{figure*}[t]
  \begin{center}
  \includegraphics[width=\textwidth, height = 2.5cm]{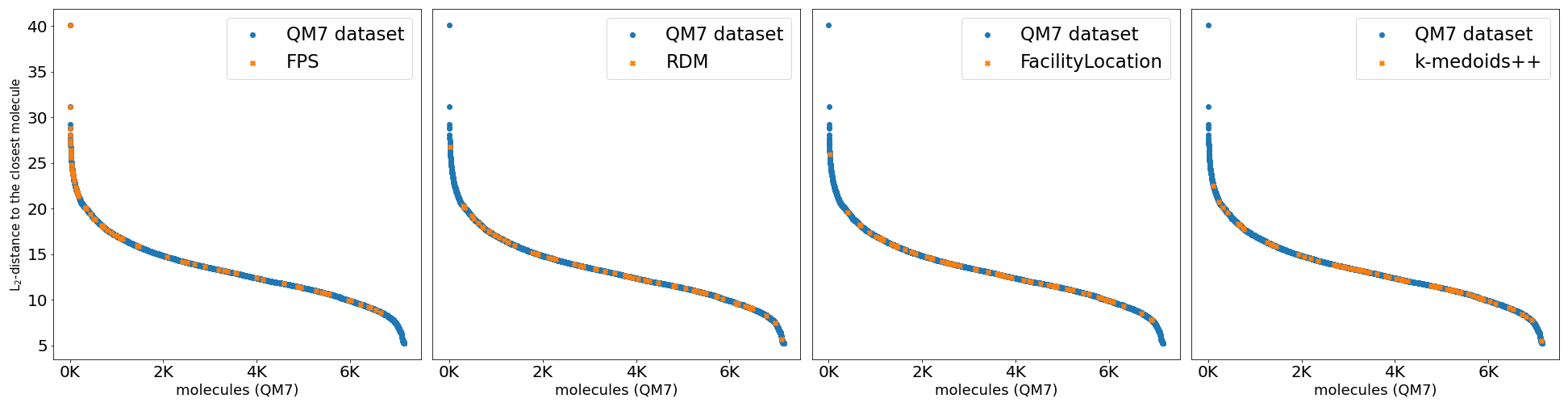}\\
  \includegraphics[width=\textwidth, height = 2.5cm]{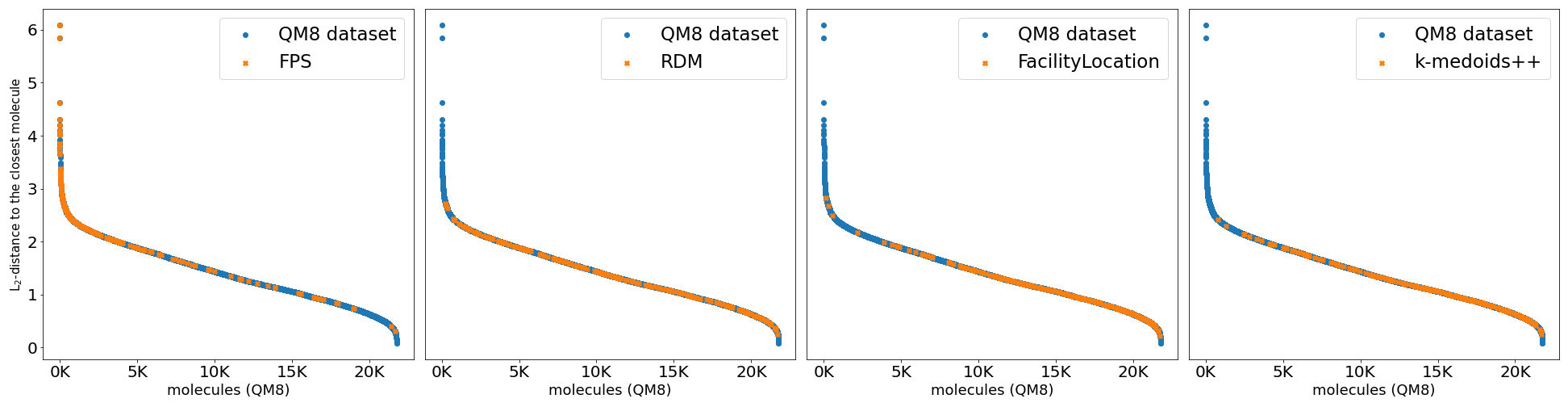}\\
  \includegraphics[width=\textwidth, height = 2.5cm]{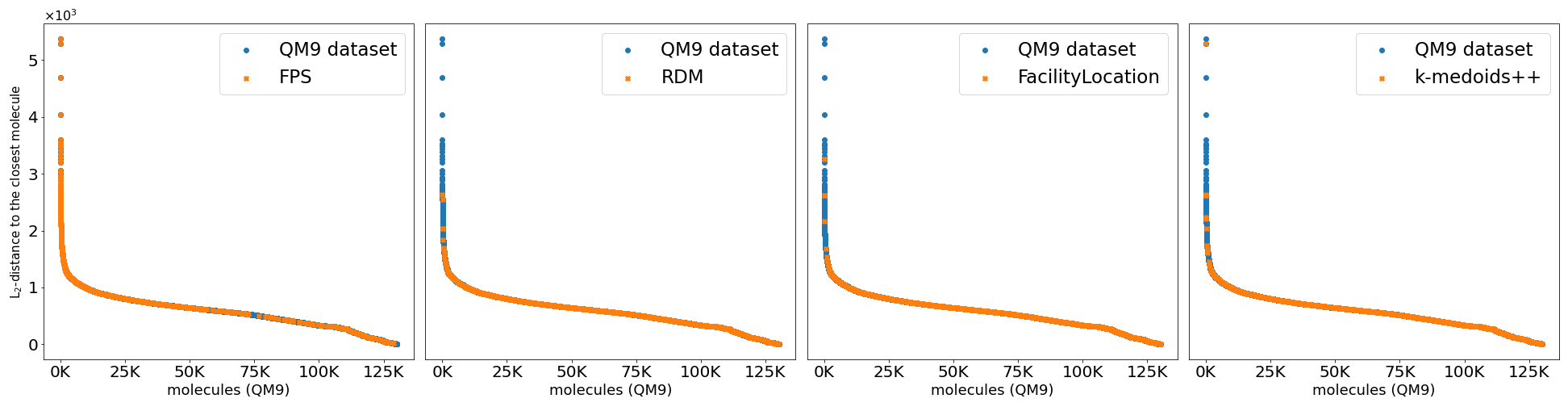}
  \caption{ In blue, the Euclidean distances to the nearest neighbour for molecules in QM7 (top row), QM8 (middle row) and QM9 (bottom row). In orange are highlighted the molecules selected with FPS and the other baselines. For each dataset we selected 1\% of available data points.}
  \label{fig:sampled_molecules}
\end{center}
\vskip -0.3in
\end{figure*}

The observation that selecting isolated molecules is beneficial in terms of the MAXAE reduction is also in line with Theorem~\ref{theorem}. 
We know that a sampling strategy that aims to reduce the maximum error of the predictions should minimize the fill distance of the training set. Thus, it should include the isolated molecules in the training set, as their distance to the nearest neighbour is much larger than the average.

Our empirical analysis indicates that using FPS can be advantageous in the low training data budget, as it allows including early in the sampling process the ``isolated'' molecules. But, once the data points at the tails of the data distribution have been included, we believe that there may be more convenient sampling strategies than FPS to select points at the center of the distribution, where more information is available.
To empirically support the hypothesis that the FPS is mostly beneficial in the low data limit, Fig.~\ref{fig:fps-rdm} illustrates the MAXAE of the predictions on QM7, QM8 and QM9, for the KRR and FNN trained on sets selected with the FPS and on sets selected initially with the FPS, the first 2\%, and then selected randomly. The figure clearly illustrates that after the FPS has been employed to sample the first 2\% of the dataset, the MAXAE of the predictions does not tend to decrease or increase dramatically for larger training set sizes, even if the later samples are selected randomly, independently of the datasets and regression model considered. This fact further suggests that the FPS is mainly beneficial in the low data limit and is strongly connected with the ability of this sampling strategy to select samples at the tail of the data distribution. 
\begin{figure*}[t]
  \begin{center}
    \begin{subfigure}{0.30\textwidth}
      \hfill\includegraphics[width=\textwidth, height = 3cm]{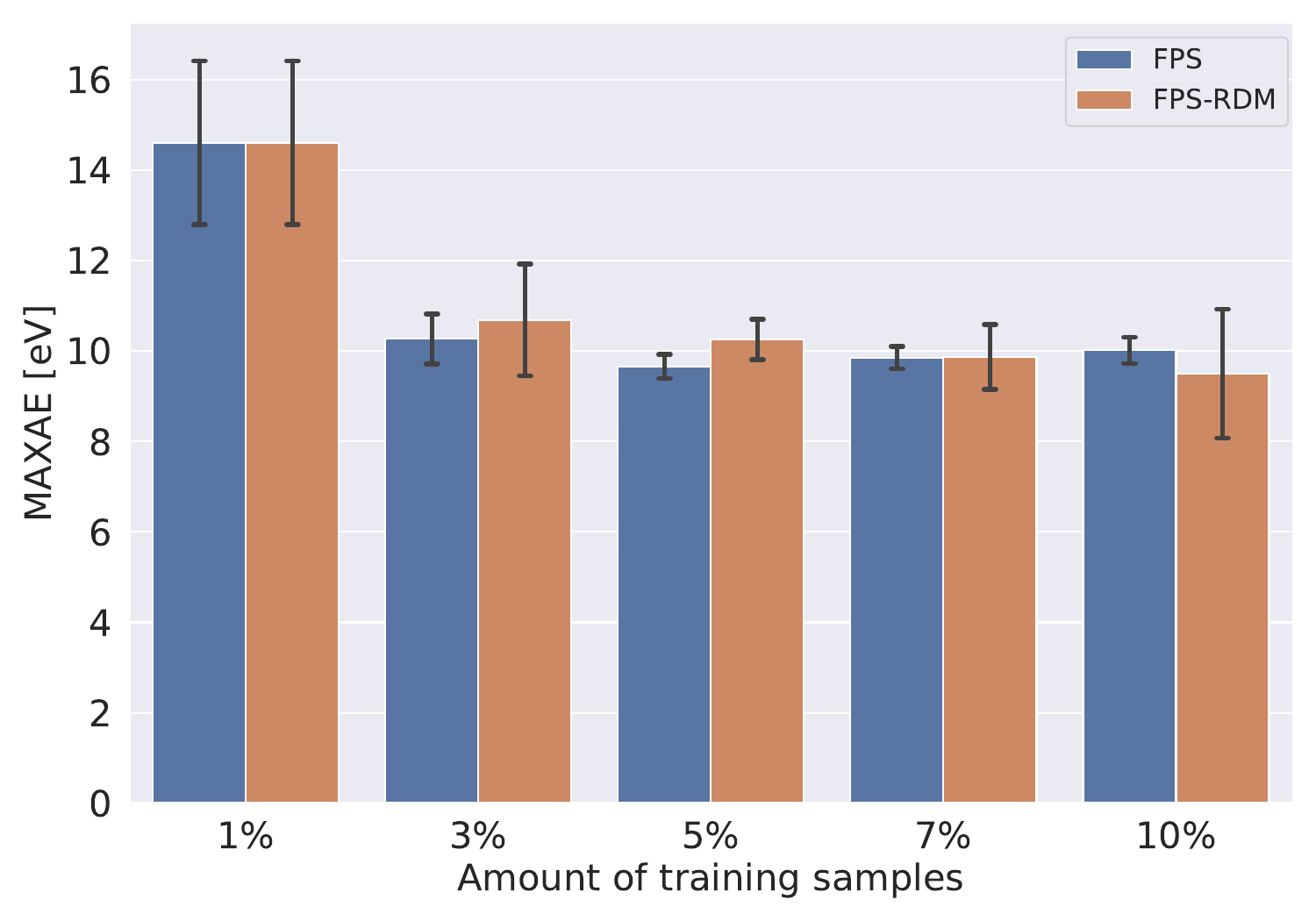}\\
      \includegraphics[width=\textwidth, height = 3cm]{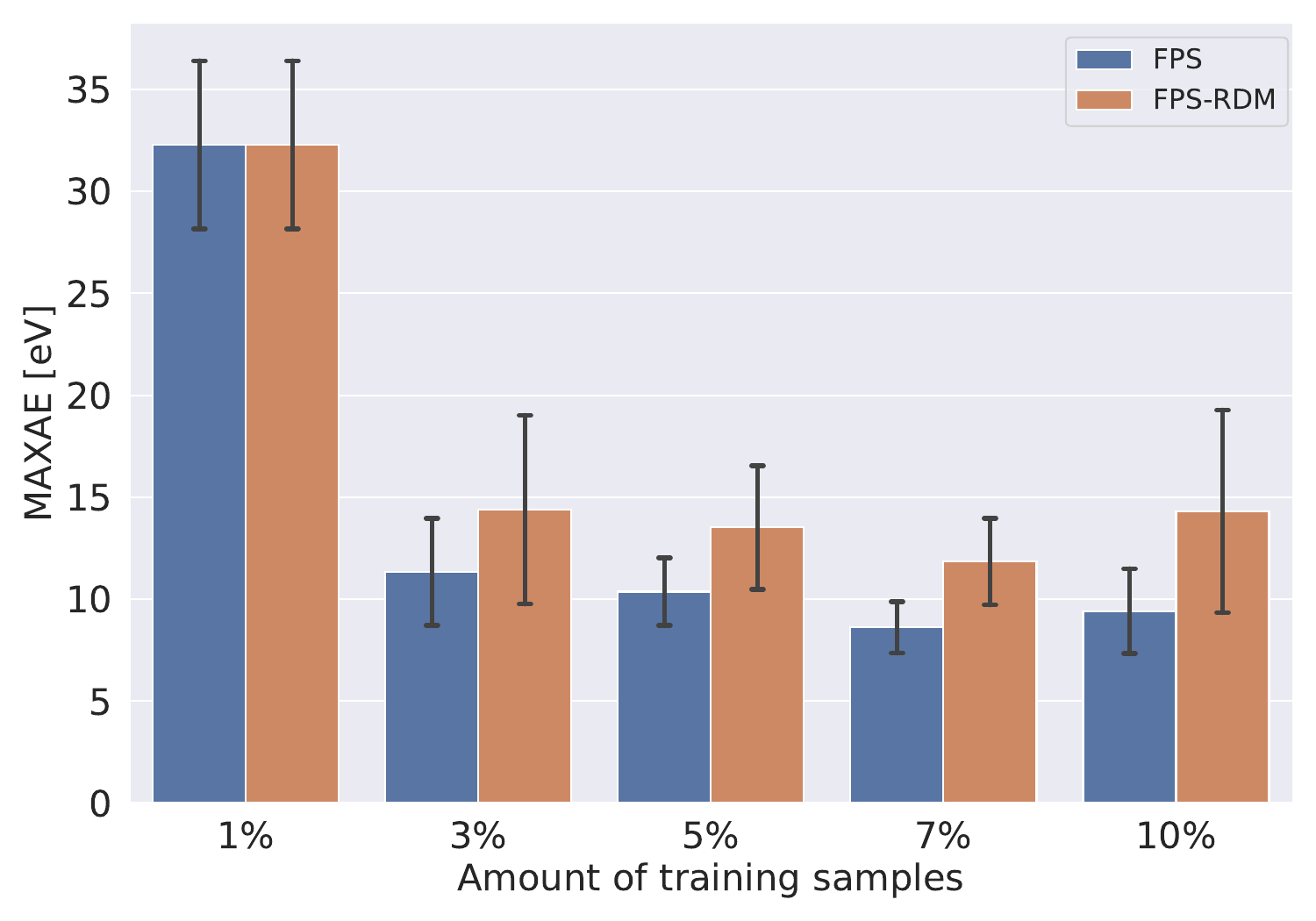}
      \caption{QM7}
    \end{subfigure}
    \begin{subfigure}{0.30\textwidth}
      \hfill\includegraphics[width=\textwidth, height = 3cm]{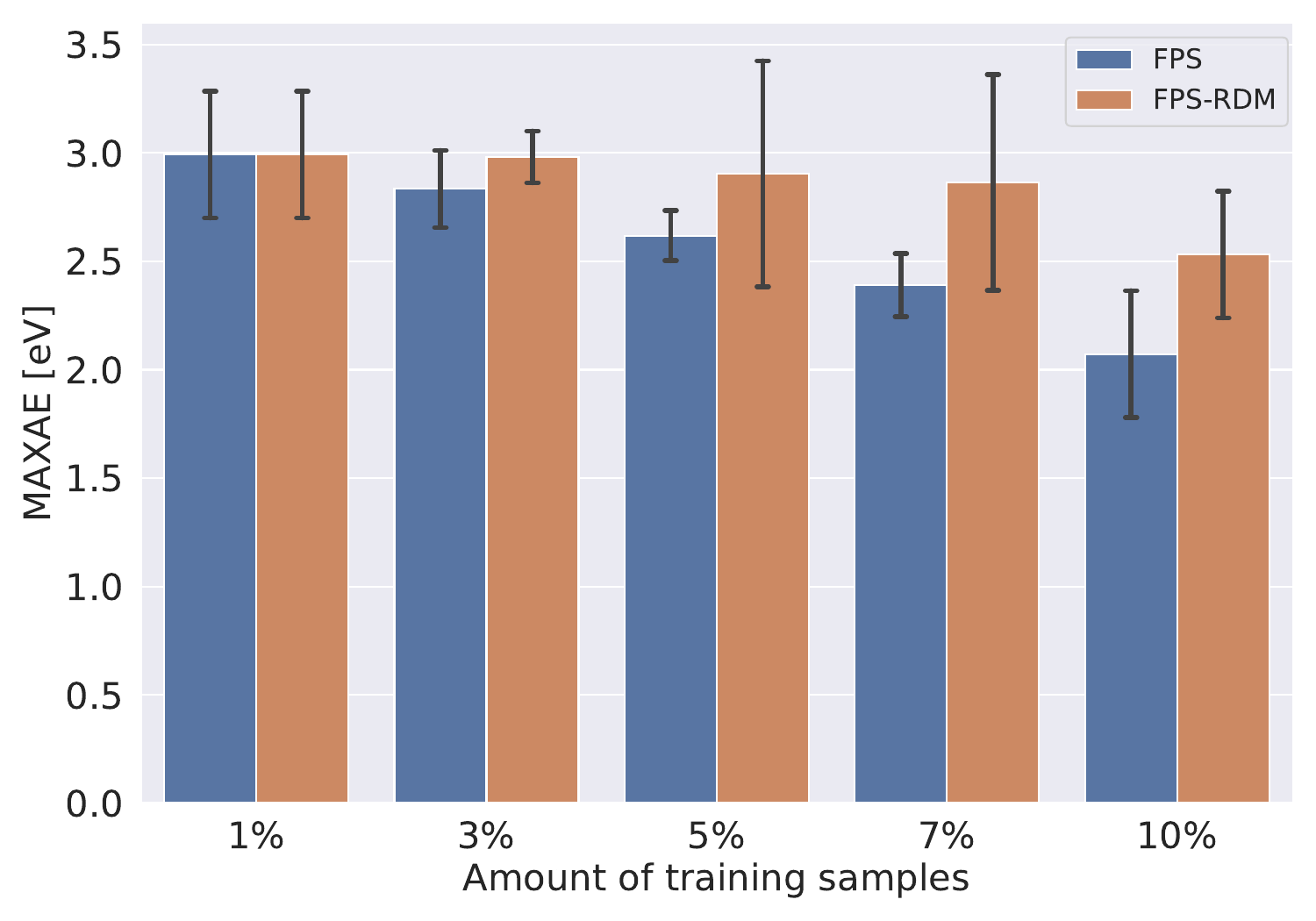}\\
      \includegraphics[width=\textwidth, height = 3cm]{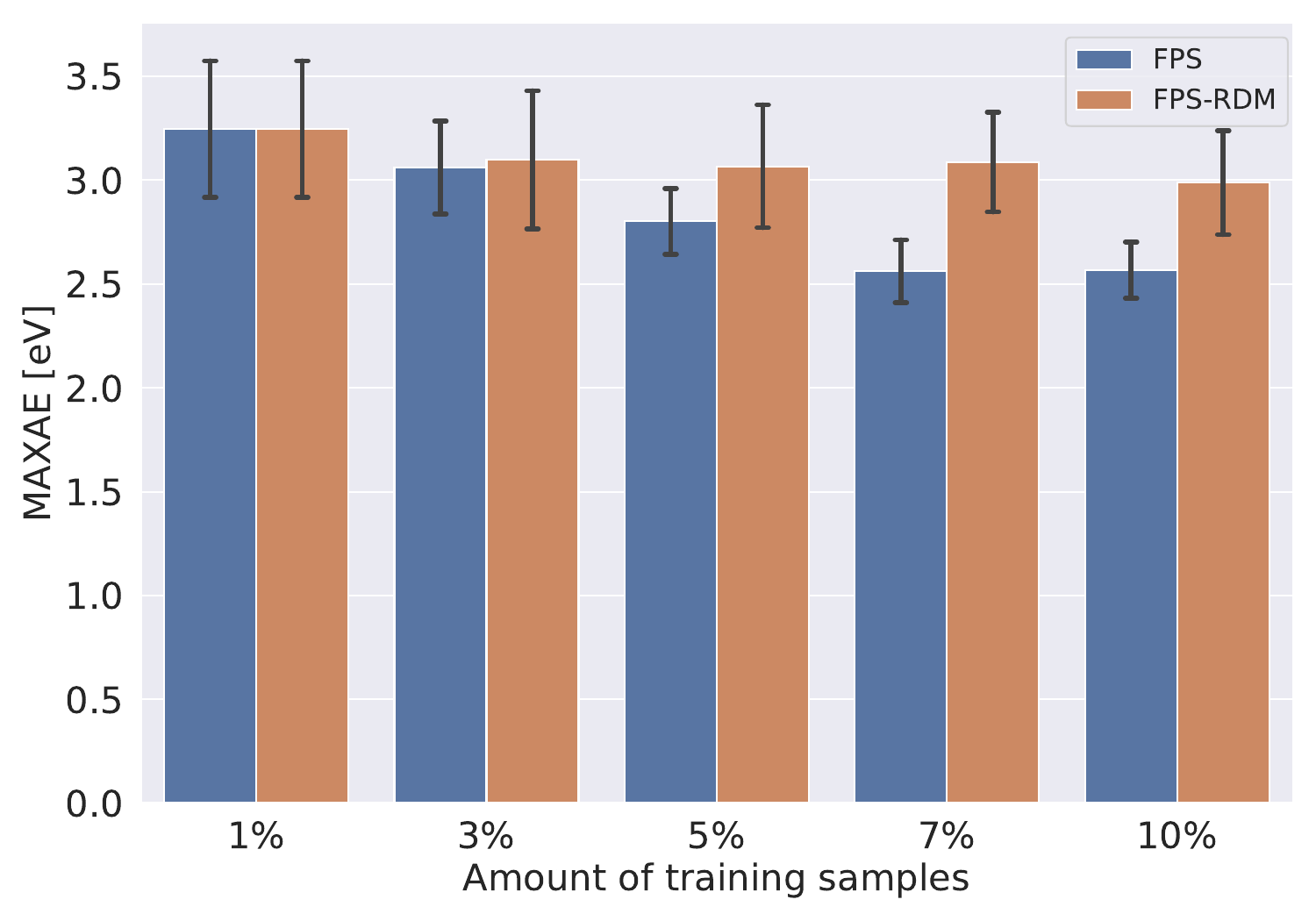}
      \caption{QM8}
    \end{subfigure}
  \begin{subfigure}{0.30\textwidth}
    \hfill\includegraphics[width=\textwidth, height = 3cm]{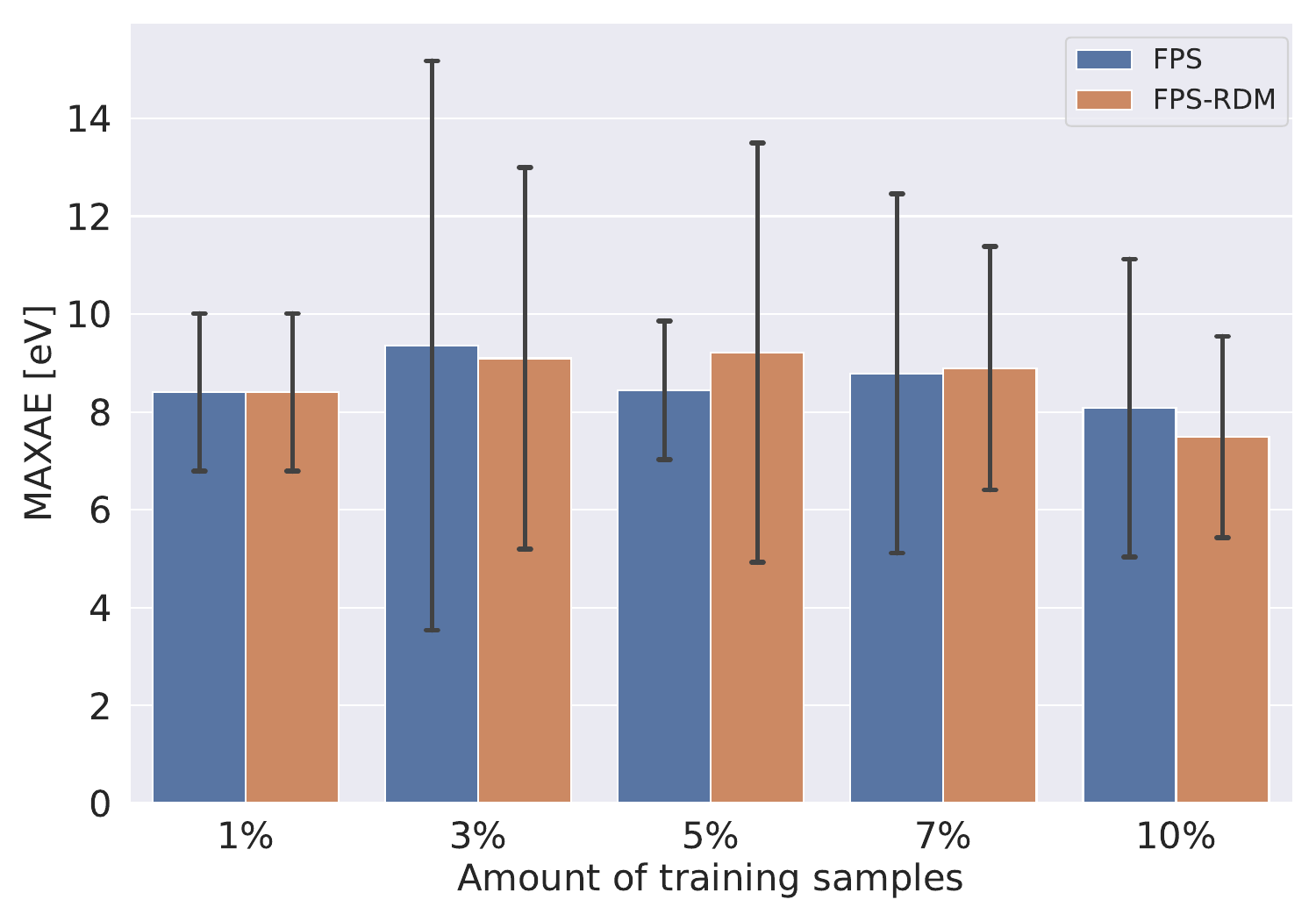}\\
    \includegraphics[width=\textwidth, height = 3cm]{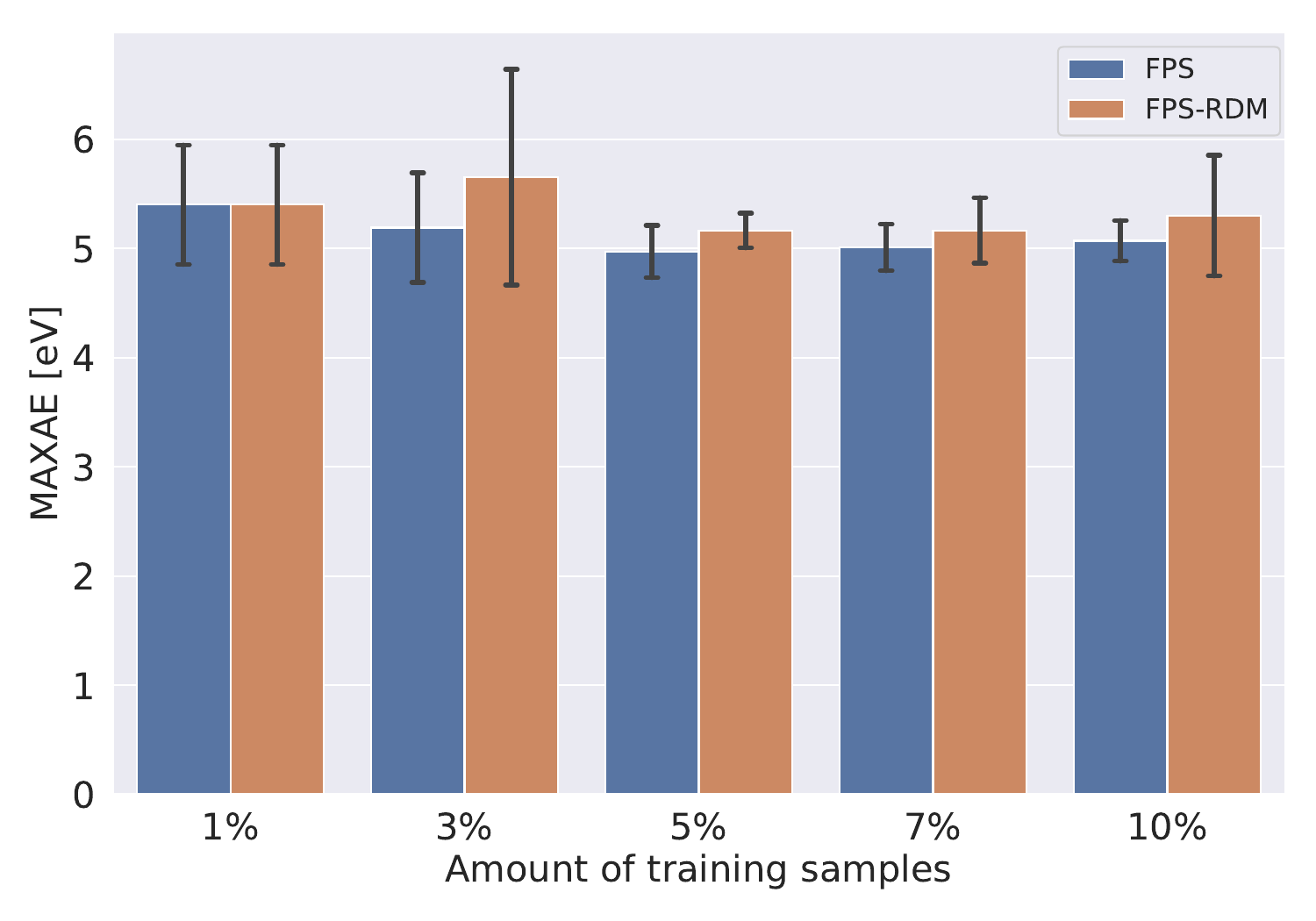}
    \caption{QM9}
  \end{subfigure}
  \caption{Results for regression tasks on QM7, QM8 and QM9 using KRR (top row) and FNN (bottom row) trained on sets of various sizes and selected with the FPS and with the FPS combined with random sampling after 2\% of the available data have been selected. The graphs illustrate the MAXAE of the predictions for each training set size and sampling approach.}
  \label{fig:fps-rdm}
  \end{center}
  \vskip -0.3in
\end{figure*}
\subsubsection{Importance of the data assumptions}

We now highlight the importance of the data assumptions in ensuring that a fill distance minimization strategy leads to a significant reduction of the MAXAE, in correspondence to the theoretical result proposed in Theorem~\ref{theorem}. The focus is on Assumption~\ref{assumption1}, Formula (\ref{lip_map}), indicating that if two data points have close representations in the feature space, then the conditional expectations of the associated labels are also close. Simply put, this assumption states that if two data points have similar features, their labels are more likely to be similar as well. Therefore, we expect the pairwise distances in the feature and label space to be directly correlated for the experiments to ensure consistency with the theory.

One approach to test this hypothesis on a given dataset is to calculate the Euclidean distance in the feature and label spaces for each pair of points and then calculate Pearson's ($\rho_p$) or Spearman's ($\rho_s$) correlation coefficient~\citep{Boslaugh2008} to assess the strength and direction of the correlation between the pairwise distances. These coefficients measure how closely correlated two quantities are, with values ranging from -1 to 1. Pearson's coefficient, captures linear relationships between variables, whereas Spearman's coefficient, measures monotonic relationships regardless of linearity. A positive value indicates a positive correlation, while a negative value indicates a negative correlation. Following along~\cite{Schober2018}, we define the correlation between the two analyzed quantities to be negligible if the considered correlation coefficient $\rho$ is such that $|\rho| \leq 0.1$, otherwise we consider the correlation positive or negative, depending on the sign of $\rho$.

We test our hypothesis on the data assumption for the experiments analyzed in \cref{molecular_prediction} and illustrated in \cref{fig:GRR,fig:FNN}. For completeness, we consider both Pearson's ($\rho_p$) and Spearman's ($\rho_s$) coefficient. The computed coefficients are 0.149, 0.216, and 0.272 for $\rho_p$, and 0.281, 0.189, and 0.216 for $\rho_s$, for QM7, QM8, and QM9, respectively. These numbers indicate that in all experiments where the fill distance minimization approach is successful in significantly reducing the maximum prediction error, there is a positive correlation between the pairwise distances of the data features and labels. 

\begin{figure*}[t]
  \begin{center}
    \begin{subfigure}{0.30\textwidth}
      \centerline{\small $\rho_p$ = 0.278, $\rho_s$: 0.236}
      \includegraphics[width=\textwidth, height = 3cm]
      {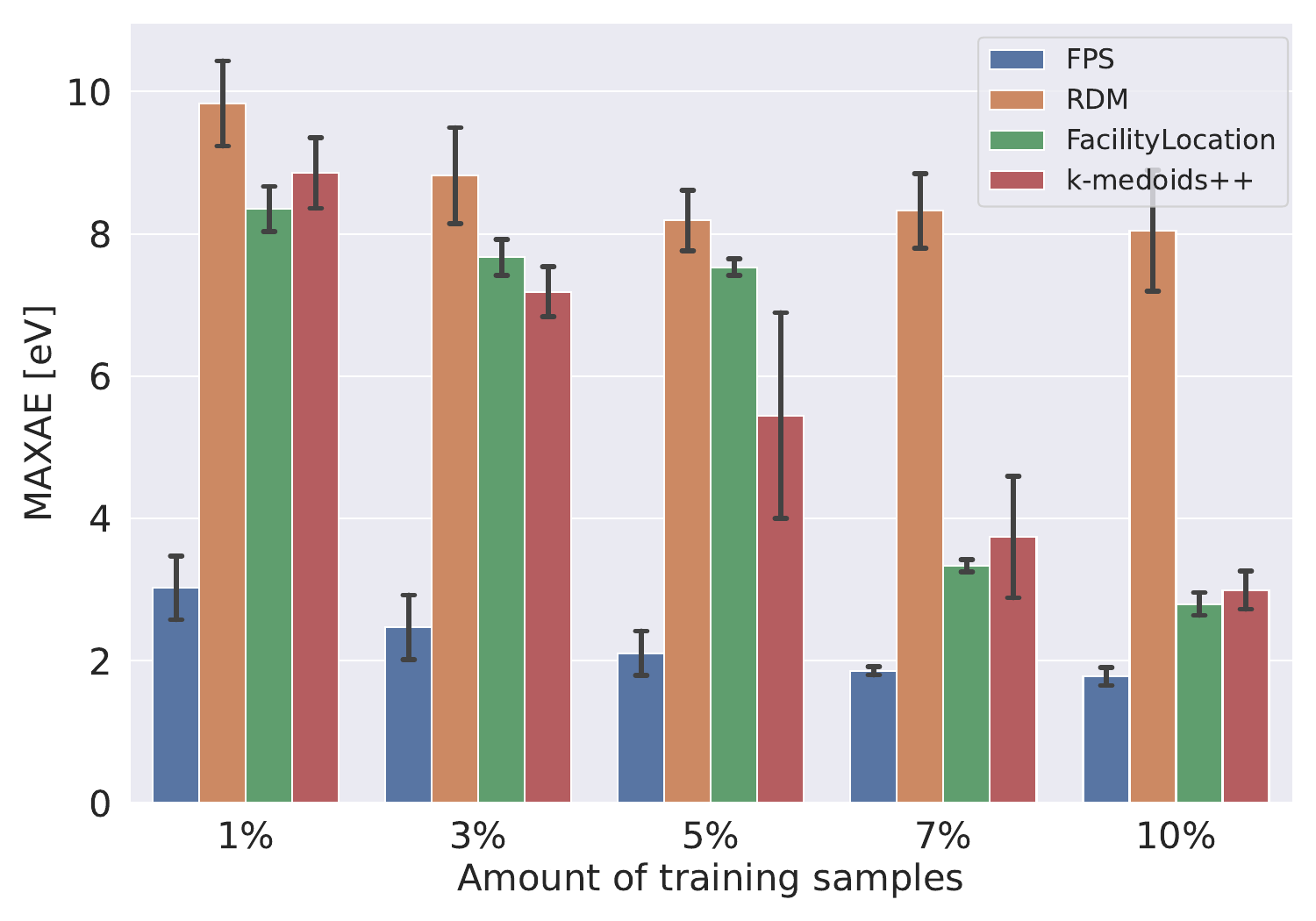}
      \caption{\centering second singlet transition energy (E2)      \label{fig:assumption-data-qm8-e2}      }
    \end{subfigure}
  \begin{subfigure}{0.30\textwidth}
    \centerline{\small $\rho_p$ = 0.065, $\rho_s$: 0.098}
    \includegraphics[width=\textwidth, height = 3cm]{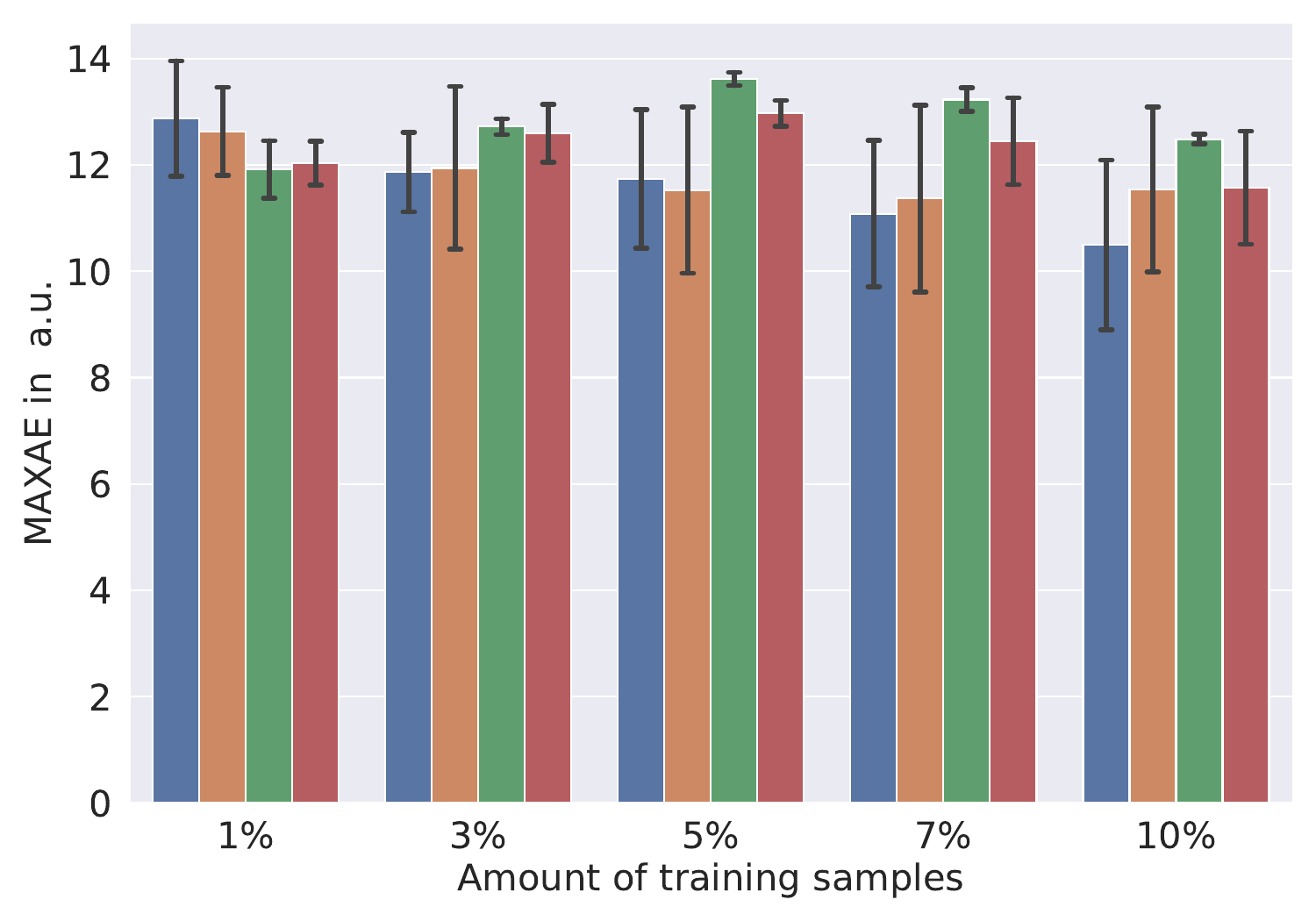}
    \caption{first oscillator \\  \centering  strength (f1)    \label{fig:assumption-data-qm8-f1}    }
  \end{subfigure}
  \begin{subfigure}{0.30\textwidth}
    \centerline{\small $\rho_p$ = -0.034, $\rho_s$: -0.036}
    \includegraphics[width=\textwidth, height = 3cm]{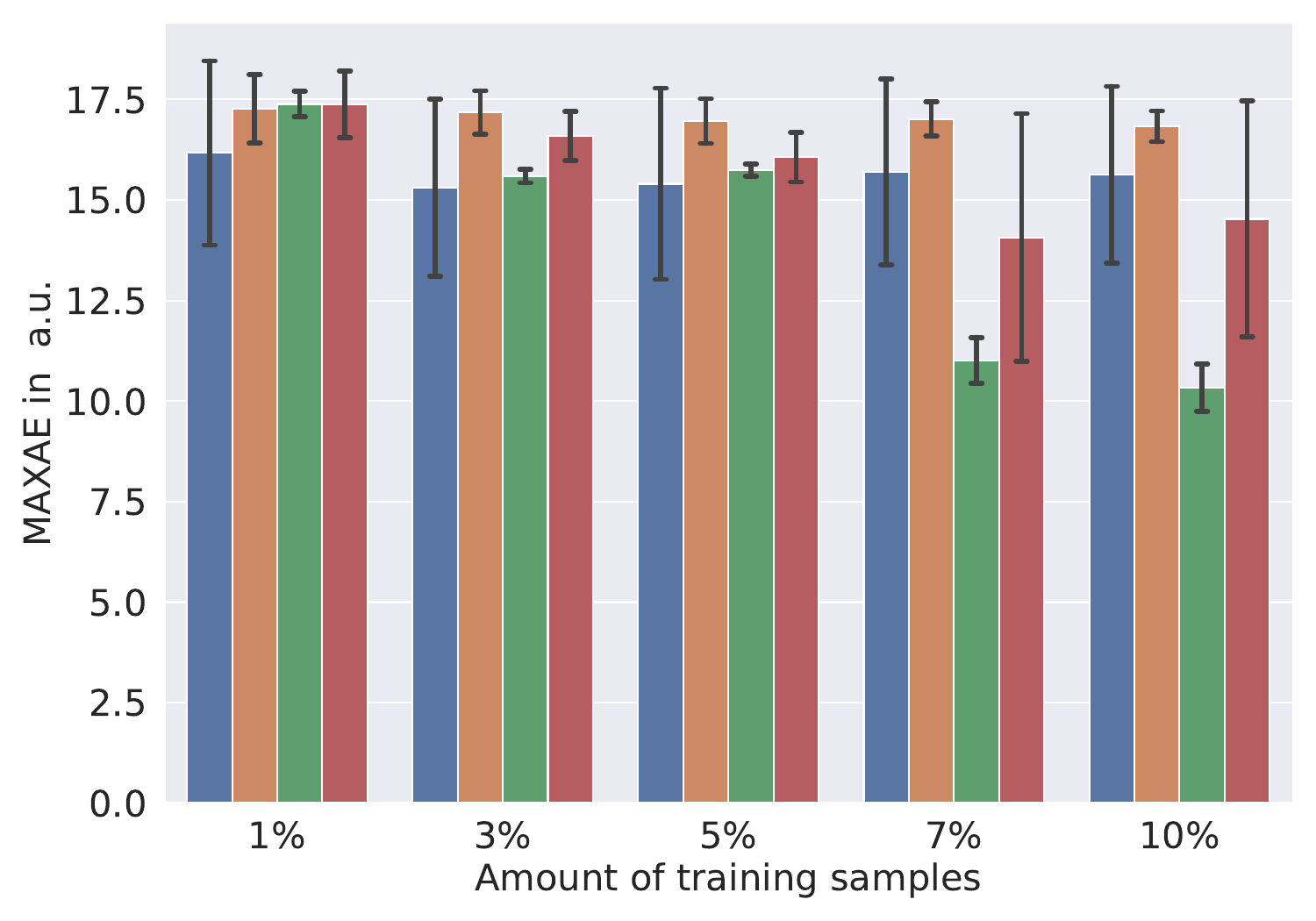}
    \caption{second oscillator \\ \centering  strength (f2)    \label{fig:assumption-data-qm8-f2}    }
  \end{subfigure}
  \caption{Results for regression tasks on QM8 considering three different labels: (a) second singlet transition energy, (b) first oscillator strength, (c) second oscillator strength. Regression performed using KRR trained on sets of various sizes and selected with different sampling strategies. The graphs illustrate the MAXAE of the predictions and the legend in (a) applies to (b) and (c) as well. $\rho_p$ and $\rho_s$ are Pearson's and Spearman's correlation coefficients of the data points pairwise distance in the feature and label space.}
  \label{fig:assumption-data-qm8}
  \end{center}
  \vskip -0.3in
\end{figure*}

Moreover, we also want to show that if the correlation between the pairwise distances in the feature and label space is negligible, the fill distance minimization approach may not lead to a significant reduction in the maximum prediction error. To illustrate this, we perform additional experiments on the QM8 dataset. In these experiments, we examine various labels not yet considered in this work while considering the same data features we previously used. The labels we now consider are the second singlet transition energy (E2), measured in eV, and the first and second oscillator strengths (f1 and f2), measured in atomic units (a.u.). Our computations reveal a Pearson's and Spearman's correlation coefficient of 0.278 and 0.236 for E2, respectively. As for correlations with f1 and f2, Pearson's coefficients are 0.065 and -0.034, respectively, while Spearman's coefficients are 0.098 and -0.036, accordingly. These results suggest a positive correlation between pairwise distances in the feature and label space when considering E2 as the label value, but negligible correlations for f1 and f2. This rejection of our hypothesis for f1 and f2 indicates that our initial assumptions about the data properties may not hold true when considering these two labels.
Fig.~\ref{fig:assumption-data-qm8} shows the results for the regression tasks on the QM8 dataset considering E2, f1 and f2 as labels, and using the KRR as regression model. Specifically, Fig.~\ref{fig:assumption-data-qm8-f1} and~\ref{fig:assumption-data-qm8-f2} illustrate the MAXAE of the predictions for the regression tasks with f1 and f2, respectively, and suggest that selecting the training set by fill distance minimization using FPS, does not lead to a significant reduction in the maximum prediction error when the correlation between the pairwise distances in the feature and label space is negligible. On the contrary, Fig.~\ref{fig:assumption-data-qm8-e2}, illustrating the results on the E2 regression task, provides further evidence that the fill distance minimization approach is effective when the correlation is positive, in correspondence to our hypothesis on the data assumption. It is important to note that for the case of the QM8 dataset with f1 or f2 as labels, where the correlation between pairwise distances in the features and label space is negligible, selecting training sets by fill distance minimization approach with the FPS is either comparable or better than randomly choosing the points in terms of the MAXAE of the predictions. Moreover, it is also important to mention that, no benchmark approach can consistently perform better than FPS. For instance, the facility location approach performs best on the f2 regression task for training set sizes of 7\% and 10\%, but is the worst performing on the f1 regression task for all training set sizes other than 1\%.

\begin{figure*}[t!]
  \begin{center}
    \begin{subfigure}[t]{0.30\textwidth}
      \includegraphics[width=\textwidth, height = 3cm]{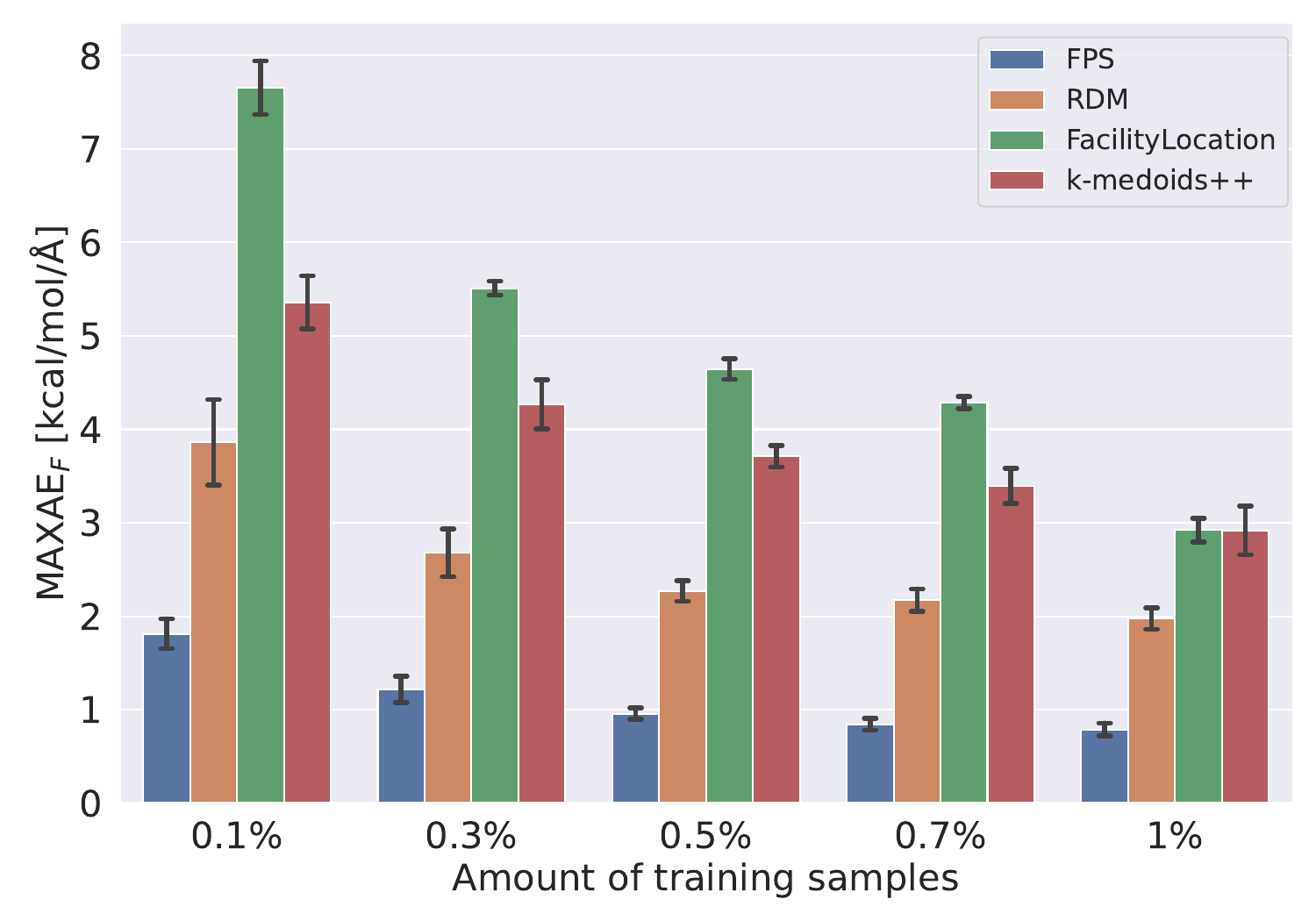}
      \includegraphics[width=\textwidth, height = 3cm]{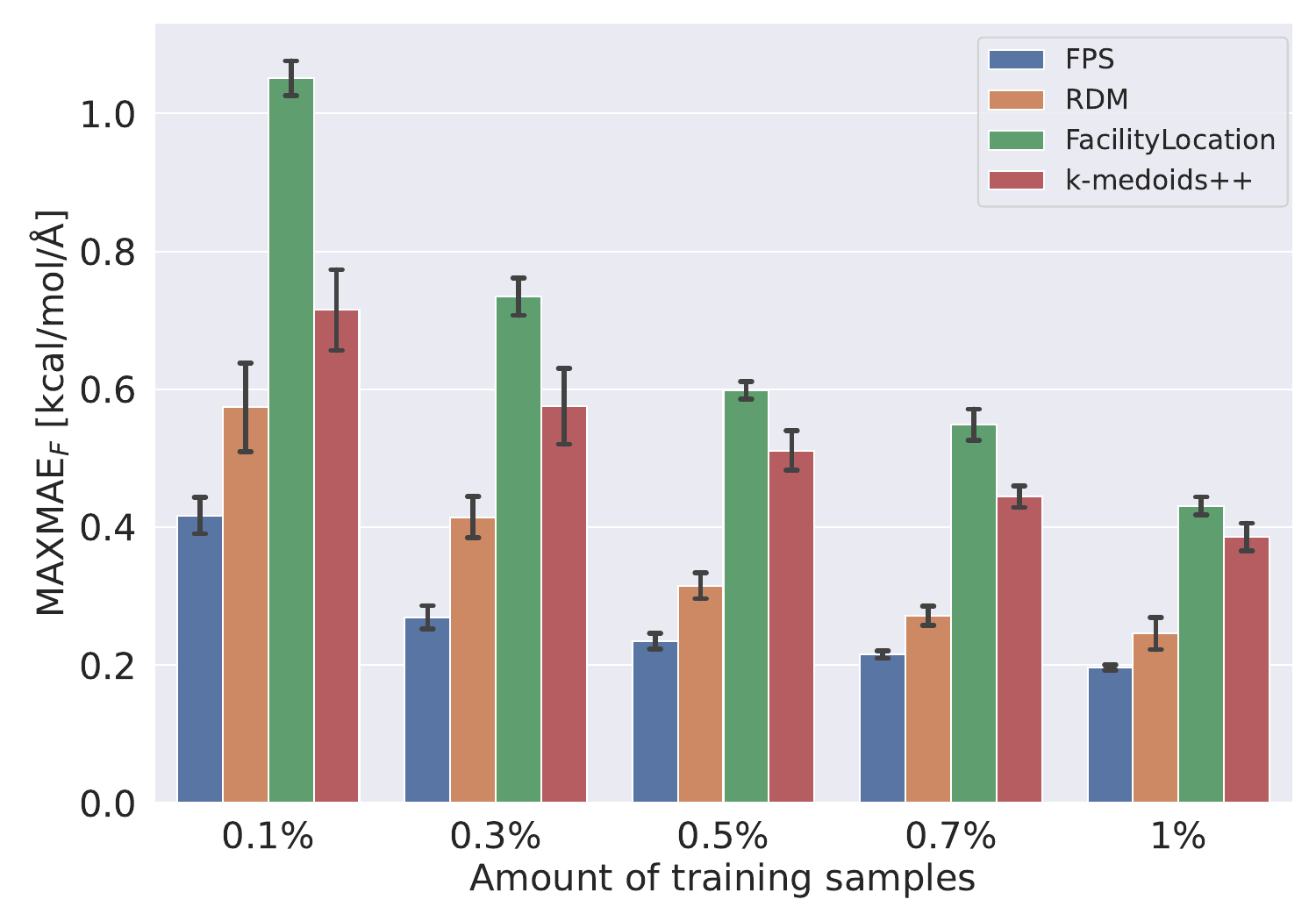}
      \includegraphics[width=\textwidth, height = 3cm]{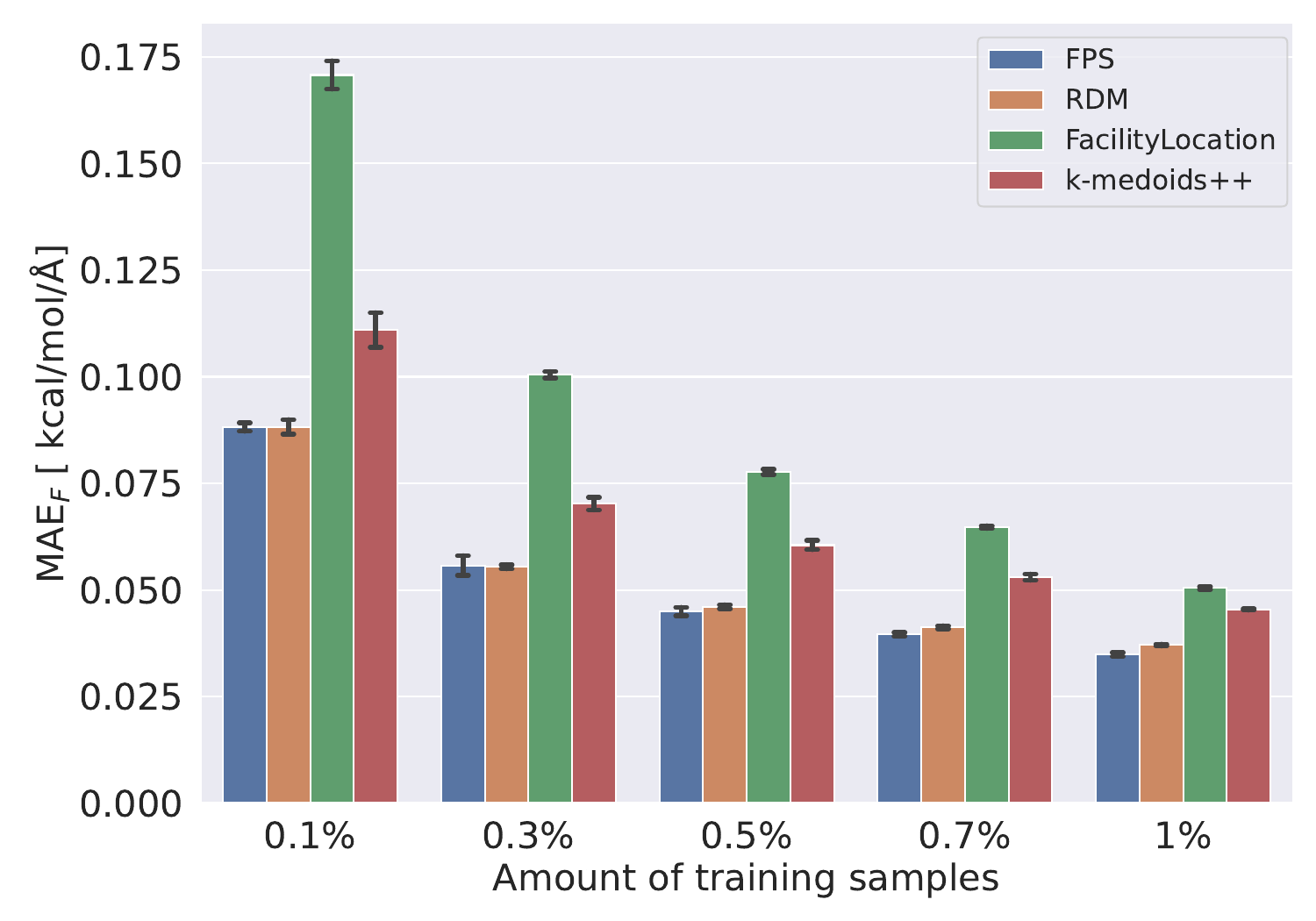}
      \caption{Benzene}
    \end{subfigure}
    \begin{subfigure}[t]{0.30\textwidth}
    \includegraphics[width=\textwidth, height = 3cm]{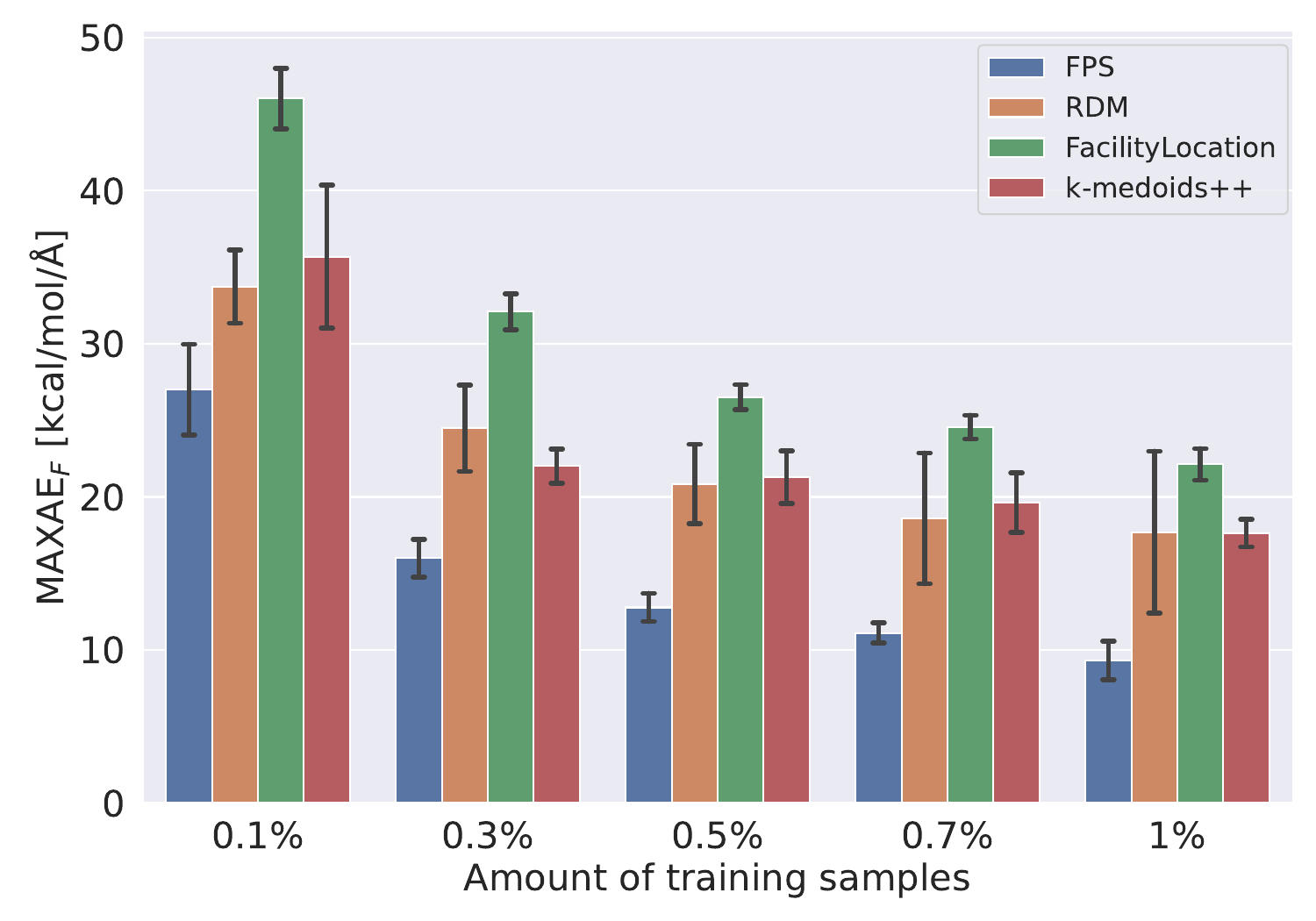}
    \includegraphics[width=\textwidth, height = 3cm]{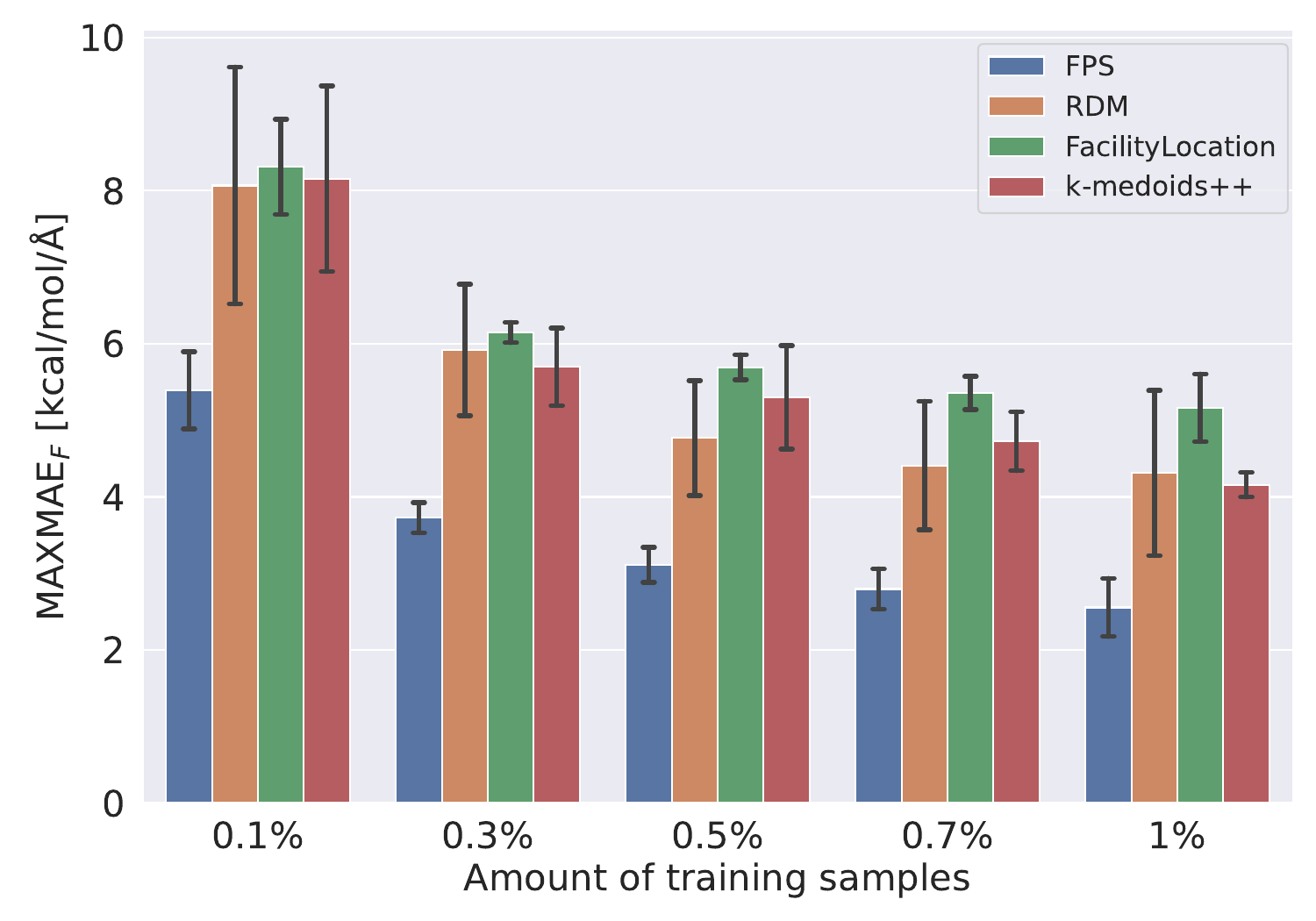}
    \includegraphics[width=\textwidth, height = 3cm]{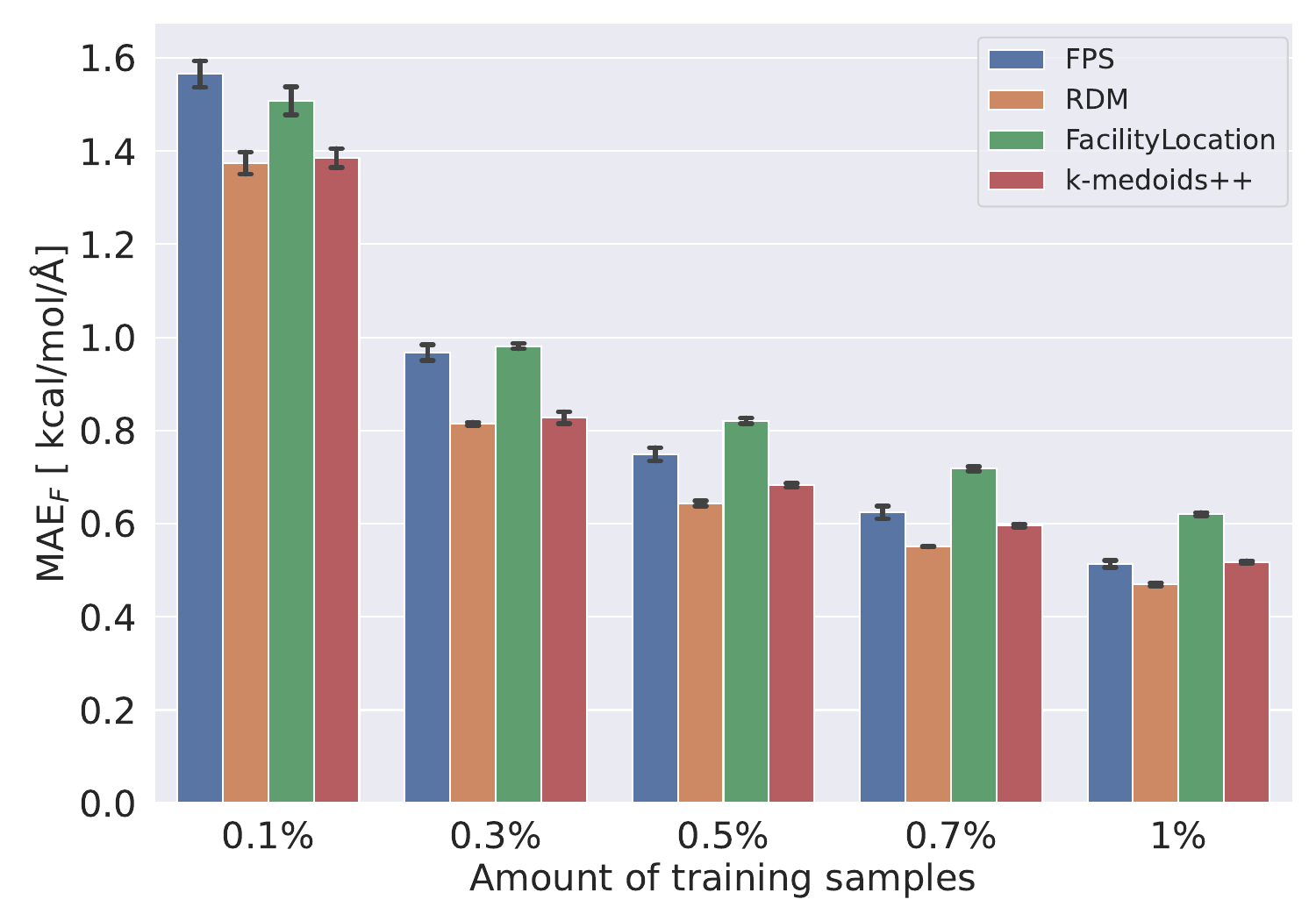}
    \caption{Malonaldehyde}
  \end{subfigure}
   \begin{subfigure}[t]{0.30\textwidth}
    \includegraphics[width=\textwidth, height = 3cm]{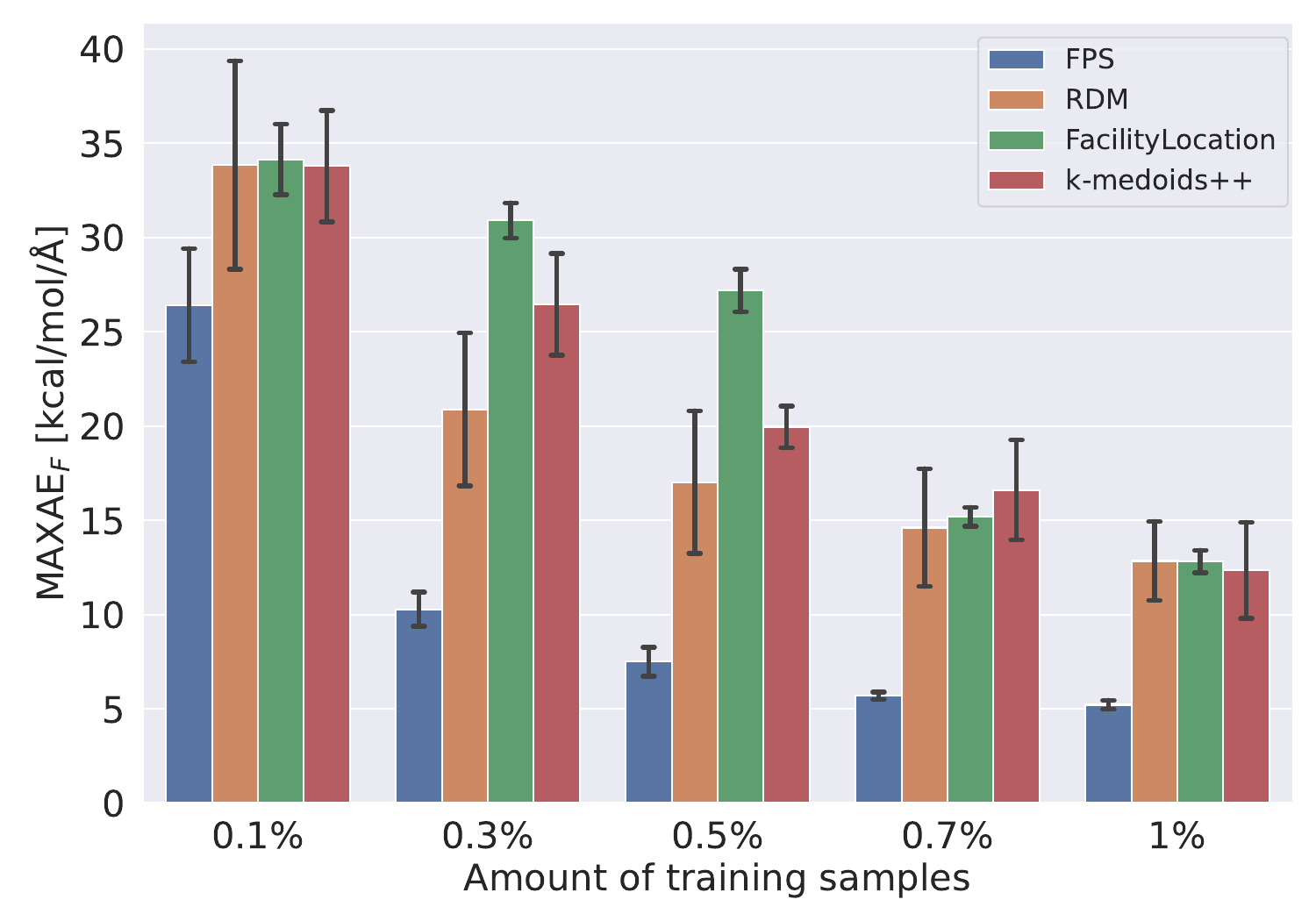}
    \includegraphics[width=\textwidth, height = 3cm]{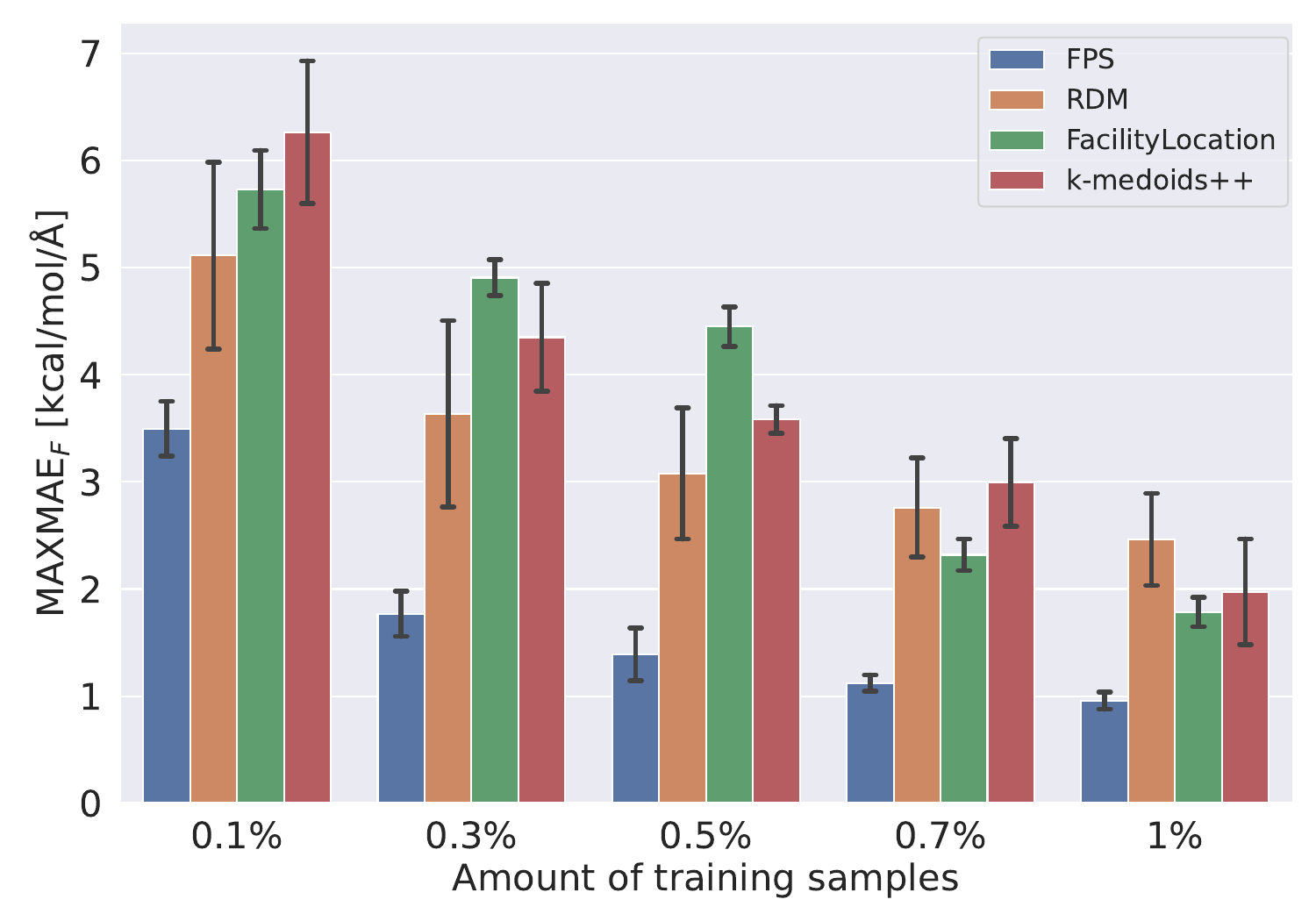}
    \includegraphics[width=\textwidth, height = 3cm]{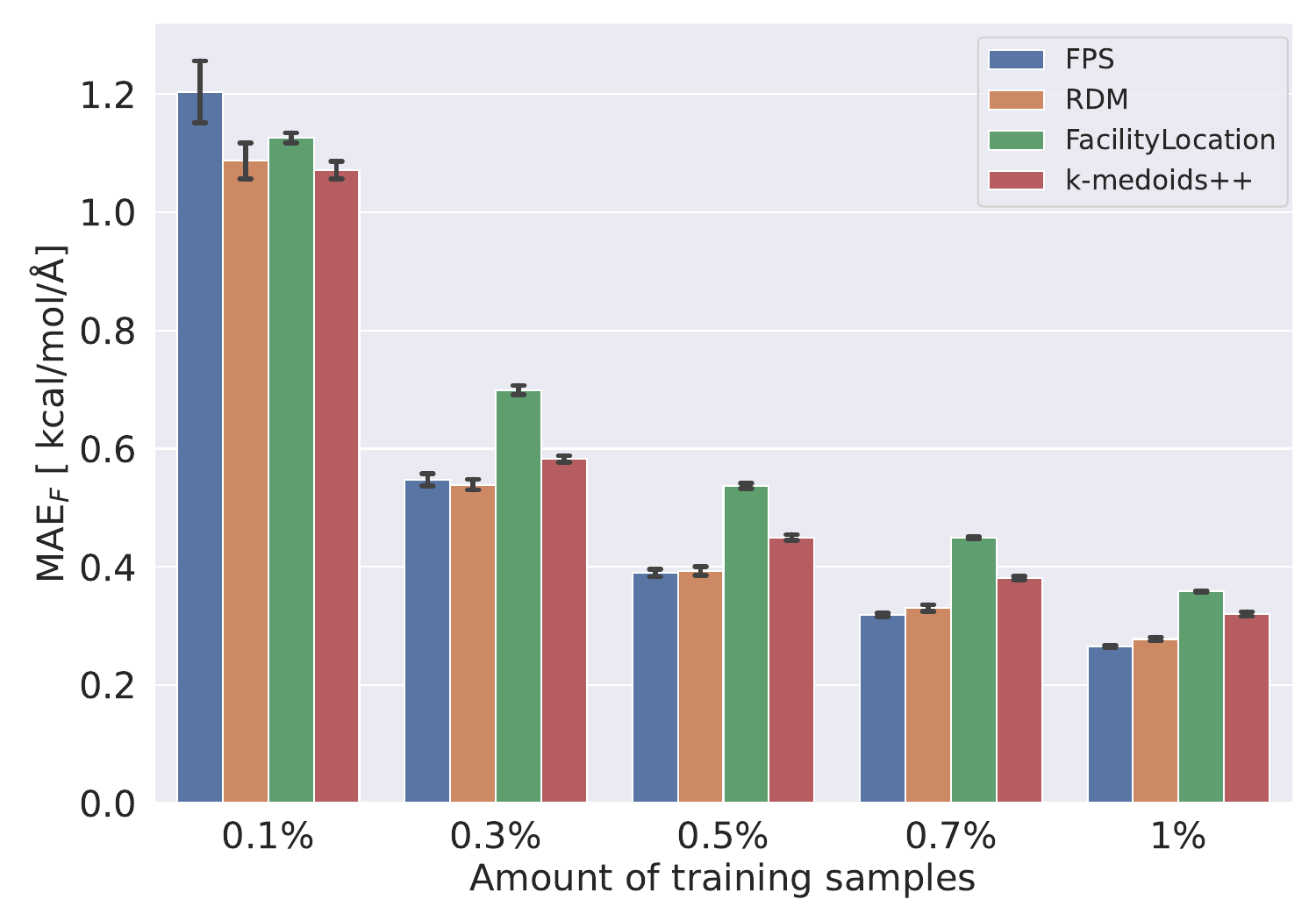}
    \caption{Uracil }
  \end{subfigure}
  \vskip -0.1in
  \caption{Results for multivariate regression tasks on trajectories of Benzene, Uracil and Malonaldehyde using GDML trained on sets of various sizes, expressed as a percentage of the available data points for each trajectory, and selected with different sampling strategies. MAXAE$_F$ (top row), MAXMAE$_F$ (middle row) and MAE$_F$ (bottom row) are shown for each training set size and sampling approach.}
  \label{fig:GDML}
  \end{center}
  \vskip -0.2in
\end{figure*}

\subsubsection{Force-field prediction on the rMD17 dataset}

In this section, we empirically investigate the effects of minimizing the training set fill distance for multivariate regression tasks on the rMD17 dataset. The label value to predict is the molecular force-field along a molecule's trajectory. We note that for the experiments on the rMD17 we analyze a different range for the size of the training sets, from 0.1\% to 1\% of the available points, instead of the range 1\%- 10\%. This change was made because the data points associated with each molecule in the rMD17 are taken from time series and may have a high degree of correlation. For this reason the authors of the rMD17 suggest ``DO NOT train a model on more than 1000 samples from this dataset''~\citep{figshare}. Since each trajectory in our analysis consists of 100000 points, limiting ourselves to at most 1\% of the available data ensures that we respect the constraint set by the authors.
Fig.~\ref{fig:GDML} shows the results for the force-field regression tasks on the trajectories of Benzene, Malonaldehyde and Uracil using GDML. The graphs on the top and middle rows of Fig.~\ref{fig:GDML} illustrate the MAXAE$_F$ and MAXMAE$_F$ of the predictions on the unlabelled points. The results suggest that, independently of the trajectory considered, selecting the training set by fill distance minimization using FPS we can perform better than the baselines in terms of these two metrics quantifying the robustness of the model predictions. The graphs on the bottom row of Fig.~\ref{fig:GDML} show the MAE$_F$ of the predictions. These graphs indicate that selecting training sets with FPS does not drastically reduce the MAE$_F$ of the predictions on the unlabelled points with respect to the baselines, independently of the trajectory considered, similarly to the experiments with scalar labels. On the contrary, we observe examples where FPS performs worse than one of the baselines, e.g., on the trajectory of Malonaldehyde the FPS performs consistently worse than random sampling. Nonetheless, these experiments suggest that selecting training set by fill distance minimization with the FPS can be beneficial for multivariate regression tasks, increasing models robustness by reducing of the entry-wise maximum error of the predictions.
We remark that the theoretical analysis we propose in Section~\ref{sect: theory} is limited to regression tasks with scalar label values, and Lipschitz continuous models. Therefore, while the results illustrated in Fig.~\ref{fig:GDML} show promising potential for the FPS in increasing model robustness in multivariate regression tasks, they are not supported by a solid theoretical result.
\section{Conclusion and Future work}
We study the effects of minimizing the training set fill distance for Lipschitz continuous regression models. Our numerical results have shown that, under the given data assumption, using FPS to select training sets by fill distance minimization increases the robustness of the models by significantly reducing the prediction maximum error, in correspondence to our theoretical motivation. Furthermore, we have shown theoretically and empirically that, if we consider kernel regression models, selecting training sets with the FPS also leads to increased model stability.
Additionally, we have seen that FPS is particularly advantageous with low training data budgets and argued that there may be more convenient sampling strategies than FPS to select larger amounts of points and improve the average quality of the predictions of a regression model as well. 

Based on these remarks, two questions naturally arise: Firstly, how to determine a priori the minimal amount of points to be selected with the FPS? Secondly, how can we modify the FPS to select training sets that can also reduce the average prediction error of a regression model on the data distribution? One possible solution to address the second question is to modify FPS by considering weighted distances. We propose to thereby take the distribution of the data during the sampling process into account, giving higher importance to points in regions of the feature space with higher data density. 

\impact{Minimizing the training set fill distance can be highly beneficial in applications where traditional labelling methods, such as numerical simulations or laboratory experiments, are computationally intensive, time-consuming, or costly, such as in the field of molecular property computations. In such applications, ML regression models are used to predict the unknown labels of data points quickly. However, their accuracy is highly dependent on the quality of the training data. Therefore, careful selection of the training set is crucial to ensure accurate and robust predictions for new points. Our research on minimizing the training set fill distance can be used to identify training sets that have the potential to improve prediction robustness across a wide range of regression models and tasks. This approach prevents the wastage of expensive labelling resources on subsets that may only benefit a particular learning model or task.}


\acks{This work was partly supported by the German Federal Ministry of Education and Research (BMBF) within the projects MaGriDo (Mathematics for Machine Learning Methods for Graph-Based Data with Integrated Domain Knowledge), FKZ 05M20PDB, and DKZ.2R (Rhine-Ruhr Center for Scientific Data Literacy), FKZ 16DKZ2030C.}

\bibliography{bib_fillDist_regression.bib}

\end{document}